\documentclass{article}

\PassOptionsToPackage{numbers,sort&compress}{natbib}
\usepackage[preprint]{neurips_2026}

\usepackage[utf8]{inputenc} 
\usepackage[T1]{fontenc}    
\usepackage{hyperref}       
\usepackage{url}            
\usepackage{booktabs}       
\usepackage{amsfonts}       
\usepackage{nicefrac}       
\usepackage{microtype}      
\usepackage{xcolor}         
\usepackage{enumitem}
\usepackage{amsmath}
\usepackage{amssymb}
\usepackage{mathtools}
\usepackage{amsthm}
\allowdisplaybreaks
\usepackage{wrapfig}
\usepackage{graphicx}
\usepackage{algorithm}
\usepackage{algorithmic}
\usepackage{subfigure}
\usepackage{wrapfig}
\theoremstyle{plain}
\newtheorem{theorem}{Theorem}[section]

\newtheorem{lemma}[theorem]{Lemma}
\newtheorem{corollary}[theorem]{Corollary}
\theoremstyle{definition}

\newtheorem{assumption}[theorem]{Assumption}
\theoremstyle{remark}

\title{Interactive Inverse Reinforcement Learning of Interaction Scenarios via Bi-level Optimization}

\author{%
Yue Mao$^{1}$ \quad Shicheng Liu$^{2}$ \quad Siyuan Xu$^{2}$ \quad Minghui Zhu$^{2}$\\
  $^{1}$Department of Mechanical Engineering, Pennsylvania State University \\
  $^{2}$Department of Electrical Engineering, Pennsylvania State University \\
  \texttt{\{ypm5140, sfl5539, spx5032, muz16\}@psu.edu}
}

\begin{document}

\maketitle

\begin{abstract}
Inverse reinforcement learning (IRL) learns a reward function and a corresponding policy that best fit the demonstration data of an expert. However, in the current IRL setting, the learner is isolated from the expert and can only passively observe the expert demonstrations. This limits the applicability of IRL to interactive settings, where the learner actively interacts with the expert and needs to infer the expert’s reward function from the interactions. To bridge the gap, this paper studies interactive IRL (IIRL) where a learner aims to learn the reward function of an expert and a policy to interact with the expert during its interactions with the expert. We formulate IIRL as a stochastic bi-level optimization problem where the lower level learns a reward function to explain the behaviors of the expert, and the upper level learns a policy to interact with the expert. We develop a double-loop algorithm, Bi-level Interactive Scenarios Inverse Reinforcement Learning (BISIRL), which solves the lower-level problem in the inner loop and the upper-level problem in the outer loop. We formally guarantee that BISIRL converges at a rate of $\mathcal{O}(1/\sqrt{K})$ and validate our algorithm through extensive experiments. 
\end{abstract}

\section{Introduction}\label{intro}
Inverse reinforcement learning (IRL) is a framework in which a learner seeks to recover a reward function and a corresponding policy that are consistent with the demonstrated trajectories of an expert. IRL has been successfully applied across a wide range of domains, including robotics \cite{ziebart2008maximum, okal2016learning}, cybersecurity \cite{zhang2019non, elnaggar2018irl}, and biology \cite{hirakawa2018can, ashwood2022dynamic}.

In these classical IRL settings, the learner is assumed to be a passive observer of expert demonstrations and cannot influence the expert’s behavior. However, many emerging applications motivate a more interactive paradigm, in which the learner actively interacts with the expert and learns the expert’s reward function during the interactions. For example, consider a human–robot collaboration scenario \cite{buning2022interactive}, where a robot assists a human in navigating a maze by opening doors that lead to different paths. The human cannot open doors independently and must rely on the robot to select the correct doors to reach the destination. In this setting, the robot and human operate within the same environment, and the robot’s actions directly influence the human’s behavior. To assist effectively, the robot needs to infer the human’s destination from observed interactions and adapt its actions accordingly. A similar challenge arises in cybersecurity applications \cite{zhang2019non}, where a defender (learner) learns the attack goal of an attacker (expert) and dynamically adjusts its defense strategy while defending against the attacker. In both examples, the learner’s behavior affects the expert’s actions, and learning must occur online through interactions. However, existing IRL frameworks are not designed to handle such interactive settings, as they restrict the learner to passive observation and prohibit learning during interaction.

To bridge the gap, this paper studies interactive IRL (IIRL), where a learner aims to learn a reward function to explain the expert’s behavior and a policy to effectively interact with the expert during its interactions with the expert. The recent works study fully cooperative IIRL \cite{palaniappan2017efficient,buning2022interactive,kamalaruban2019interactive,hadfield2016cooperative} where the learner and expert share an identical reward function, and fully competitive IIRL \cite{zhang2019non,wang2018competitive} where the learner’s reward is the negative of the expert’s reward. However, many real-world interactions fall between these two extremes. In such settings, the learner and the expert have distinct and potentially misaligned objectives that are neither fully cooperative nor fully competitive. For example, in a traffic interaction scenario \cite{el2020towards}, a vehicle (learner) and a pedestrian (expert) navigate an intersection toward different destinations. The vehicle must yield to the pedestrian to ensure safety, while still pursuing its own navigation goal. This interaction involves partially aligned but non-identical reward functions. Motivated by such scenarios, we study the general case in which the reward functions of the learner and the expert are arbitrary. We summarize our contributions as follows.

\textbf{Contribution statement}. Our contributions are threefold.
 
First, we study an IIRL problem where the learner learns a reward function to explain the expert's behaviors and a policy to interact with the expert. We formulate this setting as a stochastic bi-level optimization problem, where the lower level learns the expert’s reward function and the upper level optimizes the learner’s interaction policy.
 
Second, we propose a double-loop algorithm, termed Bi-level Interactive Scenarios Inverse Reinforcement Learning (BISIRL), to solve the bi-level optimization problem. A key challenge lies in computing the hypergradient of the upper-level objective, which involves an intractable Hessian. To address this challenge, we propose a hypergradient approximation based on simultaneous perturbation stochastic approximation (SPSA) \cite{spall2000adaptive}. We theoretically prove that this approximation reduces the computational complexity of hypergradient computation from $\mathcal{O}(H^2)$ to $\mathcal{O}(H)$, where $H$ is the trajectory length. 

Third, we theoretically guarantee that BISIRL converges at a rate of $\mathcal{O}(1/\sqrt{K})$, where $K$ is the iteration number. We further validate the proposed approach through four experiments, demonstrating that BISIRL achieves performance comparable to baselines that require additional information.

\section{Related work}
\textbf{IRL and multiagent IRL (MA-IRL).} As \cite{ng2000algorithms} mentioned, IRL faces the challenge that the demonstrated trajectories can be explained by multiple reward functions. Several approaches have been proposed to address this challenge.  The current state-of-the-art IRL methods include maximum margin IRL \cite{ng2000algorithms,abbeel2004apprenticeship}, maximum entropy IRL \cite{ziebart2008maximum,ziebart2010modeling}, maximum likelihood IRL \cite{zeng2022maximum,zeng2023demonstrations}, and Bayesian IRL \cite{ramachandran2007bayesian,choi2012nonparametric}. However, these methods all focus on single-expert scenarios. MA-IRL extends IRL to settings with multiple experts, where one or more learners recover the reward functions of multiple experts from their demonstrations \cite{lin2019multi,yu2019multi,liu2022distributed}. These MA-IRL approaches still assume that the learners are isolated from the experts (i.e., no direct interaction between them) and thus cannot address interactive IRL scenarios.

\textbf{Fully cooperative and fully competitive IIRL.} Prior work on IIRL has primarily explored two cases: fully cooperative and fully competitive scenarios. In the fully cooperative setting, the learner and expert share the same reward function, and their interaction is formulated as a cooperative game \cite{palaniappan2017efficient,buning2022interactive,kamalaruban2019interactive,hadfield2016cooperative}. In this case, the learner’s objective is to recover the expert’s reward function. In contrast, the fully competitive setting assumes the learner’s reward is the negative of the expert’s reward, leading to a zero-sum game \cite{zhang2019non,wang2018competitive}. Here, the learner aims to learn an objective that directly opposes the expert’s reward. Both of these approaches impose strong assumptions on the relationship between the expert’s and learner’s reward functions, specifically assuming they are either identical or exact opposites. In this paper, the relationship between the reward functions of the learner and the expert is arbitrary.

\textbf{Bi-level optimization.} Bi-level optimization has been applied to many machine learning problems, including meta-learning \cite{lee2019meta,xu2022meta}, hyperparameter optimization \cite{pedregosa2016hyperparameter, Xu_Zhu_2023}, and IRL \cite{liu2022distributed,liu2024trajectory}.  A classic approach to such problems is the descent method \cite{kolstad1990derivative,Xu_Zhu_2023}. This method typically requires computing the second-order Hessian of the lower-level objective function, an operation that is prohibitively expensive to compute in our setting. A common strategy to avoid explicitly computing the Hessian is to approximate it using finite differences \cite{strikwerda2004finite}. In this paper, we further reduce the computational burden by adopting the SPSA method, which perturbs all dimensions of the decision variable simultaneously and therefore requires only two objective function evaluations per iteration.

\section{Model and problem statement}\label{mps}

In this section, we introduce the IIRL problem.

\textbf{Markov Game}. We model the interactions between a learner and an expert as a finite horizon Markov Game (MG) $(S, A, P, H, r_l,r_e, \gamma)$. The elements of the MG are defined as follows:
\begin{itemize}[noitemsep]
    \vspace{-3mm}
    \item $S\triangleq S_l \times S_e$ is the state space where $S_l$ and $S_e$ are the state spaces of the learner and the expert, respectively. We denote $s=(s_l,s_e) \in S,$ with $s_l\in S_l$ and $s_e\in S_e$.
    \item $A\triangleq A_l \times A_e$ is the joint action space, where $A_l$ and $A_e$ are the action spaces of the learner and the expert, respectively. We denote $a=(a_l,a_e) \in A,$ with $a_l\in A_l$ and $a_e\in A_e$.
    \item $P(s'|s,a)$ is the transition probability density for moving from state $s$ to $s'$ by taking a joint action $a$.
    \item $H$ is the finite time horizon.
    \item $r_l$ is the learner's reward function, mapping state-action pairs $(s, a)$ to bounded rewards.
    \item $r_e$ is the expert's reward function, mapping state-action pairs $(s, a)$ to bounded rewards.
    \item $\gamma\in (0,1]$ is the discount factor.
    \vspace{-3mm}
\end{itemize}

We denote $\pi_l(a_l|s)$ as the learner's policy and $\pi_e(a_e|s)$ as the expert's policy. The joint policy is defined as $\pi(a|s) \triangleq \pi_l(a_l|s) \times \pi_e(a_e|s)$, which represents the probability of the learner taking action $a_l$ and the expert taking $a_e$ at state $s$. When the joint policy $\pi$ is executed, the MG generates a trajectory $\zeta = s^{0}, a^{0}, s^{1}, a^{1}, \cdots, s^{(H-1)}, a^{(H-1)}$.

The learner aims to maximize the cumulative reward $E^{\pi_l,\pi_e} [\sum_{h=0}^{H-1} \gamma^hr_l(s^h,a^h)]$ to obtain its optimal policy $\pi_l^*$. Analogously, the expert aims to maximize $E^{\pi_l,\pi_e} [\sum_{h=0}^{H-1} \gamma^h r_e(s^h,a^h)]$ to obtain its optimal policy $\pi_e^*$. We define the interacting policy $\pi^*\triangleq \pi_l^*\times\pi_e^*$. 

\textbf{Knowledge and goal of the learner}. The learner does not have access to the expert’s reward function $r_e$ but can access its own reward $r_l(s,a)$ for any state-action pair $(s,a)$. It interacts with the expert and can observe the corresponding trajectories. Based on the trajectories, the learner aims to recover the expert's reward function $r_e$ and compute the interacting policy $\pi^*$. Note that the learner not only wants to learn its own part $\pi_l^*$ of the interacting policy but also wants to imitate the expert's part $\pi_e^*$ of the interacting policy (as imitating the expert is a standard goal of IRL).

\section{Problem Formulation}
In this section, we formulate the learning problem in Section \ref{mps} as a bi-level optimization problem where the lower-level optimization problem learns the expert reward function $r_e$ and the upper-level optimization problem learns the interacting policy $\pi^*$.

To this end, the learner uses parameterized reward models $r_{\theta_e}$ and $r_{\theta_l}$ to estimate expert's reward function $r_e$ and learner's reward function $r_l$. The reward parameters satisfy $\theta_e,\theta_l \in \Theta\triangleq\{\theta | \|\theta\|_2 \leq 1\}$. Given the reward functions $(r_{\theta_l},r_{\theta_e})$, let $\pi_{\theta_l,\theta_e}$ denote the policy induced by the corresponding MG. This policy $\pi_{\theta_l,\theta_e}$ is used to estimate $\pi^*$.

  \begin{wrapfigure}[15]{r}{0.5\textwidth}
 \vspace{-4mm}
\centering
\includegraphics[width=1\linewidth]{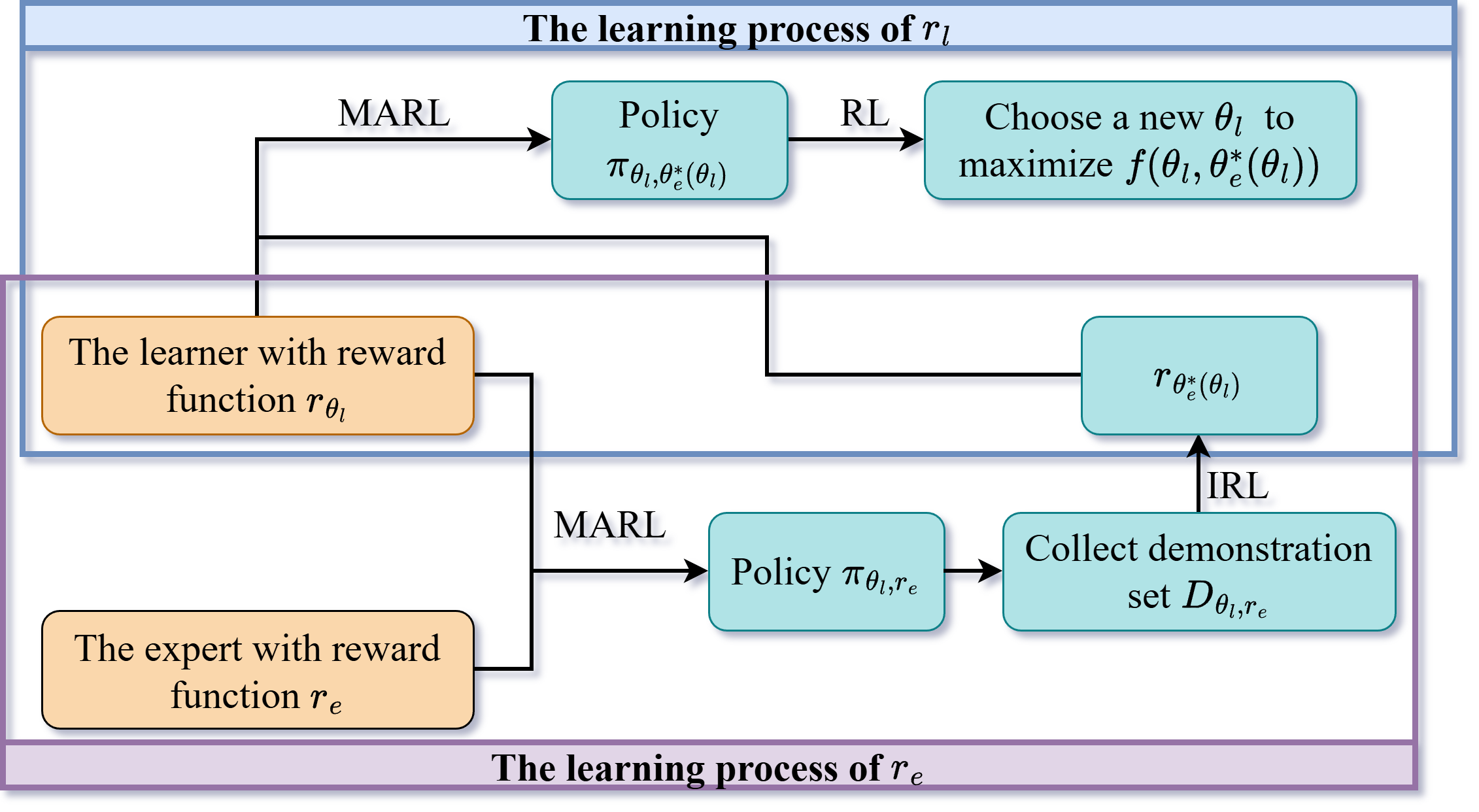}
\vspace{-6mm}
\caption{Flowchart of the overall learning process. The process of learning $r_{l}$ is included in the upper block, and the process of learning $r_e$ is included in the lower block.}
\label{fig:fc}
\end{wrapfigure}

\textbf{Remark on learning the learner's reward function $r_{\theta_l}$}. At first glance, learning $r_{\theta_l}$ may appear unnecessary, since the learner can directly observe its reward values $r_l(s,a)$ and could in principle learn its policy via standard RL. However, our objective is to learn the interacting policy $\pi^*$, which is defined as a joint policy induced by an MG with reward functions $(r_l,r_e)$. To preserve this structure, we estimate $\pi^*$ using $\pi_{\theta_l,\theta_e}$, which is induced by an MG parameterized by $(\theta_l,\theta_e)$. If the learner were to directly learn its policy $\pi_l$ without learning $r_{\theta_l}$, there would be no guarantee that the resulting joint policy, composed of $\pi_l$ and the expert policy, corresponds to any MG. Learning $r_{\theta_l}$ therefore ensures structural consistency between the learned policy and the underlying interacting policy $\pi^*$.

In the following context, we first elaborate the lower-level optimization problem that learns $r_{\theta_e}$ to estimate the expert's reward function, and then the upper-level optimization problem that learns $r_{\theta_l}$ to estimate the learner's reward function and the joint policy $\pi_{\theta_l,\theta_e}$ to estimate $\pi^*$.

\textbf{Lower-level optimization: learning the expert's reward function $r_e$}. Given a learner's reward estimate $r_{\theta_l}$ from the upper level, the lower level optimizes the parameterized reward model $r_{\theta_e}$ to learn the expert's reward function. Specifically, for a given pair of parameters $(\theta_l,\theta_e)$, the learner can compute the joint policy $\pi_{\theta_l,\theta_e}$. The learner then executes its own part of the joint policy $\pi_{\theta_l,\theta_e}$ to interact with the expert (who acts according to the ground-truth reward function $r_e$) and collect a set of interaction trajectories $D_{\theta_l, r_{e}} \triangleq \{\zeta_i\}_{i=1}^{d}$. Since the expert follows the ground-truth reward function $r_e$, the demonstration set $D_{\theta_l, r_{e}}$, serve as demonstrations for learning $r_e$. Given that the learner knows $r_{\theta_l}$, inferring $r_e$ from collected demonstrations is a special case of the standard two-agent IRL problem \cite{lin2019multi,yu2019multi,liu2022distributed} with one of the reward functions known to the learner. This learning procedure, depicted in the lower block of Figure \ref{fig:fc}, can be formulated as the following maximum likelihood IRL (ML-IRL) problem:
 \begin{equation}\label{mlirl}
     \theta^*_e(\theta_l) = \underset{\theta_e \in \Theta}{\arg\min} \quad L(\theta_l,\theta_e),
 \end{equation}
where the loss function of the lower level is $L(\theta_l,\theta_e) \triangleq -\sum_{i=1}^d\sum_{h=0}^{H-1} [\ln  \pi_{\theta_l,\theta_e} (a^{ih}|s^{ih})]+ \frac{\lambda}{2}\|\theta_e\|_2^2$ and the state-action pairs $ (a^{ih},s^{ih}) \in \zeta_i \in D_{\theta_l, r_{e}}$. The problem (\ref{mlirl}) learns a reward function $r_{\theta^*_e(\theta_l)}$ to maximize the likelihood of the interaction trajectories $D_{\theta_l, r_{e}}$. With a given $r_{\theta_l}$, the policy $\pi_{\theta_l, \theta_e^*( \theta_l)}$ induced by the learned expert's reward function $r_{\theta_e^*(\theta_l)}$ is expected to generate same trajectories as the expert. Notice that the maximum likelihood estimation has been extensively applied in IRL \cite{ziebart2008maximum, ziebart2010modeling}. Additionally, to promote simpler reward structures, an $L_2$ regularization term $ \frac {\lambda}{2}\|\theta_e\|_2^2$ with a small positive scalar $\lambda$ is incorporated.  

\textbf{Upper-level optimization: learning the learner's reward function $r_l$ and the joint policy $\pi^*$}. Given a learner's reward parameter $\theta_l$, the learner solves problem (\ref{mlirl}) to obtain the expert's reward parameter $\theta_e^*(\theta_l)$ and the corresponding policy $\pi_{\theta_l, \theta_e^*( \theta_l)}$. The learner's objective is to learn a parameter $\theta_l$ such that the policy $\pi_{\theta_l, \theta_e^*( \theta_l)}$ maximizes the cumulative reward $E^{\pi_{\theta_l, \theta_e^*( \theta_l)}}[\sum_{h=0}^{H-1} \gamma^h r_l(s^h,a^h)]$. This learning process is depicted in the upper block of Figure \ref{fig:fc} and formulated as follows:
\begin{equation}\label{maxf}
   \underset{\theta_l \in \Theta}{\arg\min} \quad  f(\theta_l, \theta_e^*( \theta_l)) \triangleq -E^{\pi_{\theta_l, \theta_e^*( \theta_l)}}[\sum_{h=0}^{H-1} \gamma^h r_l(s^h,a^h)].
\end{equation}
To solve the problem (\ref{maxf}), the learner needs to interact with the expert and receive the reward values of $r_l(s^h,a^h)$. Note that $\theta_l$ is not the parameter of the policy $\pi_{\theta_l, \theta_e^*( \theta_l)}$ but a hyperparameter. Specifically, $\theta_l$ is the parameter of the learner's reward function $r_{\theta_l}$ and $\pi_{\theta_l, \theta_e^*( \theta_l)}$ is the joint policy induced by the MG parameterized by $(\theta_l,\theta_e^*(\theta_l))$. Therefore, the policy $\pi_{\theta_l, \theta_e^*( \theta_l)}$ is hyper-parameterized by $\theta_l$.

\textbf{Bi-level optimization}. The entire learning procedure exhibits a hierarchical structure. Given the current learner's reward parameter $\theta_l$, the lower level solves the ML-IRL problem (\ref{mlirl}) to determine the expert's reward function $r_{\theta_e^*(\theta_l)}$ and the joint policy $\pi_{\theta_l,\theta_e^*(\theta_l)}$. The upper level optimizes the parameter $\theta_l$ such that $\pi_{\theta_l,\theta_e^*(\theta_l)}$ can maximize the cumulative reward in the problem (\ref{maxf}). This hierarchical optimization framework is illustrated in Figure \ref{fig:fc} and can be formulated as a bi-level optimization problem:

  \begin{equation}\label{eq:3}
          \underset{\theta_l \in \Theta}{\arg\min} \quad  f(\theta_l, \theta_e^*( \theta_l)),\quad
          \textrm{s.t.}  \quad \theta^*_e(\theta_l) = \underset{\theta_e \in \Theta}{\arg\min} \quad L(\theta_l,\theta_e).
\end{equation}

\section{Algorithm}
In this section, we develop a double-loop algorithm named BISIRL (Algorithm \ref{alg:1}) to solve the bi-level optimization problem described in the problem (\ref{eq:3}). In each $k$-th outer loop iteration, the learner partially solves the lower-level optimization problem within an inner loop and subsequently employs this intermediate solution to tackle the upper-level optimization problem through an outer loop. The specifics of the inner and outer loops are detailed in Sections \ref{alginner} and \ref{algouter}, respectively.

\subsection{Inner loop}\label{alginner}
At each iteration $t$ of the inner loop, the learner updates the parameter $\theta_e(t)$ via projected SGD. Specifically, $\theta_e(t)$ is updated by moving in the opposite direction of the partial gradient  $\nabla_{\theta_e}L(\theta_l(k),\theta_e(t))$ and projecting the resulting parameter back onto the feasible set $\Theta$. The analytical expression of the gradient $\nabla_{\theta_e}L(\theta_l,\theta_e)$ is provided in the following lemma. 

\begin{lemma}\label{lemma1}
    The gradient $\nabla_{\theta_e}L(\theta_l,\theta_e)=\mu_e(\pi_{\theta_l, \theta_e})-\hat{\mu}_e(D_{\theta_l, r_{e}})+ \lambda \theta_e$, where $\mu_e(\pi_{\theta_l, \theta_e}) \triangleq E^{\pi_{\theta_l, \theta_e}}[\sum^{H-1}_{h=0} \gamma^h \nabla_{\theta_e} r_{\theta_e}(s^{h},a^{h})]$, $\hat{\mu}_e(D_{\theta_l, r_{e}}) \triangleq \frac{1}{d}\sum_{i=1}^d\sum_{h=0}^{H-1} \gamma^h \nabla_{\theta_e} r_{\theta_e}(s^{ih}, a^{ih})$, and $(s^{ih}, a^{ih})\in \zeta_i \in D_{\theta_l, r_{e}} $
\end{lemma}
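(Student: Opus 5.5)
The plan is to follow the maximum-likelihood IRL argument of \cite{zeng2022maximum}. The first step is to make explicit the policy model behind $\pi_{\theta_l,\theta_e}$. I will take it to be the entropy-regularized (soft) equilibrium of the MG with rewards $(r_{\theta_l},r_{\theta_e})$. Each agent's policy is a Boltzmann policy in its own soft $Q$-function, for example
\[
\pi_{\theta_l,\theta_e}(a_e|s)=\exp\bigl(Q^{h}_{e}(s,a_e)-V^{h}_{e}(s)\bigr),
\]
where the soft values satisfy the soft Bellman recursion $V^h_e(s)=\ln\sum_{a_e}\exp Q^h_e(s,a_e)$ and $Q^h_e(s,a_e)=E_{a_l}\bigl[r_{\theta_e}(s,a)+\gamma E_{s'}V^{h+1}_e(s')\bigr]$ with $V^H_e\equiv 0$. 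Because $\theta_l$ is held fixed, only the $\theta_e$-dependence matters.

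\textbf{Differentiating the log-likelihood.} Write $\ln\pi(a^{ih}|s^{ih})$ as a difference of soft $Q$ and soft $V$. I expect the learner's factor to depend on $\theta_e$ only through the coupling, and I will treat it jointly by working with the joint soft value. Then:
\begin{enumerate}
\item Differentiate the soft Bellman recursion. Since $\nabla_{\theta_e}V^h=E_{a\sim\pi}[\nabla_{\theta_e}Q^h]$, unrolling from $h$ to $H$ gives $\nabla_{\theta_e}V^h(s)=E^{\pi_{\theta_l,\theta_e}}\bigl[\sum_{h'\ge h}\gamma^{h'-h}\nabla_{\theta_e}r_{\theta_e}(s^{h'},a^{h'})\mid s^h=s\bigr]$. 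Also $\nabla_{\theta_e}Q^h(s,a)=\nabla_{\theta_e}r_{\theta_e}(s,a)+\gamma E_{s'}\nabla_{\theta_e}V^{h+1}(s')$.
\item Sum $\nabla_{\theta_e}\ln\pi(a^h|s^h)=\nabla Q^h(s^h,a^h)-\nabla V^h(s^h)$ over $h$ along each demonstrated trajectory. This gives $\sum_h\gamma^h\nabla r_{\theta_e}(s^h,a^h)$ plus a telescoping sum $\sum_h\gamma^{h+1}\bigl(E_{s'}\nabla V^{h+1}(s')-\nabla V^{h+1}(s^{h+1})\bigr)-\nabla V^0(s^0)$. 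To get the $\gamma^h$ weights in the telescope I will use the discounted log-likelihood, which is how the paper's $\gamma^h$ weights in $\mu_e,\hat\mu_e$ enter.
\item Show the bracketed terms vanish in expectation. The demonstrations are generated by the true dynamics $P$, which the model also uses, so each bracket is a martingale difference. Replacing them by their conditional expectation, which is exact in expectation and is the standard ML-IRL identification, leaves $-\nabla V^0(s^0)$. Averaging over the initial-state distribution, this equals $-\mu_e(\pi_{\theta_l,\theta_e})$.
\end{enumerate}

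\textbf{Assembling the gradient.} Averaging over the $d$ trajectories yields $\hat\mu_e(D_{\theta_l,r_e})-\mu_e(\pi_{\theta_l,\theta_e})$ for the gradient of the average log-likelihood. Negating and adding $\nabla\frac{\lambda}{2}\|\theta_e\|^2=\lambda\theta_e$ gives the claim. I will note that the $1/d$ normalization means $L$ is read per-trajectory, i.e., with the sum scaled by $1/d$.

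\textbf{Main obstacle.} The hard step is the joint-policy aspect. The learner's factor $\pi_l$ also depends on $\theta_e$ through the equilibrium, so the naive single-agent telescoping must be justified for the joint soft value. My first attempt is to treat the MG with the learner's reward fixed as a single-agent soft MDP for the expert, by folding $\pi_l$ into the transition kernel. If that fails, I will assume the learner's executed policy is treated as fixed when differentiating, so that only the expert factor contributes. In either case, the martingale/telescoping step in item 3 is where care about expectation versus sample is needed.
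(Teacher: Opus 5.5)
Your items 1--3 are the same computation as the paper's proof in Appendix~C. You write $\ln\pi=Q-V$, unroll the soft Bellman recursion, telescope the $\gamma^h$-weighted sum, and replace the next-state and initial-state terms by their expectations. You are more explicit than the paper about the $\gamma^h$ weights and the $1/d$ factor, which the paper inserts silently, and about the identity holding only in expectation.

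The gap is the policy model, and it is substantive. The identities in your item 1, $\nabla_{\theta_e}V^h(s)=E_{a\sim\pi}[\nabla_{\theta_e}Q^h(s,a)]$ and $\nabla_{\theta_e}Q^h(s,a)=\nabla_{\theta_e}r_{\theta_e}(s,a)+\gamma E_{s'}[\nabla_{\theta_e}V^{h+1}(s')]$, require $\pi_{\theta_l,\theta_e}(a|s)=\exp(Q^h(s,a)-V^h(s))$ for a single soft $Q$ on \emph{joint} actions, in which $\theta_e$ enters only through an additive $r_{\theta_e}$. The per-agent soft-equilibrium model you wrote down has no such $Q$. There $Q^h_e(s,a_e)=E_{a_l\sim\pi_l(\cdot|s)}[\cdots]$, so $\nabla_{\theta_e}Q^h_e(s,a_e)$ picks up $\sum_{a_l}\nabla_{\theta_e}\pi_l(a_l|s)\,[r_{\theta_e}(s,a)+\gamma E_{s'}V^{h+1}_e(s')]$, evaluated at the observed expert action. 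The learner's factor also contributes $\nabla_{\theta_e}\ln\pi_l(a_l^h|s^h)$, which involves $\nabla_{\theta_e}\pi_e$ and the learner's soft value. These cross terms do not telescope into $\theta_e$-feature expectations, so in that model the formula does not hold in general.

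Neither fallback repairs this. Folding $\pi_l$ into the transition kernel makes the kernel itself $\theta_e$-dependent. Freezing $\pi_l$ simply discards the cross terms, so you would be differentiating a different function rather than $L$.

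The missing ingredient is the model the paper actually uses (Appendix~B). There $\pi_{\theta_l,\theta_e}$ is the soft-optimal policy of one soft MDP over the joint action $a=(a_l,a_e)$ with reward $r_{\theta_l}+r_{\theta_e}$:
$Q^{soft}(s,a)=r_{\theta_l}(s,a)+r_{\theta_e}(s,a)+\gamma\int_{S}P(s'|s,a)V^{soft}(s')ds'$,
$V^{soft}(s)=\ln\iint\exp(Q^{soft}(s,a))\,da_l\,da_e$, and
$\pi_{\theta_l,\theta_e}(a_l,a_e|s)=\exp(Q^{soft}(s,a)-V^{soft}(s))$.
In this model there are no separate learner and expert factors and no equilibrium coupling. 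Since $r_{\theta_l}$ does not depend on $\theta_e$, you get exactly $\nabla_{\theta_e}\ln\pi_{\theta_l,\theta_e}(a|s)=\mu_e(s,a)-\mu_e(s)$. Your ``main obstacle'' then disappears, and items 1--3 go through unchanged. Note that this joint Boltzmann policy is generally not of product form $\pi_l\times\pi_e$. The lemma therefore rests on this modeling choice, not on any property of Markov-game equilibria.
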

The inner loop uses $t_k$-step projected gradient descent $\theta_{e}(t+1) = \Pi_{\Theta}(\theta_{e}(t) - \beta_t \nabla_{\theta_e}L(\theta_{l}(k), \theta_{e}(t)))$ to obtain an expert reward estimate $r_{\theta_e(t_k-1)}$.

\begin{algorithm}[tb]
\caption{ Bi-level Interactive Scenarios Inverse Reinforcement Learning (BISIRL)}\label{alg:1}
\begin{algorithmic}
\renewcommand{\baselinestretch}{1.15}\selectfont
\STATE {\bfseries Initializes} $\theta_{l}(0) \in \Theta,\theta_e(0)\in \Theta$, step size sequence $\{\alpha_k\}, \{\beta_t\}$, regularization parameter $\lambda$ and integer sequence $\{t_k\}$
\FOR{$k=0,1,\cdots, K-1$}
    \STATE $\theta_e(0) = \theta_{e}(k)$
    \STATE Samples the trajectory set $D_{\theta_l(k), r_{e}}$
    \FOR{$t=0,\cdots,t_k-1$}
        \STATE Calculates $\nabla_{\theta_e}L(\theta_{l}(k), \theta_{e}(t))$ with Lemma \ref{lemma1}
        \STATE $\theta_{e}(t+1) = \Pi_{\Theta}(\theta_{e}(t) - \beta_t \nabla_{\theta_e}L(\theta_{l}(k), \theta_{e}(t)))$
    \ENDFOR
    \STATE $\theta_{e}(k) = \theta_{e}(t_k-1)$
    \STATE {\bfseries Initializes} the random vector $\Delta(k)$ and the positive scalar $p(k)$
    \STATE {\bfseries Gets} $\hat{\nabla}^2_{\theta_l\theta_e}L(\theta_l(k), \theta_e(k))$, $\hat{\nabla}^2_{\theta_e}L(\theta_l(k), \theta_e(k))$, $\hat{\nabla}_{\theta_l} f(\theta_l(k),\theta_e(k))$  and $\hat{\nabla}_{\theta_e(k)} f(\theta_l(k),\theta_e(k))$ through SPSA approximation following equation (\ref{eq:4})
    \STATE {\bfseries Calculates} $[\hat{\nabla}^2_{\theta_e}L(\theta_l(k), \theta_e(k))]^{-1}\hat{\nabla}_{\theta_e} f(\theta_l(k),\theta_e(k))$ following equation (\ref{conju}) 
    \STATE {\bfseries Calculates} $\hat{\nabla}f(\theta_{l}(k),\theta_{e}(k))$ following equation (\ref{ehyper}) 
    \STATE $\theta_{l}(k+1) = \Pi_{\Theta} (\theta_{l}(k) - \alpha_k \hat{\nabla} f(\theta_{l}(k),\theta_{e}(k)) )$
\ENDFOR
\end{algorithmic}
\end{algorithm}

\subsection{Outer loop}\label{algouter}
At $k$-th iteration of the outer loop, the learner updates the parameter $\theta_l(k)$ via projected SGD. Similar to the inner loop, this update requires computing the hypergradient $\nabla f(\theta_{l}(k),\theta_{e}(k))$. The analytical form of $\nabla f(\theta_{l}(k),\theta_{e}(k))$, which is widely used in bi-level optimization problems, is presented in the following lemma. 
\begin{lemma}\label{lemma2}
    The analytical expression of the hypergradient $\nabla f(\theta_l,\theta_e)$ used for updating $\theta_{l}$ is $ \nabla_{\theta_l} f(\theta_l,\theta_e)- \nabla^2_{\theta_l\theta_e}L(\theta_l, \theta_e)[\nabla^2_{\theta_e}L(\theta_l, \theta_e)]^{-1} \nabla_{\theta_e}f(\theta_l,\theta_e)$ \cite{Xu_Zhu_2023, ghadimi2018approximation, colson2007overview}.
\end{lemma}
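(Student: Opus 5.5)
The plan is to differentiate the composite objective $F(\theta_l)\triangleq f(\theta_l,\theta_e^*(\theta_l))$ by the chain rule and obtain the Jacobian $\nabla\theta_e^*(\theta_l)$ from the implicit function theorem applied to the first-order optimality condition of the lower-level problem (\ref{mlirl}). The expression in Lemma \ref{lemma2} is this total derivative, written with $\theta_e$ as a generic placeholder. When it is evaluated at the algorithm's iterate $\theta_e(k)$ rather than at $\theta_e^*(\theta_l(k))$, it is the standard surrogate.

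The steps, in order, are as follows.

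First, I will argue that for each $\theta_l$ the lower-level minimizer is interior and satisfies the stationarity condition
\[
\nabla_{\theta_e}L(\theta_l,\theta_e^*(\theta_l))=0 .
\]
This requires two facts:
\begin{itemize}
\item $L(\theta_l,\cdot)$ is strongly convex. The regularizer $\frac{\lambda}{2}\|\theta_e\|_2^2$ supplies this, provided the likelihood term is convex or has Hessian bounded below by something larger than $-\lambda$. This is the kind of assumption the paper will presumably impose.
\item The projection onto $\Theta$ is inactive. Alternatively, one can take the unconstrained minimizer, as is customary in \cite{ghadimi2018approximation}.
\end{itemize}
Strong convexity also gives that $\nabla^2_{\theta_e}L(\theta_l,\theta_e^*(\theta_l))\succeq \mu I$ for some $\mu>0$, so this Hessian is invertible.

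Second, assuming $L$ is twice continuously differentiable (smooth reward parameterization and a smooth policy map $(\theta_l,\theta_e)\mapsto\pi_{\theta_l,\theta_e}$), I will invoke the implicit function theorem. It gives that $\theta_e^*(\cdot)$ is differentiable. Differentiating the stationarity identity in $\theta_l$ gives
\[
\nabla^2_{\theta_l\theta_e}L+\nabla\theta_e^*(\theta_l)\,\nabla^2_{\theta_e}L=0 ,
\]
where $\nabla\theta_e^*$ is taken in the transposed (rows indexed by $\theta_l$) convention used in the lemma. Hence
\[
\nabla\theta_e^*(\theta_l)=-\nabla^2_{\theta_l\theta_e}L\,[\nabla^2_{\theta_e}L]^{-1}.
\]

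Third, the chain rule gives
\[
\nabla F(\theta_l)=\nabla_{\theta_l}f+\nabla\theta_e^*(\theta_l)\,\nabla_{\theta_e}f .
\]
Substituting the Jacobian from the second step yields exactly the stated formula, evaluated at $\theta_e=\theta_e^*(\theta_l)$.

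The main obstacle is the first step, namely justifying invertibility of $\nabla^2_{\theta_e}L$ and differentiability of $\theta_e^*$. The likelihood term involves $\pi_{\theta_l,\theta_e}$, the equilibrium or soft-optimal joint policy of a Markov game, and neither its smoothness nor its convexity in $\theta_e$ is automatic. The data set $D_{\theta_l,r_e}$ also depends on $\theta_l$; I would treat it as fixed when differentiating $L$, matching the convention of Lemma \ref{lemma1}. My first attempt would be to adopt the regularity assumptions used in \cite{Xu_Zhu_2023,ghadimi2018approximation}: strong convexity of $L$ in $\theta_e$, and Lipschitz gradients and Hessians. Under those assumptions, the result follows directly from the cited standard derivation.
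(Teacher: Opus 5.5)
Your proposal is correct and follows essentially the paper's route. The paper gives no proof of its own; it defers to the standard implicit-function-theorem plus chain-rule derivation in the references cited in the statement, which is exactly what you spell out, and your reading of the formula at a generic $\theta_e$ as the surrogate evaluated at the iterate $\theta_e(k)$ matches how the paper uses it. The caveats you flag are the same assumptions the paper uses implicitly: an interior minimizer in $\Theta$, strong convexity of $L$ in $\theta_e$ (which the paper asserts from the $\frac{\lambda}{2}\|\theta_e\|_2^2$ term), and treating $D_{\theta_l,r_e}$ as fixed when differentiating, as in Lemma~\ref{lemma1}.
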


The computation of the hypergradient $\nabla f(\theta_{l}(k),\theta_{e}(k))$ involves the partial gradients $\nabla_{\theta_l} f(\theta_l(k),\theta_e(k)), \nabla_{\theta_e}f(\theta_l(k),\theta_e(k))$, the Jacobian $\nabla^2_{\theta_l\theta_e}L(\theta_l(k), \theta_e(k))$  and the inverse Hessian $[\nabla^2_{\theta_e}L(\theta_l(k), \theta_e(k))]^{-1}$. However, directly calculating these quantities has a large computational complexity of $\mathcal{O}(H^2)$. To address this issue, we adopt the SPSA method, which reduces computational complexity to $\mathcal{O}(H)$. A detailed discussion on the computational complexity reduction is provided later in Theorem \ref{cc}. Using SPSA, the learner obtains the estimated hypergradient $\hat{\nabla}f(\theta_{l}(k),\theta_{e}(k))$, which serves as the gradient approximation in the projected SGD update. To calculate $\hat{\nabla}f(\theta_{l}(k),\theta_{e}(k))$, the learner estimates $\hat{\nabla}_{\theta_l} f(\theta_l(k),\theta_e(k))$, $\hat{\nabla}^2_{\theta_l\theta_e}L(\theta_l(k), \theta_e(k))$, $\hat{\nabla}^2_{\theta_e}L(\theta_l(k), \theta_e(k))$, and $\hat{\nabla}_{\theta_e}f(\theta_l(k),\theta_e(k))$.
Let us take $\hat{\nabla}^2_{\theta_e}L(\theta_l(k), \theta_e(k))$ as an example to illustrate SPSA. According to the equation (2.2) in \cite{spall1992multivariate}, it is approximated as follows:
\begin{equation}\label{eq:4}
    \hat{\nabla} _ {\theta_e}^2 L(\theta_l(k), \theta_e(k)) 
    = \begin{bmatrix}
        \frac{\Delta_{d_k}}{2p\Delta_1(k)}&
        \cdots&
        \frac{\Delta_{d_k}}{2p\Delta_m (k)}
    \end{bmatrix}^T,
\end{equation}
where $\Delta_{d_k} = \nabla _ {\theta_e} L(\theta_l(k), \theta_e(k)+p(k)\Delta(k)) - \nabla _ {\theta_e} L(\theta_l(k), \theta_e(k)-p(k)\Delta(k))$,
the perturbation $\Delta(k) \in \mathbb{R}^m$ is a vector of $m$ mutually independent zero-mean random variables and each element of $\Delta(k)$ satisfies $|\Delta_{i}(k)| \leq \alpha_0$, $E|\Delta_{i}^{-1}(k)|\leq \alpha_1$, $i= 1, \cdots, m$ with $\alpha_0,\alpha_1$ as positive constants. The parameter $p(k)$ is a positive scalar. 

Referring to equation (\ref{eq:4}), the computation of the gradient $\hat{\nabla}_{\theta_l} f(\theta_l(k),\theta_e(k))$ requires the value of $f(\theta_l(k),\theta_e(k))$ with weight perturbations on $\theta_l(k)$. Analogously, computing gradients $\hat{\nabla}_{\theta_e} f(\theta_l(k),\theta_e(k))$, $\hat{\nabla}^2_{\theta_e}L(\theta_l(k), \theta_e(k))$, and $\hat{\nabla}^2_{\theta_l\theta_e}L(\theta_l(k), \theta_e(k))$ requires the values of $f(\theta_l(k),\theta_e(k))$, $\nabla_{\theta_e}L(\theta_l(k),\theta_e(k))$, and $\nabla_{\theta_l}L(\theta_l(k),\theta_e(k))$ with weight perturbations on $\theta_e(k)$, respectively. Notice that the analytical expression of $f(\theta_l(k),\theta_e(k))$ and $\nabla_{\theta_e}L(\theta_l(k),\theta_e(k))$ are shown in optimization problem (\ref{maxf}) and Lemma \ref{lemma1} respectively. Similarly to Lemma \ref{lemma1}, the gradient $\nabla_{\theta_l}L(\theta_l(k),\theta_e(k)) \triangleq \mu_l(\pi_{\theta_l(k), \theta_e(k)})-\hat{\mu}_l(D_{\theta_l(k), r_{e}})$ and the proof is given in the Appendix \ref{other g}. The expectation $\mu_l(\pi_{\theta_l, \theta_e}) \triangleq E^{\pi_{\theta_l, \theta_e}}[\sum^{H-1}_{h=0} \gamma^h \nabla_{\theta_l} r_{\theta_l}(s^{h},a^{h})]$ is the reward gradient expectation of the learner and the empirical expectation $\hat{\mu}_l(D_{\theta_l, r_{e}}) \triangleq \frac{1}{d}\sum_{i=1}^d\sum_{h=0}^{H-1} \gamma^h \nabla_{\theta_l} r_{\theta_l}(s^{ih}, a^{ih}), (s^{ih}, a^{ih})\in \zeta_i \in D_{\theta_l, r_{e}}$ is the estimated learner's reward gradient expectation with respect to the demonstration set $D_{\theta_l, r_{e}}$. 

To compute the hypergradient $\hat{\nabla} f(\theta_{l}(k),\theta_{e}(k))$, we must ensure that the estimated Hessian $\hat{\nabla}^2_{\theta_e}L(\theta_l(k), \theta_e(k))$ is invertible. Since the negative log-likelihood function is convex and the $L_2$ regularization term is strongly convex, the overall function $L(\theta_l(k), \theta_e(k))$ is strongly convex. Hence its Hessian is positive definite. By choosing suitable $\Delta(k)$ and $p(k)$, we ensure that the estimated Hessian $\hat{\nabla}^2_{\theta_e}L(\theta_l(k), \theta_e(k))$ is positive definite. Since directly inverting a matrix is computationally expensive, we use the conjugate gradient method to compute the product 
\begin{equation}\label{conju}
\begin{aligned}
        &[\hat{\nabla}^2_{\theta_e}L(\theta_l(k), \theta_e(k))]^{-1}\hat{\nabla}_{\theta_e} f(\theta_l(k),\theta_e(k))\\ &= \min_{u} \frac{1}{2}u^T\hat{\nabla}^2_{\theta_e}L(\theta_l(k), \theta_e(k))u - u^T\hat{\nabla}_{\theta_e} f(\theta_l(k),\theta_e(k)) .
\end{aligned}
\end{equation} 
Finally, using $\hat{\nabla}_{\theta_l} f(\theta_l(k),\theta_e(k))$, $\hat{\nabla}^2_{\theta_l\theta_e}L(\theta_l(k), \theta_e(k))$, and the result from conjugate gradient, we compute the estimated hypergradient 
\begin{equation}\label{ehyper}
\begin{aligned}
        \hat{\nabla}f(\theta_{l}(k),\theta_{e}(k)) = &\hat{\nabla}_{\theta_l} f(\theta_l(k),\theta_e(k)) \\
        &- \hat{\nabla}^2_{\theta_l\theta_e}L(\theta_l(k), \theta_e(k))
        [\hat{\nabla}^2_{\theta_e}L(\theta_l(k), \theta_e(k))]^{-1}\hat{\nabla}_{\theta_e}f(\theta_l(k),\theta_e(k)). 
\end{aligned}
\end{equation}

After $K$ iterations, the outer loop terminates, yielding $r_{\theta_l(K)}$, $r_{\theta_e(K)}$ and $\pi_{\theta_l(K),\theta_e(K)}$. 

\section{Analytical result}
In this section, we present our analytical results regarding computational complexity reduction and convergence rate. To facilitate our analysis, we impose the following assumption on the estimated reward functions $r_{\theta_l}$ and $r_{\theta_e}$ :
 \begin{assumption}\label{assump1}
      The estimated learner's reward function $r_{\theta_l} $ and the estimated expert's reward function $ r_{\theta_e} $ are four-times continuously differentiable, i.e., $C^4$.
 \end{assumption}
 Since $\theta_l$ and $\theta_e$ lie in compact sets, Assumption \ref{assump1} implies that the derivatives of the reward functions $r_{\theta_l}$ and $r_{\theta_e}$ are bounded. Such boundedness of higher-order derivatives is a standard assumption in the literature and has been widely adopted in bi-level optimization \cite{jin2020local,zeng2022maximum,liu2023learning}, RL \cite{wang2019neural,zhang2020global}, and IRL \cite{liu2023meta}.

\subsection{Computational complexity}
The computation complexities of computing $\nabla f(\theta_{l},\theta_{e})$ and $\hat{\nabla} f(\theta_{l},\theta_{e})$  are shown in Theorem \ref{cc}.
\begin{theorem}\label{cc}\
    Consider $H$ as the decision factor, the computational complexity of computing $\nabla f(\theta_{l},\theta_{e})$ is $\mathcal{O}(H^2)$ and that of computing $\hat{\nabla} f(\theta_{l},\theta_{e})$ is $\mathcal{O}(H)$.
\end{theorem}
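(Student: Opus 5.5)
The plan is a counting argument in $H$, treating the parameter dimensions, $d$, and the state–action space sizes as constants. The basic unit of cost is the work to evaluate a trajectory-level quantity such as $f$, $\nabla_{\theta_e}L$, or $\nabla_{\theta_l}L$ at a fixed $(\theta_l,\theta_e)$. Each of these can be computed from the finite-horizon MG in $\mathcal{O}(H)$ time. The induced policy $\pi_{\theta_l,\theta_e}$ comes from a single backward pass over $h=H-1,\dots,0$, which is constant work per stage. The expectations $\mu_e(\pi_{\theta_l,\theta_e})$, $\mu_l(\pi_{\theta_l,\theta_e})$ and $f$ are then obtained by one forward propagation of the state distribution over $H$ stages. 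The empirical terms $\hat\mu_e,\hat\mu_l$ are sums over $dH$ samples. By Lemma \ref{lemma1} and its analogue for $\nabla_{\theta_l}L$, every first-order quantity therefore costs $\mathcal{O}(H)$.

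\textbf{Exact hypergradient.} The next step is to show that the exact expression of Lemma \ref{lemma2} costs $\mathcal{O}(H^2)$. I would differentiate $\nabla_{\theta_e}L=\mu_e(\pi_{\theta_l,\theta_e})-\hat\mu_e+\lambda\theta_e$ once more with respect to $\theta_e$ (and with respect to $\theta_l$ for the Jacobian). This requires $\nabla_{\theta_e}\mu_e(\pi_{\theta_l,\theta_e})$. Expanding the expectation over trajectories, the derivative of the policy at step $h$ depends on the rewards at all later steps $h'\ge h$ through the soft $Q$-recursion. The derivative of the state distribution at step $h$ depends on the policy derivatives at all earlier steps. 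Hence $\nabla_{\theta_e}\mu_e$ is a double sum $\sum_{h}\sum_{h'}$ of cross terms, for instance covariance-type terms $\mathrm{Cov}(\nabla r(s^h,a^h),\nabla r(s^{h'},a^{h'}))$, which require $\mathcal{O}(H^2)$ operations. The same reasoning applies to $\nabla_{\theta_e}f$ and $\nabla_{\theta_l}f$, since differentiating $E^{\pi}[\sum_h\gamma^h r_l]$ through the policy yields the policy-gradient double sum $\sum_h \nabla\ln\pi(a^h|s^h)\sum_{h'\ge h}\gamma^{h'}r_l$, itself $\mathcal{O}(H^2)$ when each $\nabla\ln\pi$ at step $h$ is computed by a backward pass. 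The inversion, or the conjugate gradient solve (\ref{conju}), acts on matrices whose size is the parameter dimension and so adds no $H$-dependence. The total is therefore $\mathcal{O}(H^2)$.

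\textbf{SPSA hypergradient.} For $\hat\nabla f$, I would count evaluations. By (\ref{eq:4}), $\hat\nabla^2_{\theta_e}L$ needs $\nabla_{\theta_e}L$ at the two points $\theta_e\pm p\Delta$. Likewise $\hat\nabla^2_{\theta_l\theta_e}L$ needs two evaluations of $\nabla_{\theta_l}L$, $\hat\nabla_{\theta_e}f$ needs two of $f$, and $\hat\nabla_{\theta_l}f$ needs two of $f$. That is eight first-order evaluations, each $\mathcal{O}(H)$ by the first step. The outer-product formation, the conjugate gradient solve and the final combination (\ref{ehyper}) depend only on the dimension. Thus $\hat\nabla f$ costs $\mathcal{O}(H)$.

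\textbf{Main obstacle.} The delicate step is justifying that the exact Hessian and Jacobian genuinely cost $\Theta(H^2)$ rather than $\mathcal{O}(H)$ via some clever adjoint recursion. I would settle this by writing out $\nabla_{\theta_e}\mu_e$ explicitly and exhibiting the pairwise-in-time terms as the natural computation. I would claim only the upper bound $\mathcal{O}(H^2)$ for the direct method, as the statement requires.
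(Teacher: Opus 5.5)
Your proposal is correct and follows essentially the same counting argument as the paper. The exact hypergradient of Lemma~\ref{lemma2} places $\mathcal{O}(H)$ conditional quantities such as $\mu_l(s^h)$, $\mu_l(s^h,a^h)$ and $J_l(s^h,a^h)$ inside a sum over $h=0,\dots,H-1$, giving $\mathcal{O}(H^2)$. SPSA needs only a constant number of evaluations of $f$, $\nabla_{\theta_e}L$ and $\nabla_{\theta_l}L$, each a sum of $H$ terms, plus the perturbed policies, which the paper costs at $\mathcal{O}(eH)$ via MARL.

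The paper gets the $\mathcal{O}(H)$ cost of the first-order quantities by sampling finitely many trajectories, which also covers its continuous-space setting; your exact backward/forward dynamic programming applies only to finite spaces. Your caveat is also well taken: like the paper, you establish only the upper bound $\mathcal{O}(H^2)$ for the direct method, not a matching lower bound.
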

Applying SPSA to approximate $\nabla f(\theta_{l},\theta_{e})$ reduces the per-iteration computational complexity of the upper-level problem from $\mathcal{O}(H^2)$ to $\mathcal{O}(H)$. The proof of Theorem \ref{cc} is in the Appendix \ref{t2}. Compared to finite-difference methods, SPSA requires fewer policies for the approximation. In SPSA, a single random perturbation of $\theta_l$ or $\theta_e$ yields two policies (one for the positive perturbation and one for the negative perturbation).  In contrast, the finite difference approach requires two policies for each parameter dimension. Each policy is generated via MARL, so the computational cost of producing these policies is non-negligible. As SPSA requires fewer policies, its overall computational cost of SPSA is lower than that of finite difference methods. Finite-difference methods, in turn, have a lower computational cost than explicitly computing gradients. 

\subsection{Convergence rate}
The convergence of our algorithm is shown in Theorem \ref{cr}.
\begin{theorem}\label{cr}
    Suppose Assumption \ref{assump1} holds, with the choices of parameters given by $p(k) = \frac{1}{k}$, $\alpha_k = \frac{1}{L_f \sqrt{K}}$, $t_k = \lceil \frac{\sqrt[4]{k+1}}{2}\rceil$, the following convergence guarantee holds:
        $\frac{1}{K}\sum_{k=0}^{K-1}E[\|\nabla f(\theta_l(k),\theta_e^*(\theta_l(k)))\|^2] \leq \mathcal{O}(\frac{1}{\sqrt{K}})$.
\end{theorem}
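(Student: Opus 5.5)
The argument follows the standard double-loop bilevel template (Ghadimi--Wang style): establish smoothness constants, control the lower-level tracking error from the inner loop, control the bias and variance of the SPSA hypergradient, and then run a projected-descent telescoping argument on $F(\theta_l)\triangleq f(\theta_l,\theta_e^*(\theta_l))$.

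\textbf{Step 1: regularity.} By Assumption \ref{assump1} and compactness of $\Theta$, the soft (entropy-regularized) equilibrium policy $\pi_{\theta_l,\theta_e}$ is smooth in $(\theta_l,\theta_e)$ with bounded derivatives up to third order. This holds in the same way the ML-IRL literature obtains smoothness of softmax policies from smooth rewards. It follows that $L$ and $f$ have Lipschitz gradients and Hessians. Moreover, $L(\theta_l,\cdot)$ is $\lambda$-strongly convex (as asserted in Section \ref{algouter}), so $\theta_e^*(\cdot)$ is $L_*$-Lipschitz by the implicit function theorem. Combining these, the exact hypergradient of Lemma \ref{lemma2} is Lipschitz in $\theta_e$, and $F$ is $L_f$-smooth.

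\textbf{Step 2: inner-loop error.} Projected gradient descent on a $\lambda$-strongly convex, smooth function contracts linearly, so $\|\theta_e(k)-\theta_e^*(\theta_l(k))\|^2 \le \rho^{t_k}\|\theta_e^{\mathrm{init}}(k)-\theta_e^*(\theta_l(k))\|^2$ for some $\rho<1$. The initial distance is at most $\mathrm{diam}(\Theta)=2$ (alternatively, warm-start and use $\|\theta_e^*(\theta_l(k))-\theta_e^*(\theta_l(k-1))\|\le L_*\alpha_k C$). With $t_k=\lceil (k+1)^{1/4}/2\rceil$ this error is summable in $k$, so its average over $k$ is $\mathcal O(1/K)$.

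\textbf{Step 3: SPSA hypergradient error.} Following Spall's analysis, a Taylor expansion to third order gives bias $\mathcal O(p(k)^2)$ for each SPSA gradient and Hessian estimate, because the $C^4$ assumption makes the third derivatives of $L$ and $f$ bounded. Independence, zero mean, and $E|\Delta_i^{-1}|\le\alpha_1$ cancel the off-diagonal cross terms in expectation, and $|\Delta_i|\le\alpha_0$ keeps the second moments bounded. Two further ingredients are needed:
\begin{itemize}[noitemsep]
\item the estimated Hessian stays uniformly positive definite, with $\|[\hat\nabla^2_{\theta_e}L]^{-1}\|\le 2/\lambda$, as asserted in the algorithm section;
\item the resolvent identity $A^{-1}-B^{-1}=A^{-1}(B-A)B^{-1}$, which propagates the errors through the product in (\ref{ehyper}).
\end{itemize}
Together these yield
\[
\|E[\hat\nabla f(\theta_l(k),\theta_e(k))\mid\mathcal F_k]-\nabla F(\theta_l(k))\| \le C_1 p(k)^2 + C_2\|\theta_e(k)-\theta_e^*(\theta_l(k))\|,
\]
together with a uniform bound $E\|\hat\nabla f\|^2\le G^2$. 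With $p(k)=1/k$ the squared bias is summable.

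\textbf{Step 4: descent and telescoping.} By $L_f$-smoothness of $F$ and the projected update (using nonexpansiveness of $\Pi_\Theta$ and the gradient mapping, or treating interior iterates), we get
\[
E F(\theta_l(k+1))\le E F(\theta_l(k))-\tfrac{\alpha_k}{2}E\|\nabla F(\theta_l(k))\|^2+\tfrac{\alpha_k}{2}E\|b_k\|^2+\tfrac{L_f\alpha_k^2}{2}G^2 ,
\]
where $b_k$ is the bias from Step 3. Summing over $k$, dividing by $K\alpha$ with $\alpha=1/(L_f\sqrt K)$, and using boundedness of $F$ on $\Theta$ gives
\[
\tfrac1K\sum_k E\|\nabla F\|^2\le \tfrac{2L_f(F_0-F_{\min})}{\sqrt K}+\tfrac{G^2}{\sqrt K}+\tfrac1K\sum_k E\|b_k\|^2 .
\]
The last term is $\mathcal O(1/K)$ by Steps 2 and 3, so the whole bound is $\mathcal O(1/\sqrt K)$.

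The main obstacle is Step 3: keeping the SPSA Hessian inverse uniformly bounded, and keeping the variance bounded given that $E|\Delta_i^{-1}|$ only bounds the first moment. I would first try to handle both by restricting to Bernoulli $\pm1$ perturbations, for which $\Delta_i^{-1}=\Delta_i$, and by absorbing the term $\mathcal O(p)$ into $\lambda/2$ for large $k$. A second delicate point is Step 1, the smoothness of the equilibrium policy in the rewards; I would cite it as standard for entropy-regularized games rather than reprove it.
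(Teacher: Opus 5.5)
\textbf{Where the proof breaks.} Your outline follows the paper's proof step for step: Ghadimi--Wang smoothness constants, linear contraction of the inner loop under $t_k=\lceil (k+1)^{1/4}/2\rceil$, an $\mathcal O(p(k)^2)$ SPSA bias, and the descent inequality telescoped with $\alpha=1/(L_f\sqrt K)$. However, the step you flagged as the main obstacle fails as you propose it. The paper does not resolve it either; it only asserts that $\Delta(k)$ and $p(k)$ can be chosen so that $\hat\nabla^2_{\theta_e}L\succeq\frac{\lambda}{2}I$.

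Write $\hat H\triangleq\hat\nabla^2_{\theta_e}L$, $H\triangleq\nabla^2_{\theta_e}L$, $\hat g\triangleq\hat\nabla_{\theta_e}f$ and $g\triangleq\nabla_{\theta_e}f$.
\begin{itemize}[noitemsep]
\item \emph{The estimate is singular.} The estimator (\ref{eq:4}) is $\hat H=\frac{1}{2p(k)}\,w\,\Delta_{d_k}^T$ with $w_i=1/\Delta_i(k)$. So it has rank one and is singular for every $m\ge 2$; symmetrizing only raises the rank to two.
\item \emph{Its error does not shrink with $p(k)$.} With Bernoulli $\pm1$ perturbations, central differences give $\hat H=\Delta\Delta^TH+\mathcal O(p^2)$. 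The realized error $(\Delta\Delta^T-I)H$ has mean zero but norm at least $(m-1)\lambda$, however small $p(k)$ is. Spall's bound (the one used in Lemma \ref{bias}) controls only $E[\hat H]-H$. There is therefore no $\mathcal O(p)$ realized error to absorb into $\lambda/2$.
\item \emph{The bias claim fails for the same reason.} The resolvent identity gives $E[\hat H^{-1}\hat g]-H^{-1}g=E[\hat H^{-1}(H-\hat H)H^{-1}\hat g]+H^{-1}(E\hat g-g)$. You cannot replace $H-\hat H$ by its mean inside an expectation where it multiplies the random $\hat H^{-1}$ and $\hat g$. Because $A\mapsto A^{-1}$ is nonlinear, the bias of any invertible surrogate is of the order of $E\|\hat H-H\|^2$, which is $\mathcal O(1)$ and independent of $p(k)$, rather than $C_1p(k)^2$.
\end{itemize}
An $\mathcal O(1)$ bias destroys the $\mathcal O(1/\sqrt K)$ rate.

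\textbf{How Step 3 could be repaired.} You need an inverse-Hessian estimator whose bias depends only on $E[\hat H]$, or on a variance you can drive to zero.
\begin{itemize}[noitemsep]
\item Average $N_k$ independent symmetrized SPSA samples with a positive-definite safeguard. Use independent perturbations for the three factors in (\ref{ehyper}), so that the expectation of their product factorizes. This gives bias of order $p(k)^2+1/N_k$ and requires $N_k\gtrsim K^{1/4}$.
\item Alternatively, replace the inversion by a truncated Neumann series $\eta^{-1}\sum_{j<N}\prod_{i\le j}(I-\hat H_i/\eta)$ with independent $\hat H_i$, as Ghadimi--Wang do. Its mean involves only $E[\hat H_i]$, but its second moment must be controlled separately.
\end{itemize}

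\textbf{Two further premises not implied by Assumption \ref{assump1}.} Both are taken from the paper.
\begin{itemize}[noitemsep]
\item $\lambda$-strong convexity of $L(\theta_l,\cdot)$ fails in general for nonlinear $r_{\theta_e}$. The Hessian in Lemma \ref{gradients} contains $E^{\pi}[\nabla^2_{\theta_e}r_{\theta_e}]$ minus its empirical counterpart, which is indefinite. It must be stated as a hypothesis.
\item Once $\Pi_\Theta$ is active, the gradient-mapping argument bounds $\alpha^{-1}\|\theta_l(k)-\Pi_\Theta(\theta_l(k)-\alpha\nabla F(\theta_l(k)))\|$, not $\|\nabla F(\theta_l(k))\|$. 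The latter need not vanish at a minimizer on the boundary of $\Theta$. The paper simply drops the projection, so you should either assume the iterates stay interior or state the result for the gradient mapping.
\end{itemize}
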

Theorem \ref{cr} indicates that the expected hypergradient decays at a rate of $\mathcal{O}(\frac{1}{\sqrt{K}})$, matching the convergence rate of standard bilevel optimization \cite{ghadimi2018approximation}. This implies that the bias introduced by SPSA does not slow convergence compared to standard bilevel optimization. Corollary \ref{policycon} shows that the linear reward function $r_{\theta_e}$ is a sufficient condition for the convergence of the policy $\pi_{\theta_e}$. Convergence of the cumulative reward difference has been widely used to infer convergence of the learned expert policy \cite{rhinehart2017first,renard2024convergence}.
\begin{corollary}\label{policycon}
    If the reward function $r_{\theta_e}$ is linear, the cumulative reward difference between the learned expert policy $\pi_{\theta_e}$ and $\pi_{e}$ decreases at a rate $\mathcal{O}(\frac{1}{\sqrt[4]{K}})$.
\end{corollary}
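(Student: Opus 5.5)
The plan is to turn the stationarity guarantee of Theorem \ref{cr} into a bound on the lower-level estimation error, and then use linearity of $r_{\theta_e}$ to convert that error into a cumulative reward gap. With a linear reward $r_{\theta_e}(s,a)=\theta_e^T\phi(s,a)$, the feature map $\phi$ is bounded on the compact domain. The cumulative reward of any joint policy $\pi$ is then $\theta_e^T\mu_e(\pi)$, with $\mu_e(\pi)=E^{\pi}[\sum_h\gamma^h\phi(s^h,a^h)]$. I will measure the gap through the feature-expectation difference
\[
\Delta_k \triangleq \bigl|\theta^T\bigl(\mu_e(\pi_{\theta_l(k),\theta_e(k)})-\hat\mu_e(D_{\theta_l(k),r_e})\bigr)\bigr|, \qquad \|\theta\|\le 1 .
\]
This is the standard notion used in \cite{rhinehart2017first,renard2024convergence}. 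By Cauchy--Schwarz, $\Delta_k\le \|\mu_e(\pi_{\theta_l(k),\theta_e(k)})-\hat\mu_e\|$. By Lemma \ref{lemma1}, this equals $\|\nabla_{\theta_e}L(\theta_l(k),\theta_e(k))-\lambda\theta_e(k)\|$, so it suffices to bound the inner-loop gradient norm plus an $\mathcal{O}(\lambda)$ regularization bias. The bias is treated as a fixed small constant, or the gap is stated relative to the regularized optimum.

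The first step is the inner loop. $L(\theta_l,\cdot)$ is $\lambda$-strongly convex, and it is smooth: its Hessian is a covariance of bounded features plus $\lambda I$, and it is bounded by Assumption \ref{assump1}. Projected gradient descent with $t_k$ steps, warm-started at the previous $\theta_e(k)$, therefore gives a linear contraction
\[
\|\theta_e(k)-\theta_e^*(\theta_l(k))\|\le (1-c)^{t_k}\,\|\theta_e(k-1)-\theta_e^*(\theta_l(k))\|,
\]
where $c$ depends on $\lambda$ and the smoothness constant. The drift term is controlled by Lipschitzness of $\theta_e^*(\cdot)$, which follows from the implicit function theorem and strong convexity, together with $\|\theta_l(k)-\theta_l(k-1)\|\le\alpha_{k-1}\|\hat\nabla f\|=\mathcal{O}(1/\sqrt K)$. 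I will reuse the intermediate bound from the proof of Theorem \ref{cr}, presumably $E\|\theta_e(k)-\theta_e^*(\theta_l(k))\|^2\lesssim 1/\sqrt{K}$ on average, which is what makes the $\mathcal{O}(1/\sqrt K)$ rate possible there.

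Next comes the conversion to the reward gap. At $\theta_e^*(\theta_l)$ the projected first-order condition holds. In the interior it reads $\mu_e(\pi_{\theta_l,\theta_e^*})-\hat\mu_e=-\lambda\theta_e^*$; on the boundary of $\Theta$ I use the normal-cone version. Smoothness of $\mu_e$ in $\theta_e$ then gives
\[
\Delta_k\le L_\mu\|\theta_e(k)-\theta_e^*(\theta_l(k))\|+\mathcal{O}(\lambda).
\]
I average over $k$ and apply Jensen: the averaged squared error is $\mathcal{O}(1/\sqrt K)$, so the averaged (or expected) error itself is $\mathcal{O}(1/\sqrt[4]{K})$. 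This square root is exactly where the quarter-power rate in the statement comes from.

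I expect the main obstacle to be the middle step: producing a clean $\mathcal{O}(1/\sqrt K)$ bound on the averaged squared tracking error from the given schedule $t_k=\lceil\sqrt[4]{k+1}/2\rceil$. The geometric factor $(1-c)^{t_k}$ must absorb the drift term, so I would first sum the recursion explicitly and check that the accumulated drift $\sum_k\alpha_k^2$ stays $\mathcal{O}(1)$ while the contraction handles the rest. If that argument is too loose, I would fall back on extracting the tracking-error bound directly from the proof of Theorem \ref{cr}.
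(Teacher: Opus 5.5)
\textbf{There is a genuine gap in your middle step.} Your outer steps match the paper. The paper writes $J_e(\pi_{\theta_e})-J_e(\pi_e)=\langle\theta_e,\mu_f(\pi_{\theta_e})-\mu_f(\pi_e)\rangle$ and bounds it by $\|\mu_f(\pi_{\theta_e})-\mu_f(\pi_e)\|$ using Cauchy--Schwarz and $\|\theta_e\|\le 1$. It then takes the square root of an $\mathcal{O}(1/\sqrt{K})$ bound on $\|\mu_f(\pi_{\theta_e})-\mu_f(\pi_e)\|^2$, which it attributes directly to Theorem \ref{cr}; that step is asserted, not derived. The paper never goes through lower-level stationarity or the tracking error.

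Your attempt to supply that middle step yourself is where the argument breaks. At the constrained minimizer of $L(\theta_l,\cdot)$ over the unit ball, the first-order condition reads $\mu_e(\pi_{\theta_l,\theta_e^*})-\hat\mu_e=-(\lambda+c)\,\theta_e^*$. Here $c\ge 0$ is a multiplier that is zero in the interior but need not be small when $\|\theta_e^*\|=1$. So the normal-cone version gives a residual of size $\lambda+c$, not $\mathcal{O}(\lambda)$. Even in the interior, the residual $\lambda\|\theta_e^*\|$ is a constant, since $\lambda$ does not depend on $K$. In addition, $\hat\mu_e(D_{\theta_l(k),r_e})$ is an average over $d$ sampled trajectories generated under the current $\theta_l(k)$. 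Using it in place of the expert's feature expectation adds a sampling error that also does not shrink with $K$. Your chain therefore gives at best $\Delta_k\le L_\mu\|\theta_e(k)-\theta_e^*(\theta_l(k))\|+\lambda+c_k+(\text{sampling error})$, which has a $K$-independent floor. Calling the residual a ``fixed small constant'' or re-centering at the regularized optimum proves a different statement, not the corollary.

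\textbf{Your rate bookkeeping is also misattributed.} The proof of Theorem \ref{cr} contains no bound of the form $\frac{1}{K}\sum_k E\|\theta_e(k)-\theta_e^*(\theta_l(k))\|^2=\mathcal{O}(1/\sqrt{K})$. There the inner-loop error enters only as $(\frac{Q_L-1}{Q_L+1})^{t_k}\|\theta_e(0)-\theta_e^*(\theta_l(k))\|$, with the distance bounded by the diameter of $\Theta$. The drift analysis you flag as the main obstacle is therefore unnecessary. Because $t_k\approx\sqrt[4]{k+1}/2$, these geometric factors are summable and contribute only $\mathcal{O}(1/K)$ to the average. The $1/\sqrt{K}$ in Theorem \ref{cr} instead comes from the step size $\alpha_k=1/(L_f\sqrt{K})$, which balances the initial-gap and variance terms.

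So applying Jensen to the tracking error is not where the quarter power comes from. In the paper it is simply the square root of Theorem \ref{cr}'s rate, carried over to the feature gap. To make your route work, you would need extra ingredients that force exact feature matching in the limit, for example $\lambda\to 0$ with $K$, an interior optimum, and control of the demonstration sampling error. None of these are in the statement.
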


\section{Experiment}\label{experiments}
 In our experiments, we evaluate BISIRL on four widely used environments. The Multi-Agent Particle Environment (MPE) \cite{lowe2017multi,terry2021pettingzoo,yu2022surprising} involves particle agents that can move, communicate, observe one another, push each other, and interact with fixed landmarks. The StarCraft Multi-Agent Challenge (SMAC) \cite{samvelyan2019starcraft,yu2022surprising,rashid2020monotonic} is designed based on the real-time strategy game StarCraft II. The Human-Robot Interaction (HRI) environment \cite{el2020towards, fan2020distributed, zhu2021deep} models a navigation scenario in which a robot interacts with a human, and the results are listed in Appendix Section \ref{hriexp}. The security environment \cite{zhang2019non,zhang2021physical} involves automated defense systems; the results are presented in Appendix Section \ref{csexp}. The MPE, HRI, and Security environments involve mixed cooperative–competitive interactions, while the SMAC environment is fully cooperative.

We compare BISIRL with four baseline algorithms: the \textbf{MARL} algorithm \cite{lowe2017multi}, the \textbf{MA-IRL} algorithm \cite{ziebart2010modeling}, the cooperative IIRL  (\textbf{CIRL}) algorithm \cite{buning2022interactive}, and the maximum likelihood IRL (\textbf{ML-IRL}) algorithm \cite{zeng2022maximum}. Since none of these baseline algorithms can directly address our IIRL problem, we adapt the experimental setup to make them applicable. Specifically, for MARL, we assume that both the learner and the expert know their reward functions. For MA-IRL, we provide demonstrations sampled according to ground-truth reward functions of both the learner and the expert. Under this setting, the performance of MA-IRL approaches that of MARL. For CIRL, we assume that the reward function of the learner is identical to the expert's reward function. For ML-IRL, the learner has access to demonstrations generated by a policy through MARL with the learner’s initial and the ground-truth expert reward functions. 

The MARL baseline represents the best achievable performance among all considered algorithms. The MA-IRL baseline evaluates whether our proposed method can achieve comparable performance without relying on the learner's demonstrations, which are generated using the learner's ground-truth reward function. The CIRL baseline illustrates that cooperative IIRL algorithms are only effective in fully cooperative scenarios, highlighting their limitations in more general settings. Lastly, the ML-IRL baseline demonstrates that conventional IRL approaches are ill-suited for IIRL problems, as they fail to account for the mutual influence between the learner and the expert during interaction.

Since MARL serves as an upper-bound baseline and MA-IRL is expected to achieve comparable performance under our setting, the objective of BISIRL is not to outperform these two baselines, but rather to match their performance while operating under the more restrictive IIRL setting, where the privileged information used by MARL and MA-IRL is unavailable.

\begin{figure*}[htbp]

\centering
\subfigure{ 
\label{mpead}

\includegraphics[width=6.5cm]{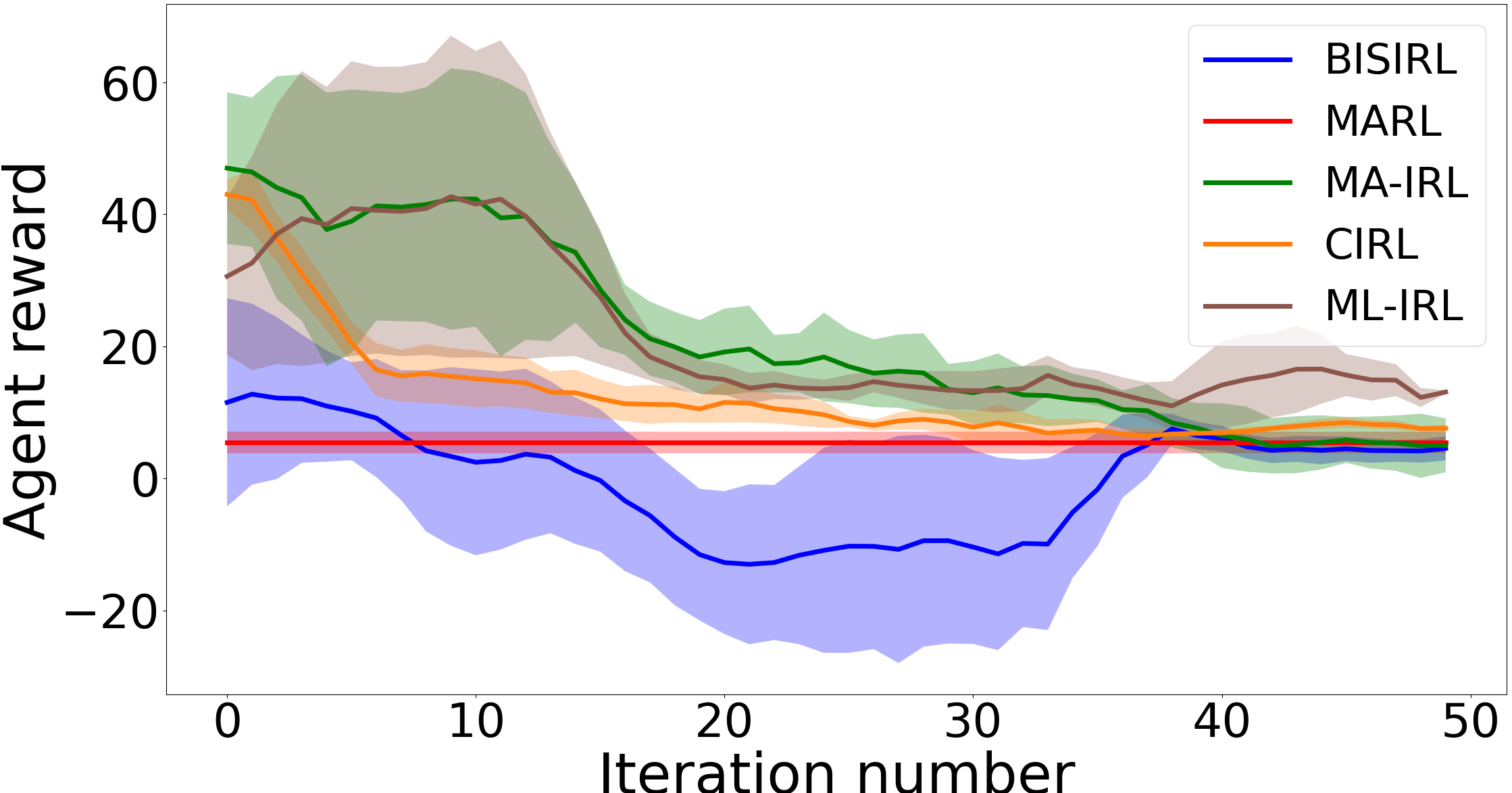}

}
\hspace{-1mm}
\subfigure{ 
\label{mpeag}
\includegraphics[width=6.5cm]{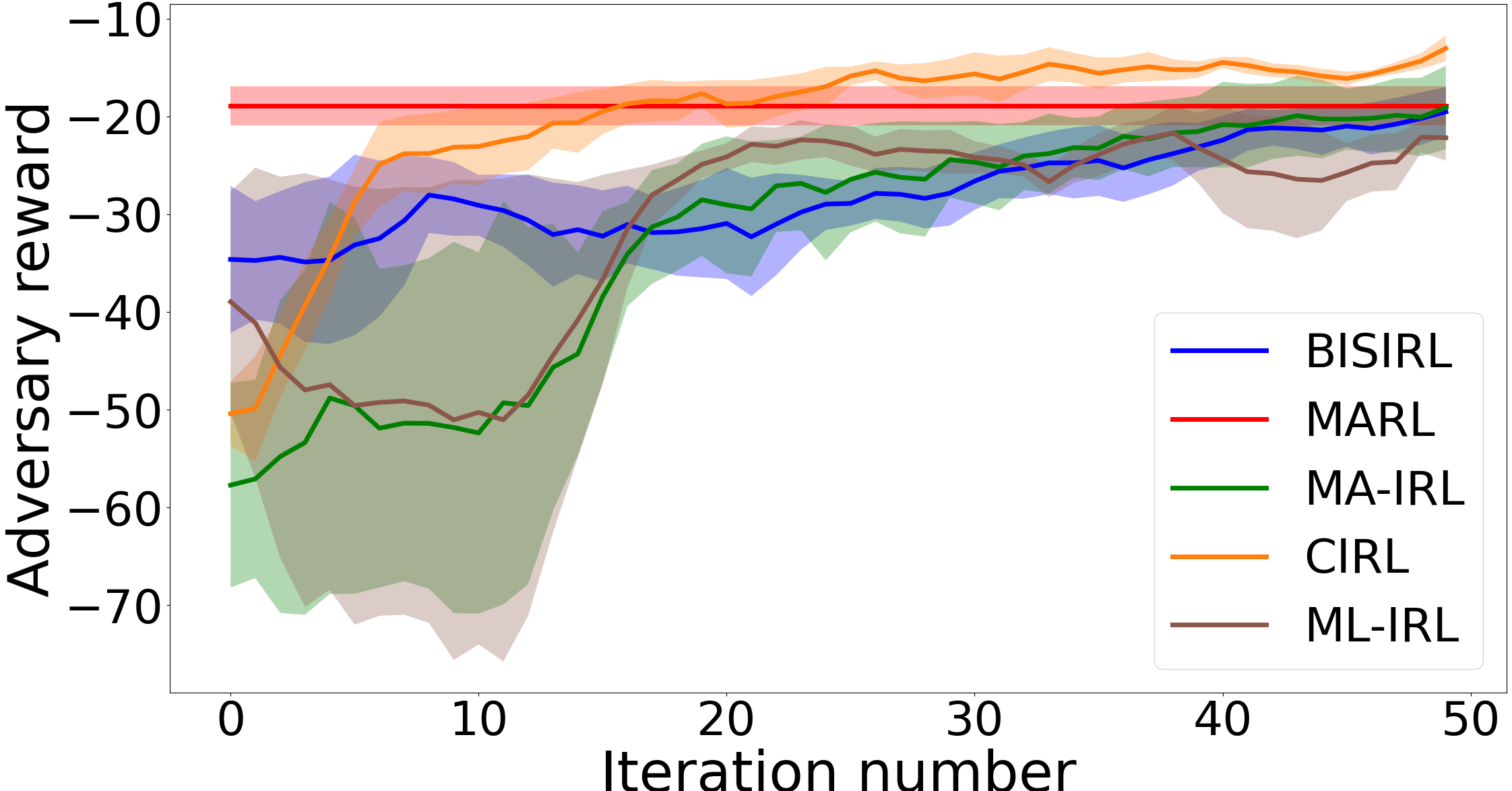}
}
\hspace{-1mm}

\caption{MPE simulation results. \textbf{Left}: Agent's reward. \textbf{Right}: Adversary's reward. The horizontal lines show the cumulative rewards from a MARL method that has access to the ground-truth reward functions after convergence. In contrast, other methods learn the reward functions from demonstrations and interactions, respectively. In each iteration, we compute the policies of both the adversary and the agents via MARL using the currently learned reward functions. We then evaluate these policies using the ground-truth reward functions.}
\label{mpeexp}

\end{figure*}
\begin{figure*}[htb]
\centering

\subfigure{ 
\label{smac_g}
\includegraphics[width=4.25cm]{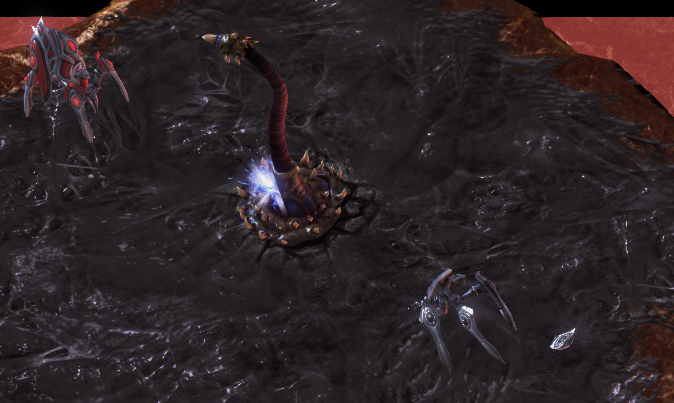}

}
\hspace{-1mm}
\subfigure{ 
\label{smac_cr}
\includegraphics[width=4.25cm]{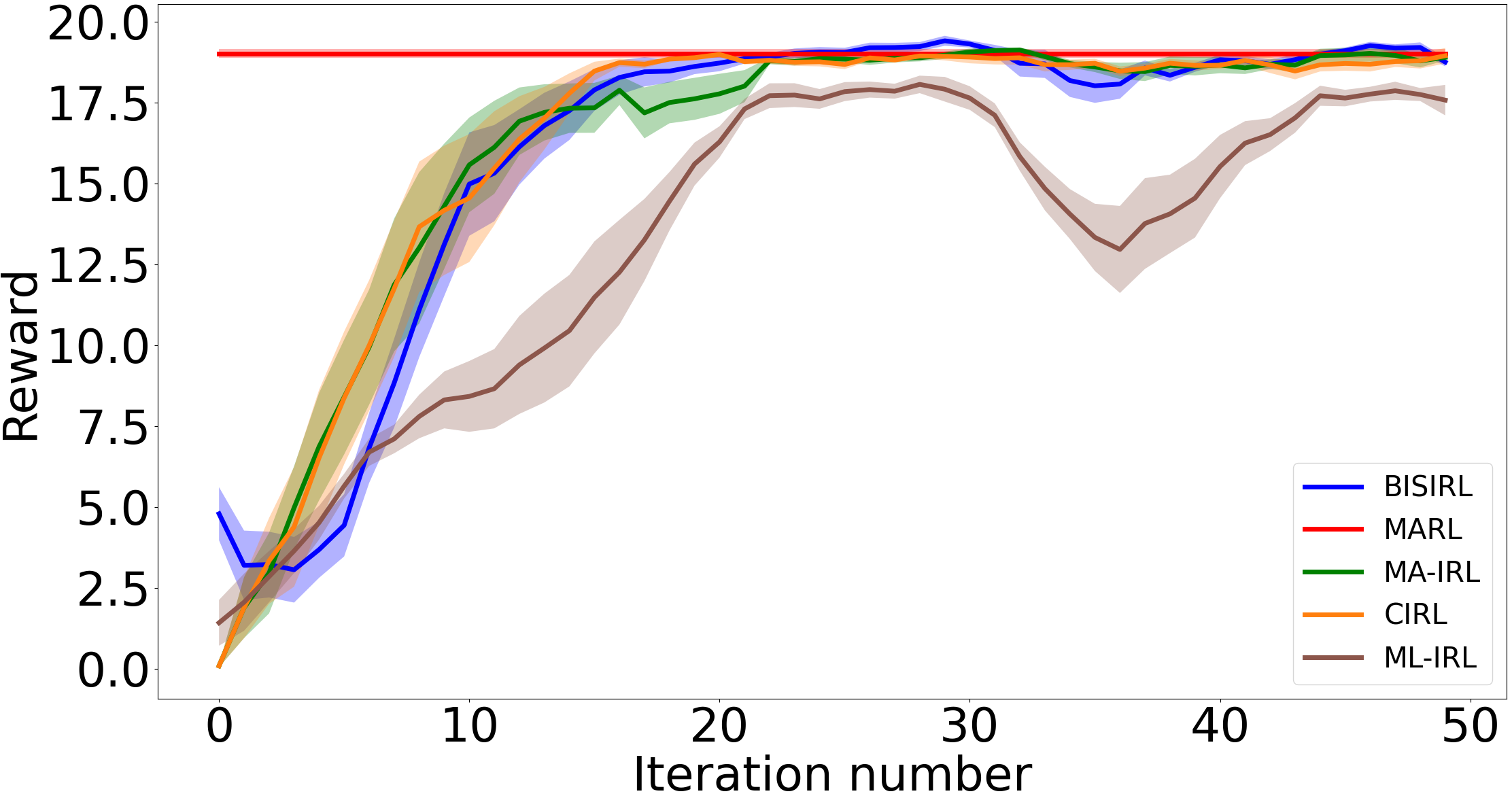}

}
\hspace{-1mm}
\subfigure{ 
\label{smac_win}
\includegraphics[width=4.25cm]{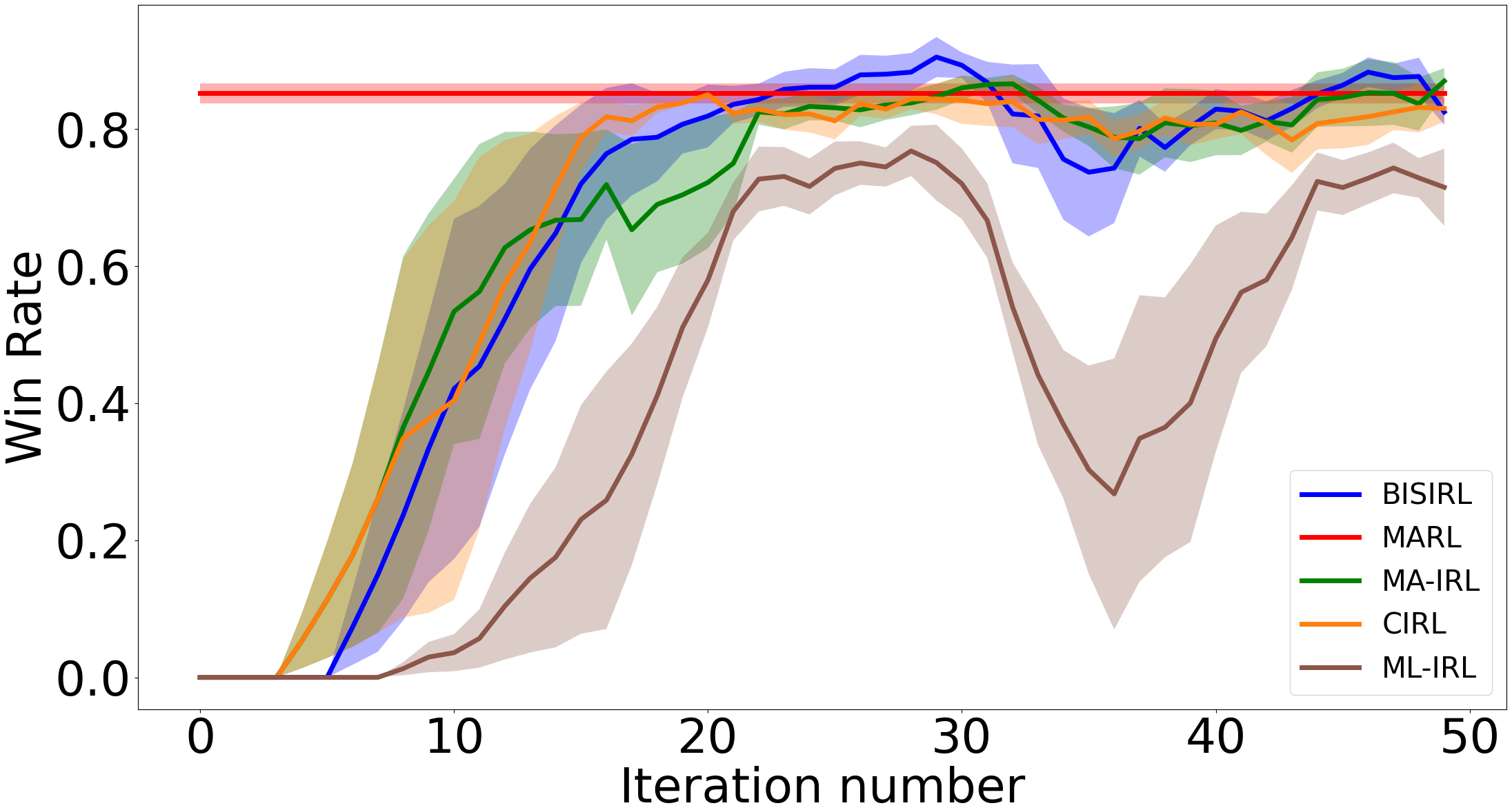}

}
\hspace{-1mm}

\caption{SMAC simulation results. \textbf{Left}: A screenshot of the game. \textbf{Middle}: Cumulative reward. \textbf{Right}: Win rate. The cumulative rewards are computed in the same manner as described in Figure \ref{mpeexp}. Similarly, the win rates are computed based on the policies corresponding to the learned reward functions. The agents are considered victorious when they defeat the enemy. }
\label{smac_result}

 \end{figure*}
\subsection{Multi-Agent Particle Environments}

We use the physical deception environment in the MPE for our simulation. This environment includes one adversary, two cooperating agents, and two landmarks. One of the landmarks is the target landmark. The cooperating agents know which landmark is the target, whereas the adversary does not. The cooperating agents aim to have one of them approach the target as closely as possible while preventing the adversary from reaching it. Conversely, the adversary aims to identify and reach the target landmark. We treat the two cooperating agents as a single learner and the adversary as the expert. Both the learner and the expert have continuous state and action spaces. The learner’s state space is $10$-dimensional and the expert’s is $8$-dimensional, and each has a $5$-dimensional action space. Supplementary experimental details are provided in Appendix Section \ref{mpesetup}.

Because the two agents cooperate and share a common reward, we plot a single cumulative reward curve for both. Figure~\ref{mpeexp} presents results for two randomly initialized reward functions updated over 50 iterations. Initially, both the adversary and the agents act randomly, and the cumulative rewards differ significantly from those of the MARL baseline. As more data are collected, the agents gradually learn their own goal and the adversary’s goal, leading the cumulative rewards of BISIRL and MA-IRL to converge to that of MARL. In contrast, CIRL and ML-IRL fail to do so.
\subsection{StarCraft Multi-Agent Challenge}
We use the scenario 2s\_vs\_1sc from the SMAC  for our experiment. In this scenario, two agents cooperate to defeat a single, more powerful enemy unit (controlled by a built-in AI). Because the environment does not expose the enemy’s state or action information, the enemy’s reward function cannot be learned directly. We therefore designate one agent as the expert (with access to the ground-truth reward function and the optimal cooperative strategy to defeat the enemy) and the other agent as the learner. The learner’s goal is to infer the expert’s strategy and learn to cooperate with the expert accordingly. Both agents have a continuous $17$-dimensional state space and a discrete action space of size $7$. Additional implementation details are in Appendix \ref{smacexp}.

Because the two agents share the same reward, we report a single cumulative reward curve. The middle panel of Figure \ref{smac_result} shows that the cumulative rewards of BISIRL and MA-IRL converge to that achieved by MARL, mirroring the trend observed in Figure \ref{mpeexp}. Since this scenario is fully cooperative, the cumulative reward of CIRL also converges to that of the MARL, whereas ML-IRL does not. We further evaluate performance using the agents’ win rate. At each iteration, we obtain the agents’ policies via MARL with the current learned reward functions. Each learned policy is then evaluated over $100$ independent trials, and the win rate (the fraction of trials in which the agents defeat the enemy) is recorded. The right panel of Figure \ref{smac_result} shows that the win rates of BISIRL, MA-IRL, and CIRL converge to that of the MARL baseline, while ML-IRL fails to do so.
\begin{table*}
\caption{The cumulative reward comparison. \textbf{Top}: The cumulative reward of the learner. \textbf{Bottom}: The cumulative reward of the expert. MARL uses ground-truth reward functions. MA-IRL, BISIRL, CIRL, and ML-IRL use learned reward functions from the last iteration.}
\label{table}
\vskip 0.15in
\begin{center}
\begin{small}
\begin{sc}
\begin{tabular}{p{1.2cm} p{2cm} p{2cm} p{2cm}p{2cm}p{2.2cm}}
\toprule
Learner &  MARL & MA-IRL & BISIRL &CIRL &ML-IRL\\
\midrule
MPE    & $5.03\pm 1.66$& $5.12 \pm 4.08$& $4.84\pm1.81$ &$7.59\pm0.53$&$13.09\pm0.25$\\
SMAC & $19.01 \pm 0.13$& $18.89 \pm 0.20$& $18.87 \pm 0.27$&$18.94\pm0.48$&$17.56\pm0.94$\\
HRI    & $-20.22\pm 3.34$& $-20.63 \pm 1.07$&  $-21.09 \pm 1.87$&$-54.20\pm4.61$ &$-101.45\pm21.22$       \\
Security   & $-16.96\pm 1.45$& $-17.08\pm3.21$& $-17.38 \pm3.59$ & $-21.68\pm0.84$ &$-20.13\pm0.13$\\

\midrule
Expert &  MARL & MA-IRL & BISIRL&CIRL &ML-IRL \\
\midrule
MPE     & $-18.91\pm2.01$& $-20.37\pm4.27$& $-19.53 \pm 2.50$&$-13.05\pm1.32$&$-22.19\pm2.32$\\
SMAC     & $19.01 \pm 0.13$& $18.89 \pm 0.20$& $18.87 \pm 0.27$&$18.94\pm0.48$&$17.56\pm0.94$ \\
HRI    & $-12.98 \pm 4.78$& $-13.52\pm 2.57$& $-13.16\pm 1.03$ &$-38.09\pm0.13$ &$-41.40\pm2.85$\\
Security     & $6.96 \pm 1.45$& $7.03\pm 3.21$&  $6.76\pm 3.59$ &$11.68\pm0.84$ &$9.12\pm0.13$      \\

\bottomrule
\end{tabular}
\end{sc}
\end{small}
\end{center}
\label{cr table}
\vskip -0.1in

\end{table*}

\subsection{Results Analysis}
Figures \ref{mpeexp} and \ref{smac_result} show that the cumulative rewards of MA-IRL and BISIRL converge to values close to those of MARL, whereas CIRL fails to do so in mixed cooperative-competitive environments and ML-IRL fails across all environments. As summarized in Table \ref{cr table}, the final cumulative rewards of BISIRL are within $5\%$ of those achieved by MA-IRL and MARL. This comparison is meaningful because MARL serves as an empirical upper-bound baseline with access to the ground-truth reward functions, while MA-IRL uses demonstrations generated from the ground-truth reward functions of both the learner and the expert. In contrast, BISIRL relies only on the expert’s interactions with the environment and the corresponding values received during these interactions. These results show that BISIRL achieves its intended goal: matching the performance of MARL and MA-IRL under the more restrictive IIRL setting, rather than outperforming these privileged-information baselines.


\section{Conclusion}

We develop a general IIRL framework that enables a learner to infer an expert’s reward function while simultaneously learning an appropriate interaction policy through engagement with the expert. This framework relaxes the restrictive assumption that the learner’s and expert’s reward functions are identical or opposite, making it applicable to more general interaction scenarios where agents may have distinct and potentially misaligned objectives. To solve this problem, we propose the BISIRL algorithm and provide theoretical guarantees on its convergence rate. Empirical results in both continuous and discrete environments demonstrate that BISIRL can accurately recover the behavior of the expert and learn a high-performing interaction policy. These results further show that BISIRL can approach the performance of privileged-information baselines while operating under a more restrictive and realistic IIRL setting.

\bibliographystyle{plainnat}
\bibliography{conference}
\appendix
\section{Notion and notations}
 Define $f(\theta_l, \theta_e) \triangleq J_{l}(\pi_{\theta_l, \theta_e})$, where $J_{l}(\pi_{\theta_l, \theta_e}) \triangleq E^{\pi_{\theta_l, \theta_e}}[\sum_{h=0}^{H-1} \gamma^h r_{\theta_l}(s^h,a^h)]$ is the cumulative reward of the learner with respect to $r_{\theta_l}$. Define the reward gradient expectation of the expert from the state-action pair $(s,a)$ as $\mu_e(s,a) \triangleq E^{\pi_{\theta_l, \theta_e}}[\sum_{h=0}^{H-1} \gamma^h \nabla_{\theta_e} r_{\theta_e}(s^h,a^h)| s^0 = s,a^0 =a]$, the reward gradient expectation of the expert from the state $s$ as $\mu_e(s) \triangleq \int_{a_l \in A_l} \int_{a_e \in A_e} \mu_e(s,a)da_lda_e, \mu_e(s,a)\triangleq E^{\pi_{\theta_l, \theta_e}}[\sum_{h=0}^{H-1} \gamma^h \nabla_{\theta_e} r_{\theta_e}(s^h,a^h)| s^0 = s] $. Analogously, define the reward gradient expectation of the learner as $\mu_l(\pi_{\theta_l, \theta_e}) \triangleq E^{\pi_{\theta_l, \theta_e}}[\sum^{H-1}_{h=0} \gamma^h \nabla_{\theta_l} r_{\theta_l}(s^{h},a^{h})]$, the reward gradient expectation of the learner from the state-action pair $(s,a)$ as $\mu_l(s,a) \triangleq E^{\pi_{\theta_l, \theta_e}}[\sum_{h=0}^{H-1} \gamma^h \nabla_{\theta_l} r_{\theta_l}(s^h,a^h)| s^0 = s,a^0 =a]$, the reward gradient expectation of the learner from the state $s$ as $\mu_l(s) \triangleq \int_{a_l \in A_l} \int_{a_e \in A_e} \mu_l(s,a)da_lda_e \triangleq E^{\pi_{\theta_l, \theta_e}}[\sum_{h=0}^{H-1} \gamma^h \nabla_{\theta_l} r_{\theta_l}(s^h,a^h)| s^0 = s]$. Define the cumulative reward of the learner with respect to $r_{\theta_l}$ from the state-action pair $(s,a)$ as $J_{l}(s,a) \triangleq E^{\pi_{\theta_l, \theta_e}}[\sum_{h=0}^{H-1} \gamma^h r_{\theta_l}(s^h,a^h)|s^0 = s,a^0 =a]$ and define the cumulative reward of the
learner with respect to $r_{\theta_l}$ from the state $s$ as $ J_{l}(s) \triangleq E^{\pi_{\theta_l, \theta_e}}[\sum_{h=0}^{H-1} \gamma^h r_{\theta_l}(s^h,a^h)|s^0 = s]$. Define $P_0(s)$ as the probability distribution of the initial state.

During the proofs, symbols $(i)$ to $(viii)$ are used to represent what theorems or methods are used to get the current step. The symbol $(i)$ represents chain rule, the symbol $(ii)$ represents the linearity of expectation, the symbol $(iii)$ represents the closed-form of geometric series, the symbol $(iv)$ represents the triangle inequality, the symbol $(v)$ represents the hölder’s inequality, the symbol $(vi)$ represents Taylor’s theorem, the symbol $(vii)$ means the usage of other equations in this paper, and the symbol $(viii)$ means keeping expansion.
\section{Fundamental result for policy}
This section lists the expressions for important gradients in continuous state-action space. 
Based on the idea of the soft Q learning \cite{haarnoja2017reinforcement}, we can get:
\begin{equation}\label{softq}
Q^{soft}_{\theta_l,\theta_e} (s,a) =  r_{\theta_l}(s,a)+ r_{\theta_e}(s,a) + \gamma \int_{s'\in S} P(s'| s,a) V^{soft}(s') ds',   
\end{equation}
\begin{equation}\label{softv}
    V^{soft}_{\theta_l,\theta_e}(s) = \ln \int_{a_l \in A_l} \int_{a_e \in A_e} \exp (Q^{soft} (s,a)) da_l da_e,
\end{equation}
\begin{equation}\label{softpi}
    \pi_{\theta_l, \theta_e}(a_l,a_e|s) = \frac{\exp (Q^{soft} (s,a))}{\exp (V^{soft}(s))}.
\end{equation}


As the lower-level optimization problem is a maximum likelihood problem, it is important to know $\nabla _ {\theta_e} \ln (\pi_{\theta_l, \theta_e})$ for the SGD of the lower-level optimization problem. The process for getting $\nabla _ {\theta_e} \ln (\pi_{\theta_l, \theta_e})$ is shown below.\\
From the expression of $\pi_{\theta_l, \theta_e}$, we can get $\nabla _ {\theta_e}\ln (\pi_{\theta_l, \theta_e}) = \nabla _ {\theta_e}Q^{soft} (s,a) -\nabla _ {\theta_e} V^{soft}(s)$, therefore, we can calculate $\nabla _ {\theta_e}Q^{soft} (s,a)$ and $\nabla _ {\theta_e} V^{soft}(s)$ separately.\\
Based on the equation of $V^{soft}(s)$, the gradient $\nabla _ {\theta_e}V^{soft}(s)$ could be calculated as follows:

    \begin{align}\label{dsoftv}
        &\nabla _ {\theta_e} V^{soft}(s),\nonumber\\
        &\overset{(i)}{=}\frac{\int_{a_l \in A_l} \int_{a_e \in A_e} \nabla _ {\theta_e}\exp (Q^{soft} (s,a)) da_l da_e}{\int_{a_l \in A_l} \int_{a_e \in A_e} \exp (Q^{soft} (s,a)) da_l da_e},\nonumber\\
        & \overset{(vii)}{=} \frac{\int_{a_l \in A_l} \int_{a_e \in A_e} \exp (Q^{soft} (s,a)) \nabla _ {\theta_e} Q^{soft} (s,a)da_l da_e}{\exp (V^{soft}(s))},\nonumber\\
        & \overset{(vii)}{=} \int_{a_l \in A_l} \int_{a_e \in A_e} \pi_{\theta_l, \theta_e} (a_l,a_e|s)(\nabla_{\theta_e} r_{\theta_e}(s,a)\nonumber\\
        &+ \gamma \int_{s'\in S} P(s'| s,a) \nabla _ {\theta_e}V^{soft}(s') ds') da_l da_e,\nonumber\\
        &\overset{(viii)}{=}\int_{a_l \in A_l} \int_{a_e \in A_e} \pi_{\theta_l, \theta_e} (a_l,a_e|s)(\nabla_{\theta_e} r_{\theta_e}(s,a)
        + \gamma \int_{s'\in S} P(s'| s,a)\nonumber\\ 
        &(\int_{a_l' \in a_l} \int_{a_e' \in a_e} \pi_{\theta_l, \theta_e} (a_l',a_e'|s')
        [\nabla_{\theta_e} r_{\theta_e}(s',a_l',a_e') \nonumber\\
        &+ \gamma \int_{s''\in S} P(s''| s',a_l',a_e') \nabla _ {\theta_e}V^{soft}(s'') ds''] da_l'da_e' ds') da_l da_e,\nonumber\\
        &=  E^{\pi_{\theta_l, \theta_e}}[\sum_{h=0}^{H-1} \gamma^h \nabla_{\theta_e} r_{\theta_e}(s^h,a^h)| s^0 = s],\nonumber\\
        &= \mu_e(s),
    \end{align}
where the first $(vii)$ uses the expression in equation (\ref{softv}) and (\ref{softpi}), and the second $(iii)$ uses the expression in equation (\ref{softq}).\\
Similarly, we can get $\nabla _ {\theta_e} Q^{soft}(s,a)$ from $Q^{soft}(s,a)$.\\
\begin{equation}\label{dsoftq}
    \begin{aligned}
        &\nabla _ {\theta_e} Q^{soft}(s,a),\\
        &\overset{(vii)}{=} \nabla_{\theta_e} r_{\theta_e}(s,a)
        + \gamma \int_{s'\in S} P(s'| s,a)\nabla _ {\theta_e}V^{soft}(s') ds') da_l da_e,\\        
        &\overset{(viii)}{=}  E^{\pi_{\theta_l, \theta_e}}[\sum_{h=0}^{H-1} \gamma^h \nabla_{\theta_e} r_{\theta_e}(s^h,a^h)| s^0 = s,a^0 =a],\\
        &=\mu_e(s,a),
    \end{aligned}
\end{equation}
where $(vii)$ uses the expression in equation (\ref{softq}).\\
By summing the results of equation (\ref{dsoftv}) and (\ref{dsoftq}), we can get $\nabla _ {\theta_e} \ln (\pi_{\theta_l, \theta_e})$ as follows:
\begin{equation}\label{dsoftpi}
         \nabla _ {\theta_e} \ln (\pi_{\theta_l, \theta_e}(a_l,a_e|s))
        = \nabla _ {\theta_e} Q^{soft}(s,a) -  \nabla _ {\theta_e} V^{soft}(s)
        = \mu_e(s,a) - \mu_e(s).
\end{equation}
The way to get $\nabla _ {\theta_l} \ln (\pi_{\theta_l, \theta_e})$ is same as that of $\nabla _ {\theta_e} \ln (\pi_{\theta_l, \theta_e})$, and the gradient $\nabla _ {\theta_l} \ln (\pi_{\theta_l, \theta_e}(a_l,a_e|s)) = \mu_l(s,a) - \mu_l(s)$.
\section{The proof of Lemma \ref{lemma1}}\label{plemma1}
In this section, we derived gradients that are necessary for our method.\\

Define $P_D(s^h=s,a_l^h=a_l,a_e^h=a_e) \triangleq \begin{cases}
    1& s^h=s,a_l^h=a_l,a_e^h=a_e\\
    0& \text{otherwise}\\
\end{cases}$
, where $(s,a) \in D_{\theta_l, r_{e}}$, as  the probability of $(s,a)$ occurring at time $h$ in the demonstration set $D_{\theta_l, r_{e}}$. 
With the fundamental result of the policy, we can derive the $\nabla _ {\theta_e} L(\theta_l, \theta_e)$ and prove the Lemma \ref{lemma1} as follows:
    \begin{align*}
        &\nabla _ {\theta_e} L(\theta_l, \theta_e),\\
        &= -\sum_{h=0}^{H-1} \gamma^h\int_{s\in S}\int_{a_l \in A_l} \int_{a_e\in A_e} P_D(s^h=s,a_l^h=a_l,a_e^h=a_e) \nabla_{\theta_e} \ln (\pi_{\theta_l, \theta_e})da_eda_lds + \lambda \theta_e,\\
        &\overset{(vii)}{=} -\sum_{h=0}^{H-1} \gamma^h\int_{s\in S}\int_{a_l \in A_l} \int_{a_e\in A_e} P_D(s^h=s,a_l^h=a_l,a_e^h=a_e) (\nabla_{\theta_e} r_{\theta_e}(s,a)\\
        &+E^{\pi_{\theta_l, \theta_e}}[\sum_{t' =t+1}^{T-1}\gamma^{t'-t-1}\nabla_{\theta_e} r_{\theta_e}(s^h,a^h)|s^h=s,a_l^h=a_l,a_e^h=a_e]\\
        &-E^{\pi_{\theta_l, \theta_e}}[\sum_{t' =t}^{T-1}\gamma^{t'-t}\nabla_{\theta_e} r_{\theta_e}(s^h,a^h)|s^h=s])da_eda_lds + \lambda \theta_e,\\
        &\overset{(ii)}{=}  -\sum_{h=0}^{H-1} \gamma^h\int_{s\in S}\int_{a_l \in A_l} \int_{a_e\in A_e}P_D(s^h=s,a_l^h=a_l,a_e^h=a_e) (\nabla_{\theta_e} r_{\theta_e}(s,a) da_eda_lds\\
        &- \int_{s\in S}\int_{a_l \in A_l} \int_{a_e\in A_e}P_D(s^0=s,a_l^0=a_l,a_e^0=a_e)\\
        &E^{\pi_{\theta_l, \theta_e}}[\sum_{t =0}^{T-1}\gamma^{t}\nabla_{\theta_e} r_{\theta_e}(s^h,a^h)|s^0=s])da_eda_lds+ \lambda \theta_e,\\
        &=\mu_e(\pi_{\theta_l, \theta_e})-\hat{\mu}_e(D_{\theta_l, r_{e}})+ \lambda \theta_e,
    \end{align*}
where $(vii)$ uses the expression of equation (\ref{dsoftpi}).\\
\section{Other needed gradients}\label{other g}
\begin{lemma}\label{gradients}
    Suppose Assumption \ref{assump1} holds, the first-order gradients $\nabla_{\theta_l}L(\theta_l,\theta_e)=\mu_l(\pi_{\theta_l, \theta_e})-\hat{\mu}_l(D_{\theta_l, r_{e}}), \nabla_{\theta_l} f(\theta_l,\theta_e)=E^{\pi_{\theta_l, \theta_e}}[\sum^{H-1}_{h=0} \gamma^h[(\mu_l(s^h,a^h) - \mu_l(s^h))J_{l}(s^h,a^h)^T]]+\mu_l(\pi_{\theta_l, \theta_e})$, $\nabla_{\theta_e}f(\theta_l,\theta_e) = E^{\pi_{\theta_l, \theta_e}}[\sum^{H-1}_{h=0} \gamma^h[(\mu_e(s^h,a^h) - \mu_e(s^h))(J_{l}(s^h,a^h))^T]]$. The second-order gradients $\nabla^2_{\theta_l\theta_e}L(\theta_l, \theta_e) =E^{\pi_{\theta_l, \theta_e}}[\sum^{H-1}_{h=0} \gamma^h(\mu_e(s,a) - \mu_e(s))\mu_l(s,a)^T]$, $\nabla^2_{\theta_e\theta_e}L(\theta_l, \theta_e)=E^{\pi_{\theta_l, \theta_e}}[\sum^{H-1}_{h=0} \gamma^h[(\mu_e(s^h,a^h)- \mu_e(s^h))\mu_e(s^h,a^h)^T+\nabla_{\theta_e}^2 r_{\theta_e}(s^h,a^h)]]-\frac{1}{d}\sum_{i=0}^d\sum_{h=0}^{H-1} \gamma^h \nabla_{\theta_e} r_{\theta_e}(s^{ih}, a^{ih}) + \lambda$    
\end{lemma}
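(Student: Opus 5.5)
The plan is to derive every expression from the soft-Bellman identities (\ref{softq})--(\ref{softpi}) and the score-function formula (\ref{dsoftpi}), reusing the proof pattern of Lemma \ref{lemma1}. Assumption \ref{assump1}, together with boundedness of the rewards, the finite horizon and compactness of $\Theta$, justifies interchanging differentiation with the state and action integrals and with the expectations at each step.

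\textbf{First-order gradients.} For $\nabla_{\theta_l}L$, I repeat the computation in the proof of Lemma \ref{lemma1} with $\theta_e$ replaced by $\theta_l$. By the remark after (\ref{dsoftpi}), $\nabla_{\theta_l}\ln\pi_{\theta_l,\theta_e}(a|s)=\mu_l(s,a)-\mu_l(s)$, and these quantities telescope along the demonstration trajectories exactly as before. The result is $\mu_l(\pi_{\theta_l,\theta_e})-\hat\mu_l(D_{\theta_l,r_e})$, with no regularization term since only $\theta_e$ is penalized.

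For $f=J_l(\pi_{\theta_l,\theta_e})$, I write $J_l$ as an integral of the reward against the trajectory density $P_0(s^0)\prod_h\pi_{\theta_l,\theta_e}(a^h|s^h)P(s^{h+1}|s^h,a^h)$. Differentiating in $\theta_e$ affects only the policy factors, so the log-derivative trick gives $E[\sum_h\nabla_{\theta_e}\ln\pi(a^h|s^h)\cdot(\text{reward-to-go from }h)]$. Substituting (\ref{dsoftpi}) and identifying the reward-to-go with $\gamma^h J_l(s^h,a^h)$ via the tower property yields the stated $\nabla_{\theta_e}f$. For $\theta_l$, the same term appears with $\mu_l$ in place of $\mu_e$, plus the explicit term $E[\sum_h\gamma^h\nabla_{\theta_l}r_{\theta_l}]=\mu_l(\pi_{\theta_l,\theta_e})$ coming from differentiating the reward itself.

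\textbf{Second-order gradients.} I differentiate the first-order formulas once more. In $\nabla_{\theta_e}L=\mu_e(\pi)-\hat\mu_e+\lambda\theta_e$, the empirical term $\hat\mu_e$ does not depend on $\theta_l$, and $\mu_e(\pi)$ is an expectation of $\sum_h\gamma^h\nabla_{\theta_e}r_{\theta_e}$, which has no explicit $\theta_l$ dependence. The log-derivative trick then gives the cross derivative as an expectation of $\nabla_{\theta_l}\ln\pi$ times the $\nabla_{\theta_e}r_{\theta_e}$-to-go. By (\ref{dsoftpi}), this is a product of the form $(\mu(s,a)-\mu(s))\,\mu(s,a)^T$, which matches the stated form up to the transpose and labelling convention for $\nabla^2_{\theta_l\theta_e}$.

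For $\nabla^2_{\theta_e}L$, differentiating $\mu_e(\pi)$ gives two contributions. The score-function term yields $(\mu_e(s,a)-\mu_e(s))\mu_e(s,a)^T$. The direct term yields $\nabla^2_{\theta_e}r_{\theta_e}$ inside the expectation. Differentiating the empirical term yields the empirical average of $\nabla^2_{\theta_e}r_{\theta_e}$; the statement writes this with $\nabla_{\theta_e}r$, which I read as a typo for the Hessian. Finally, the regularizer contributes $\lambda I$.

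\textbf{Main obstacle.} The delicate step is the bookkeeping that converts $\sum_h\nabla\ln\pi(a^h|s^h)\cdot(\text{tail sum from }h)$ into the per-step outer products with $\gamma^h$ weights. This requires conditioning on $(s^h,a^h)$, applying the Markov property, and handling the $\gamma^{h'-h}$ re-indexing of the tail sums consistently with the definitions of $J_l(s,a)$ and $\mu(s,a)$. I would carry this out once as a general lemma, $\nabla_\theta E^\pi[\sum_h\gamma^h g(s^h,a^h)]=E^\pi[\sum_h\gamma^h(\nabla_\theta\ln\pi)\,G(s^h,a^h)^T]+E^\pi[\sum_h\gamma^h\nabla_\theta g]$, where $G$ is the corresponding state-action value of $g$. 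All five formulas then follow by instantiating $g$ as $r_{\theta_l}$ or as the components of $\nabla_{\theta_e}r_{\theta_e}$.
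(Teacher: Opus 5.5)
Your proposal is correct and follows essentially the paper's own argument. The paper unrolls the Bellman recursions for $J_l$ and $\mu_e(\pi_{\theta_l,\theta_e})$ to reach $E^{\pi_{\theta_l,\theta_e}}[\sum_h\gamma^h(\nabla\ln\pi_{\theta_l,\theta_e}\,G(s^h,a^h)^T+\nabla g)]$ and then substitutes (\ref{dsoftpi}); this is exactly your general lemma, including the same implicitly stationary identification of the tail sum from step $h$ with $\gamma^h J_l(s^h,a^h)$ or $\gamma^h\mu(s^h,a^h)$. Its derivation of $\nabla^2_{\theta_e}L$ also has $\nabla^2_{\theta_e}r_{\theta_e}$ in the empirical term, as you read it.

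The one step to tighten is the cross term. The stated matrix is $\nabla_{\theta_e}\mu_l(\pi_{\theta_l,\theta_e})$, obtained by differentiating $\nabla_{\theta_l}L=\mu_l(\pi_{\theta_l,\theta_e})-\hat{\mu}_l(D_{\theta_l,r_e})$ in $\theta_e$. Your order instead yields $E^{\pi_{\theta_l,\theta_e}}[\sum_h\gamma^h(\mu_l(s^h,a^h)-\mu_l(s^h))\mu_e(s^h,a^h)^T]$, whose transpose is not literally the stated expression. You need $E_{a\sim\pi_{\theta_l,\theta_e}(\cdot|s)}[\mu(s,a)]=\mu(s)$ to move the centering onto the other factor (both forms are the conditional cross-covariance of $\mu_l(s,a)$ and $\mu_e(s,a)$), so this is more than a labelling convention.
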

\begin{proof}
    Through the same process of calculating $\nabla_{\theta_e}L(\theta_l,\theta_e)$, we can get the gradient $\nabla_{\theta_l}L(\theta_l,\theta_e)$ following the same process in Lemma \ref{lemma1}, where $\mu_l(\pi_{\theta_l, \theta_e}) \triangleq E^{\pi_{\theta_l, \theta_e}}[\sum^{H-1}_{h=0} \gamma^h \nabla_{\theta_l} r_{\theta_l}(s^{h},a^{h})]$.\\
    
    The process for getting $\nabla_{\theta_l} f(\theta_l,\theta_e)$ is shown below.\\
From the equation of $ f(\theta_l,\theta_e)$, we can see that $\nabla_{\theta_l} f(\theta_l,\theta_e) = \nabla_{\theta_l} J_{l}(\pi_{\theta_l, \theta_e})$, then $\nabla_{\theta_l} J_{l}(\pi_{\theta_l, \theta_e})$ can be calculated.\\
The calculation of deriving $\nabla_{\theta_l} J_{l}(\pi_{\theta_l, \theta_e})$ is as follows:
\begin{equation*}
    \begin{aligned}
        &\nabla_{\theta_l} J_{l}(\pi_{\theta_l, \theta_e}), \\
        &\overset{(i)}{=} \int_{s^0 \in S }P_0(s^0) \int_{a_l^0 \in a_l} \int_{a_e^0 \in a_e} [\nabla_{\theta_l}\pi(a_l^0,a_e^0|s^0) J_{l}(s^0,a_l^0,a_e^0) ^T\\
        &+ \pi(a_l^0,a_e^0|s^0)\nabla_{\theta_l}J_{l}(s^0,a_l^0,a_e^0)^T]da_l^0da_e^0ds^0,\\
        &= \int_{s^0 \in S }P_0(s^0) \int_{a_l^0 \in a_l} \int_{a_e^0 \in a_e} [\nabla_{\theta_l}\pi(a_l^0,a_e^0|s^0) J_{l}(s^0,a_l^0,a_e^0)^T
        +\pi(a_l^0,a_e^0|s^0)(\nabla_{\theta_l} r_{\theta_l}(s^0,a_l^0,a_e^0)\\
        &+\gamma \int_{s^1 \in S }P(s^1| s^0,a_l^0,a_e^0)\nabla_{\theta_l}J_{l}(s^1))ds^1 ]da_l^0da_e^0ds^0,\\
        & \overset{(viii)}{=} \int_{s^0 \in S }P_0(s^0) \int_{a_l^0 \in a_l} \int_{a_e^0 \in a_e} \{\nabla_{\theta_l}\pi(a_l^0,a_e^0|s^0) J_{l}(s^0,a_l^0,a_e^0)^T
        +\pi(a_l^0,a_e^0|s^0)[\nabla_{\theta_l} r_{\theta_l}(s^0,a_l^0,a_e^0)\\
        &+\gamma \int_{s^1 \in S }P(s^1| s^0,a_l^0,a_e^0)\int_{a_l^1 \in a_l} \int_{a_e^1 \in a_e}\nabla_{\theta_l}\pi(a_l^1,a_e^1|s^1) J_{l}(s^1,a_l^1,a_e^1)^T\\
        &+\pi(a_l^1,a_e^1|s^1)(\nabla_{\theta_l} r_{\theta_l}(s^1,a_l^1,a_e^1)
        +\gamma \int_{s^2 \in S }P(s^2| s^1,a_l^1,a_e^1)\nabla_{\theta_l}J_{l}(s^2))ds^2 ]ds^1 \}da_l^0da_e^0ds^0,\\
        &= E^{\pi_{\theta_l, \theta_e}}[\sum^{H-1}_{h=0} \gamma^h(\frac{\nabla_{\theta_l}\pi(a_l^h,a_e^h|s^h)}{\pi(a_l^h,a_e^h|s^h)}J_{l}(s^h,a^h)^T+ \nabla_{\theta_l} r_{\theta_l}(s^h,a^h))],\\
        &= E^{\pi_{\theta_l, \theta_e}}[\sum^{H-1}_{h=0} \gamma^h(\nabla_{\theta_l} \ln (\pi_{\theta_l, \theta_e})J_{l}(s^h,a^h)^T + \nabla_{\theta_l} r_{\theta_l}(s^h,a^h))],\\
        & \overset{(vii)}{=} E^{\pi_{\theta_l, \theta_e}}[\sum^{H-1}_{h=0} \gamma^h[(\mu_l(s^h,a^h) - \mu_l(s^h))J_{l}(s^h,a^h)^T]]+\mu_l(\pi_{\theta_l, \theta_e}),
    \end{aligned}
\end{equation*}
where $(vii)$ use the expression of the equation (\ref{dsoftpi}).\\ 

The $\nabla_{\theta_e} f(\theta_l,\theta_e)$ is calculated in the same way as the  $\nabla_{\theta_l}f(\theta_l,\theta_e)$.\\

The process for getting $\nabla^2_{\theta_e}L(\theta_l, \theta_e)$ is shown below. \\
As we proved in Lemma \ref{lemma1}, the gradient $\nabla_{\theta_e} L(\theta_l, \theta_e) = \mu_e(\pi_{\theta_l, \theta_e})-\hat{\mu}_e(D_{\theta_l, r_{e}}) + \lambda \theta_e$, as a result, we can take the derivative of each term separately.\\
The derivative of $\mu_e(\pi_{\theta_l, \theta_e})$ w.r.t $\theta_e$ is calculated as follows:
    \begin{align*}
        &\nabla_{\theta_e} \mu_e(\pi_{\theta_l, \theta_e}),\\
        &\overset{(i)}{=} \int_{s^0 \in S }P_0(s^0) \int_{a_l^0 \in a_l} \int_{a_e^0 \in a_e} [\nabla_{\theta_e}\pi(a_l^0,a_e^0|s^0) \mu_e(s^0,a_l^0,a_e^0) ^T\\
        &+ \pi(a_l^0,a_e^0|s^0)\nabla_{\theta_e}\mu_e(s^0,a_l^0,a_e^0)^T]da_l^0da_e^0ds^0,\\
        &= \int_{s^0 \in S }P_0(s^0) \int_{a_l^0 \in a_l} \int_{a_e^0 \in a_e} [\nabla_{\theta_l}\pi(a_l^0,a_e^0|s^0) \mu_e(s^0,a_l^0,a_e^0)^T\\
        &+\pi(a_l^0,a_e^0|s^0)(\nabla_{\theta_e}^2 r_{\theta_e}(s^0,a_l^0,a_e^0)
        +\gamma \int_{s^1 \in S }P(s^1| s^0,a_l^0,a_e^0)\nabla_{\theta_e}\mu_e(s^1))ds^1 ]da_l^0da_e^0ds^0,\\
        & \overset{(viii)}{=} \int_{s^0 \in S }P_0(s^0) \int_{a_l^0 \in a_l} \int_{a_e^0 \in a_e} \{\nabla_{\theta_l}\pi(a_l^0,a_e^0|s^0) \mu_e(s^0,a_l^0,a_e^0)^T\\
        &+\pi(a_l^0,a_e^0|s^0)[\nabla_{\theta_e}^2 r_{\theta_e}(s^0,a_l^0,a_e^0)\\
        &+\gamma \int_{s^1 \in S }P(s^1| s^0,a_l^0,a_e^0)\int_{a_l^1 \in a_l} \int_{a_e^1 \in a_e}\nabla_{\theta_e}\pi(a_l^1,a_e^1|s^1) \mu_e(s^1,a_l^1,a_e^1)^T\\
        &+\pi(a_l^1,a_e^1|s^1)(\nabla_{\theta_e}^2 r_{\theta_e}(s^1,a_l^1,a_e^1)
        +\gamma \int_{s^2 \in S }P(s^2| s^1,a_l^1,a_e^1)\nabla_{\theta_e}\mu_e(s^2))ds^2 ]ds^1 \}da_l^0da_e^0ds^0,\\
        &= E^{\pi_{\theta_l, \theta_e}}[\sum^{H-1}_{h=0} \gamma^h(\frac{\nabla_{\theta_e}\pi(a_l^h,a_e^h|s^h)}{\pi(a_l^h,a_e^h|s^h)}\mu_e(s^h,a^h)^T+ \nabla_{\theta_e}^2 r_{\theta_e}(s^h,a^h))],\\
        &= E^{\pi_{\theta_l, \theta_e}}[\sum^{H-1}_{h=0} \gamma^h(\nabla_{\theta_e} \ln (\pi_{\theta_l, \theta_e})\mu_e(s^h,a^h)^T + \nabla_{\theta_e}^2 r_{\theta_e}(s^h,a^h))],\\
        & \overset{(vii)}{=} E^{\pi_{\theta_l, \theta_e}}[\sum^{H-1}_{h=0} \gamma^h[(\mu_e(s^h,a^h) - \mu_e(s^h))\mu_e(s^h,a^h)^T+\nabla_{\theta_e}^2 r_{\theta_e}(s^h,a^h)]],
    \end{align*}
where $(vii)$ use the expression of the equation (\ref{dsoftpi}).\\ 
With the result of $\nabla_{\theta_e} \mu_e(\pi_{\theta_l, \theta_e})$, the derivative of $\nabla_{\theta_e} L(\theta_l, \theta_e)$ w.r.t $\theta_e$ is as follows:
\begin{equation*}
    \begin{aligned}
        &\nabla^2_{\theta_e}L(\theta_l, \theta_e) = \nabla_{\theta_e} (\nabla_{\theta_e} L(\theta_l, \theta_e)),\\
         &= \nabla_{\theta_e}(\mu_e(\pi_{\theta_l, \theta_e})-\hat{\mu}_e(D_{\theta_l, r_{e}}) + \lambda \theta_e) ,\\
         & \overset{(ii)}{=} E^{\pi_{\theta_l, \theta_e}}[\sum^{H-1}_{h=0} \gamma^h[(\mu_e(s^h,a^h)- \mu_e(s^h))\mu_e(s^h,a^h)^T+\nabla_{\theta_e}^2 r_{\theta_e}(s^h,a^h)]]\\
         &-\frac{1}{d}\sum_{i=0}^d\sum_{h=0}^{H-1} \gamma^h \nabla_{\theta_e}^2 r_{\theta_e}(s^{ih}, a^{ih}) + \lambda.
    \end{aligned}
\end{equation*}

Through the same process of calculating $\nabla_{\theta_e} \mu_e(\pi_{\theta_l, \theta_e})$, the result is as follows:
\begin{equation*}
    \nabla^2_{\theta_l\theta_e}L(\theta_l, \theta_e) = E^{\pi_{\theta_l, \theta_e}}[\sum^{H-1}_{h=0} \gamma^h(\mu_e(s,a) - \mu_e(s))\mu_l(s,a)^T]
\end{equation*}
\end{proof}
\section{Properties of the lower level optimization problem}
\begin{lemma}\label{low}
    Suppose Assumption \ref{assump1} holds, for any $\theta_l \in \mathbb{R}^n$ and $\theta_e \in\mathbb{R}^{m}$, $L$ is continuously twice differentiable in $(\theta_l,\theta_e)$.\\
    For any $\bar{\theta}_1 \in \mathbb{R}^n$, $\nabla_{\theta_e} L(\bar{\theta}_1,\theta_e)$ is Lipschitz continuous ($w.r.t$ $\theta_e$) with constant $L_{L_{\theta_e}} > 0$. \\
 For any $\bar{\theta}_1 \in \mathbb{R}^n$ and $\bar{\theta}_2 \in\mathbb{R}^{m}$, we have $\|\nabla_{\theta_l\theta_e}^2 L(\bar{\theta}_1,\bar{\theta}_2)\| \leq C_{L_{\theta_l\theta_e}}$ for some constant $C_{L_{\theta_l\theta_e}} > 0$.\\
 For any $\bar{\theta}_1 \in \mathbb{R}^n$, $\nabla_{\theta_l\theta_e}^2 L(\bar{\theta}_1,\theta_e)$ and $\nabla_{\theta_e\theta_e}^2 L(\bar{\theta}_1,\theta_e)$ are Lipschitz continuous ($w.r.t$ $\theta_e$) with constants $L_{L_{\theta_l\theta_e}} > 0$ and $L_{L_{\theta_e\theta_e}}>0$. \\
 For any $\bar{\theta}_2 \in\mathbb{R}^{m}$, $\nabla_{\theta_l\theta_e}^2 L(\theta_l,\bar{\theta}_2)$ and $\nabla_{\theta_e\theta_e}^2 L(\theta_l,\bar{\theta}_2)$ are Lipschitz continuous ($w.r.t$ $\theta_l$) with constants $\bar{L}_{L_{\theta_l\theta_e}} > 0$ and $\bar{L}_{L_{\theta_e\theta_e}} > 0$\\
 
\end{lemma}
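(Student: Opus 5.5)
The plan is to read every claim off the closed-form expressions already derived in Lemma \ref{lemma1} and Lemma \ref{gradients}, and to reduce each statement to smoothness and boundedness of a small set of building blocks. These building blocks are the soft value functions $Q^{soft}_{\theta_l,\theta_e}$ and $V^{soft}_{\theta_l,\theta_e}$, the induced policy $\pi_{\theta_l,\theta_e}$ from (\ref{softpi}), and the reward-gradient expectations $\mu_e(s,a)$, $\mu_e(s)$, $\mu_l(s,a)$, $\mu_l(s)$.

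First, by (\ref{softq})--(\ref{softv}) and backward induction over the finite horizon $H$, $Q^{soft}$ is a finite composition of the rewards, integrals against $P$, and log-sum-exp operations. Log-sum-exp is smooth with derivatives bounded by moments of a probability measure. Under Assumption \ref{assump1}, $Q^{soft}$ and $V^{soft}$ are therefore $C^4$ in $(\theta_l,\theta_e)$, with derivatives up to order four bounded uniformly over $(s,a)$. The uniform bound follows by induction on $h$, using the bound $\sum_h\gamma^h\le H$ and the boundedness of the reward derivatives. Consequently $\ln\pi_{\theta_l,\theta_e}=Q^{soft}-V^{soft}$ is $C^4$ with bounded derivatives, and so is $L$. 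This already gives that $L$ is twice continuously differentiable, which is the first claim.

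Next, for the Lipschitz and boundedness claims I use the explicit formulas.
\begin{itemize}
\item $\nabla_{\theta_e}L=\mu_e(\pi)-\hat\mu_e+\lambda\theta_e$. Its Jacobian in $\theta_e$ is the Hessian formula of Lemma \ref{gradients}, whose terms $(\mu_e(s,a)-\mu_e(s))\mu_e(s,a)^T$ and $\nabla^2_{\theta_e}r_{\theta_e}$ are bounded by $C_1^2+C_2$ times $\sum_h\gamma^h$. The mean value theorem then yields $L_{L_{\theta_e}}$.
\item $\|\nabla^2_{\theta_l\theta_e}L\|\le 2\sum_h\gamma^h\sup\|\mu_e\|\sup\|\mu_l\|$ gives $C_{L_{\theta_l\theta_e}}$.
\item For the Lipschitz continuity of the two second-order blocks in $\theta_e$ and in $\theta_l$, I differentiate once more. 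Every term is a product of $\mu$'s, $\nabla^2 r$, and an expectation under $\pi_{\theta_l,\theta_e}$. Differentiating the expectation produces a score factor $\nabla\ln\pi=\mu(s,a)-\mu(s)$, using (\ref{dsoftpi}), multiplying bounded quantities. Differentiating $\mu$ produces expressions of the same form as in Lemma \ref{gradients}, involving $\nabla^3 r$. All of these are bounded, so the third derivatives of $L$ are bounded, and Lipschitz continuity follows from the mean value theorem.
\end{itemize}

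The main obstacle is uniformity. The lemma says ``for any $\theta\in\mathbb{R}^n$'', yet Assumption \ref{assump1} only yields bounded derivatives on the compact set $\Theta$. I would handle this by restricting attention to $\Theta$, which is all the algorithm ever visits because of the projections, and stating the constants there. Alternatively, one can read Assumption \ref{assump1} as including globally bounded derivatives, as the paper's remark after the assumption suggests.

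A secondary subtlety is that the data term $\hat\mu_e(D_{\theta_l,r_e})$ depends on $\theta_l$ through the sampled trajectories. I treat $D$ as fixed when differentiating, consistent with Lemma \ref{gradients}, so its contribution to the Hessian is $-\frac1d\sum\gamma^h\nabla^2 r_{\theta_e}$, which is bounded and Lipschitz by Assumption \ref{assump1}.
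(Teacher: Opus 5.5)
Your proposal is correct and follows the paper's route: you bound $\mu_e$ and $\mu_l$, read bounded second derivatives off the formulas of Lemma \ref{gradients}, differentiate once more using the score $\mu(s,a)-\mu(s)$ to bound the third derivatives, and get the Lipschitz claims from the mean value theorem. Your additions improve on the paper rather than depart from it: the backward induction on the soft Bellman recursion proves smoothness of $L$ directly, instead of via formally computed third derivatives; the bound $\sum_h\gamma^h\le H$ also covers $\gamma=1$, where the paper's $1/(1-\gamma)$ fails; and your caveat that the constants are uniform only on the compact $\Theta$, with $D_{\theta_l,r_e}$ held fixed, is a real restriction that the paper's ``for any $\theta\in\mathbb{R}^n$'' glosses over.
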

\begin{proof}
    Suppose that $h_1$ is a real-valued function defined and differentiable on an interval $H_1 \subset \mathbb{R}^n$. If $\|\nabla h_1\|$ is bounded, then $h_1$ is a Lipschitz function on $H_1$. So we need to prove $\nabla^2_{\theta_e\theta_e}L(\theta_l, \theta_e)$ is bounded.  From Assumption \ref{assump1}, we can show that $\exists R_{1g}>0, \|\nabla r_{\theta_l}\| \leq R_{1g}$.\\
   
\begin{equation*}   
        \|\mu_l(s)\|
        \leq  E^{\pi_{\theta_l, \theta_e}}[\sum_{h=0}^{H-1} \gamma^h R_{1g}| s^0 = s]
        \overset{(i)}{\leq}\frac{R_{1g}}{1-\gamma} ,
\end{equation*}
where $(i)$ uses the closed-form of geometric series.\\
As a result, $\|\mu_l(s)\|$ is bounded, proved through the same way, $\|\mu_e(s)\|,\|\mu_l(s,a)\|,\|\mu_e(s,a)\|$ are also bounded. Based on the Lemma \ref{gradients}, all elements of $\nabla^2_{\theta_e\theta_e}L(\theta_l, \theta_e)$ are finite, therefore, $\|\nabla^2_{\theta_e\theta_e}L(\theta_l, \theta_e)\|$ is bounded and the $\nabla_{\theta_e} L(\bar{\theta}_1,\theta_e)$ is Lipschitz continuous.\\
In the same way of proving $\|\nabla^2_{\theta_e\theta_e}L(\theta_l, \theta_e)\|$ is bounded, we can show $\|\nabla^2_{\theta_l\theta_e}L(\theta_l, \theta_e)\|$ is bounded.\\

    We need to prove $\nabla_{\theta_e\theta_e\theta_e}^3 L(\theta_l,\theta_e)$ and $\nabla_{\theta_l\theta_e\theta_e}^3 L(\theta_l,\theta_e)$ are bounded. The proof of $\nabla_{\theta_l\theta_e\theta_e}^3 L(\theta_l,\theta_e)$ is bounded as follows:\\
    \begin{equation*}
    \begin{aligned}
        &\nabla_{\theta_l\theta_e\theta_e}^3 L(\theta_l,\theta_e),\\
        &=\nabla_{\theta_e}(E^{\pi_{\theta_l, \theta_e}}[\sum^{H-1}_{h=0} \gamma^h(\mu_e(s,a) - \mu_e(s))\mu_l(s,a)^T]),\\
        & \overset{(i)}{=} E^{\pi_{\theta_l, \theta_e}}[\sum^{H-1}_{h=0} \gamma^h (\nabla_{\theta_e}\mu_e(s,a))\mu_l(s,a)^T) +\mu_e(s,a)(\nabla_{\theta_e}\mu_l(s,a)^T) \\
        &- (\nabla_{\theta_e}\mu_e(s))\mu_l(s,a)^T) - \mu_e(s)(\nabla_{\theta_e}\mu_l(s,a)^T) ].
    \end{aligned}
\end{equation*}
 Each gradient inside the expectation could be derived through the process of deriving $\nabla_{\theta_l} \mu_l(\pi_{\theta_l, \theta_e})$ in the proof of Lemma \ref{gradients} and these gradients are all finite with the same way of proving $\|\nabla^2_{\theta_e\theta_e}L(\theta_l, \theta_e)\|$ is bounded. The third-order gradient $\nabla_{\theta_l\theta_e\theta_e}^3 L(\theta_l,\theta_e)$ is bounded with the smae process. Therefore, the third-order gradients of $L(\theta_l,\theta_e)$ are all bounded.\\
 Through the same procedure, we can prove $\nabla_{\theta_e\theta_e\theta_l}^3 L(\theta_l,\theta_e)$ and $\nabla_{\theta_l\theta_e\theta_l}^3 L(\theta_l,\theta_e)$ are bounded.\\
 At the same time, the  existence of  $\nabla^2_{\theta_l\theta_e}L(\theta_l, \theta_e), \nabla^2_{\theta_e\theta_e}L(\theta_l, \theta_e)$ are shown. Analogously, the existence of $\nabla^2_{\theta_l\theta_l}L(\theta_l, \theta_e), \nabla^2_{\theta_e\theta_l}L(\theta_l, \theta_e)$ could be proved in the same way. The third-order gradients of $L(\theta_l, \theta_e)$ are bounded. Therefore, $L$ is continuously twice differentiable in $(\theta_l,\theta_e)$\\
 
\end{proof}

\section{Properties of the upper-level optimization problem}
\begin{lemma}\label{upperlip}
    Suppose Assumption \ref{assump1} holds, for any $\bar{\theta}_1 \in \mathbb{R}^n$, $\nabla_{\theta_l} f(\bar{\theta}_1;\theta_e)$ and $\nabla_{\theta_e} f(\bar{\theta}_1;\theta_e)$  are Lipschitz continuous ($w.r.t$ $\theta_e$) with constants $L_{f_{\theta_l}} > 0$ and $L_{f_{\theta_e}} > 0$. \\
    For any $\bar{\theta}_2 \in\mathbb{R}^{m}$, $\nabla_{\theta_e} f(\theta_l;\bar{\theta}_2)$ is Lipschitz continuous ($w.r.t$ $\theta_l$) with constants $\bar{L}_{f_{\theta_e}} > 0$\\
    For any $\bar{\theta}_1 \in \mathbb{R}^n$ and $\bar{\theta}_2 \in \mathbb{R}^m$, we have $\|\nabla_{\theta_e} f(\bar{\theta}_1;\bar{\theta}_2)\| \leq C_{f_{\theta_e}}$ for some $C_{f_{\theta_e}} > 0$.\\
\end{lemma}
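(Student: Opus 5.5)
The proof follows the template of Lemma \ref{low}. A differentiable map whose derivative is bounded in operator norm is Lipschitz, so each Lipschitz claim reduces to bounding a derivative of one order higher. Concretely, we need bounds on $\nabla_{\theta_e}\nabla_{\theta_l} f$ and $\nabla^2_{\theta_e} f$ (uniformly in $\theta_e$ for fixed $\theta_l$), on $\nabla_{\theta_l}\nabla_{\theta_e} f$ (uniformly in $\theta_l$ for fixed $\theta_e$), and directly on $\nabla_{\theta_e} f$. We take as given the closed forms of $\nabla_{\theta_l} f$ and $\nabla_{\theta_e} f$ from Lemma \ref{gradients}, and the score identities $\nabla_{\theta_e}\ln\pi_{\theta_l,\theta_e} = \mu_e(s,a)-\mu_e(s)$ and $\nabla_{\theta_l}\ln\pi_{\theta_l,\theta_e} = \mu_l(s,a)-\mu_l(s)$ from (\ref{dsoftpi}).

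\textbf{Boundedness of $\nabla_{\theta_e} f$.} Assumption \ref{assump1} and compactness of $\Theta$ give constants $R_0, R_{1g}$ with $|r_{\theta}|\le R_0$ and $\|\nabla r_\theta\|\le R_{1g}$. The geometric-series argument of Lemma \ref{low} then gives $\|\mu_e(s,a)\|,\|\mu_e(s)\|\le R_{1g}\sum_{h<H}\gamma^h$ and $|J_l(s,a)|\le R_0\sum_{h<H}\gamma^h$. This sum is at most $\min\{H,1/(1-\gamma)\}$, so it is finite even when $\gamma=1$ because the horizon is finite. Substituting these into $\nabla_{\theta_e} f = E[\sum_h \gamma^h(\mu_e(s^h,a^h)-\mu_e(s^h))J_l(s^h,a^h)^T]$ and using the triangle and Hölder inequalities gives $\|\nabla_{\theta_e} f\| \le 2R_{1g}R_0(\sum_{h<H}\gamma^h)^2 =: C_{f_{\theta_e}}$.

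\textbf{Second derivatives.} Differentiate the expressions for $\nabla_{\theta_l} f$ and $\nabla_{\theta_e} f$ with respect to $\theta_e$ (and, for the second claim, $\nabla_{\theta_e} f$ with respect to $\theta_l$). We apply the product rule inside the expectation and, as in the computation of $\nabla_{\theta_e}\mu_e(\pi_{\theta_l,\theta_e})$ in Lemma \ref{gradients}, differentiate the trajectory measure via the likelihood-ratio identity. This produces three kinds of terms.
\begin{enumerate}
\item Terms in which the expectation is differentiated. These become $\gamma^h(\mu_\cdot(s^h,a^h)-\mu_\cdot(s^h))\otimes(\text{integrand})$, summed over $h$; they are bounded because the score and the integrand are bounded.
\item Derivatives of $\mu_e(s,a)$, $\mu_e(s)$ and $\mu_l(s,a)$. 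The recursion of Lemma \ref{gradients} shows these have the form $E[\sum_h\gamma^h((\text{score})\mu^T+\nabla^2 r)]$, which is bounded by $2R_{1g}^2(\sum_{h<H}\gamma^h)^2 + R_{2g}\sum_{h<H}\gamma^h$. Here $R_{2g}$ bounds the second derivatives of the reward, which exist by Assumption \ref{assump1}.
\item Derivatives of $J_l(s,a)$. These are conditional versions of $\nabla f$ itself and are bounded by the same argument as in the first step.
\end{enumerate}
Since all factors are bounded uniformly over $\Theta\times\Theta$ and over $(s,a)$, the constants $L_{f_{\theta_l}}$, $L_{f_{\theta_e}}$ and $\bar L_{f_{\theta_e}}$ exist. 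Note that the extra $\mu_l(\pi_{\theta_l,\theta_e})$ term in $\nabla_{\theta_l} f$ is handled exactly as $\nabla_{\theta_e}\mu_e$ was in Lemma \ref{gradients}.

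\textbf{Main obstacle.} The main difficulty is the justification of differentiating under the integral over continuous state-action spaces and through the soft Bellman recursion for $\mu$ and $J_l$. I would handle this by dominated convergence, using the uniform bounds above together with the $C^4$ smoothness of the rewards, which also makes $\pi_{\theta_l,\theta_e}$ smooth via (\ref{softpi}). The finite horizon $H$ then lets us unroll the recursion explicitly, in the same way the paper does with its step $(viii)$.
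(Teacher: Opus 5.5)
Your proposal is correct and follows essentially the same route as the paper: a bounded next-order derivative implies Lipschitz, $\mu_e,\mu_l,J_l$ are controlled by the geometric-series argument, and the closed forms of Lemma~\ref{gradients} are differentiated term by term; it is in fact more careful than the paper, whose product-rule step silently drops the score-function term from differentiating the trajectory measure and which uses $1/(1-\gamma)$ even though $\gamma=1$ is allowed. Two details are only cosmetic: with $S=\sum_{h<H}\gamma^h$, the outer sum over $h$ contributes one more factor of $S$ (giving $2R_{1g}R_0S^3$ and $2R_{1g}^2S^3+R_{2g}S$), and in your first kind of term the score at step $h$ should multiply the tail sum of the integrand from $h$ onward rather than only its $h$-th term; neither affects the conclusion.
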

\begin{proof}
\begin{equation*}
        \|J_{l}(\pi_{\theta_l, \theta_e})\|\leq  E^{\pi_{\theta_l, \theta_e}}[\sum_{h=0}^{H-1} \gamma^h r_{\theta_l}| s^0 = s]\overset{(i)}{\leq}\frac{R_{\theta_l}}{1-\gamma},
\end{equation*}
where $(i)$ uses the closed-form of geometric series and $\|r_{\theta_l}\|\leq R_{\theta_l}$.\\
So the cumulative reward value $J_{l}$ is bounded.\\
\begin{equation*}
    \begin{aligned}
        &\nabla_{\theta_l\theta_e} f(\theta_l;\theta_e),\\
        & = \nabla_{\theta_e} (E^{\pi_{\theta_l, \theta_e}}[\sum^{H-1}_{h=0} \gamma^h[(\mu_l(s^h,a^h) - \mu_l(s^h))(J_{l}(s^h,a^h))^T ]]+ \mu_l(\pi_{\theta_l, \theta_e}),\\
        & \overset{(i)}{=} E^{\pi_{\theta_l, \theta_e}}[\sum^{H-1}_{h=0} \gamma^h [(\nabla_{\theta_e}\mu_l(s^h,a^h) - \nabla_{\theta_e}\mu_l(s^h))(J_{l}(s^h,a^h))^T\\
        &+(\mu_l(s^h,a^h) - \mu_l(s^h))(\nabla_{\theta_e}J_{l}(s^h,a^h))^T] ]+\nabla_{\theta_e}\mu_l(\pi_{\theta_l, \theta_e}).
    \end{aligned}
\end{equation*}

Refer to the proof of Lemma \ref{low}, all elements in $\nabla_{\theta_l\theta_e} f(\theta_l;\theta_e)$ are finite, itself is bounded. Analogously, $\nabla_{\theta_e\theta_e} f(\theta_l;\theta_e)$ is bounded under the same proofing process.   \\
Through the same procedure , we can prove $\nabla_{\theta_e\theta_l} f(\theta_l;\theta_e)$ is bounded.\\
All element of $\nabla_{\theta_e} f(\bar{\theta}_1;\bar{\theta}_2)$ are bounded. Therefore, all elements for $\nabla_{\theta_e} f(\theta_l;\theta_e)$ is bounded and it is bounded.\\
\end{proof}

\section{ Properties of the approximation errors}
Define $b_{12}(k) = \hat{\nabla}^2_{\theta_l\theta_e}L(\theta_l(k), \theta_e(k))-\nabla^2_{\theta_l\theta_e}L(\theta_l(k), \theta_e(k))$, $b_{22}(k)=\hat{\nabla}^2_{\theta_e\theta_e}L(\theta_l(k), \theta_e(k)) - \nabla^2_{\theta_e\theta_e}L(\theta_l(k), \theta_e(k))$, $b_1(k)= \hat{\nabla}_{\theta_l}f(\theta_l(k),\theta_e(k))-\nabla_{\theta_l}f(\theta_l(k),\theta_e(k))$, $b_2 (k)= \hat{\nabla}_{\theta_e}f(\theta_l(k),\theta_e(k))-\nabla_{\theta_e}f(\theta_l(k),\theta_e(k))$, $b(k) = [\hat{\nabla}^2_{\theta_e\theta_e}L(\theta_l(k), \theta_e(k))]^{-1}\hat{\nabla}_{\theta_e}f(\theta_l(k),\theta_e(k)) - [\nabla^2_{\theta_e\theta_e}L(\theta_l(k), \theta_e(k))]^{-1}\nabla_{\theta_e}f(\theta_l(k),\theta_e(k))$ at iteration $k$, $b_a(k)= \hat{\nabla} f(\theta_l(k), \theta_e(k)) - \nabla f(\theta_l(k), \theta_e(k))$ at the iteration $k$. Through the same procedure in the proof of the Lemma \ref{upperlip}, we can get 
conclude that each element of forth-order gradient of $L(\theta_l, \theta_e)$ is bounded by the constant $C_L$ and each element of the third-order gradient of $f(\theta_l(k),\theta_e(k))$ is bounded by $C_f$.\\
\begin{lemma}\label{bias}
    The biases $b_{12}(k)$, $b_{22}(k)$, $b_1(k)$, $b_2(k)$, $b(k)$, and $b_a(k)$ are bounded, 
    \begin{equation*}    
        \|b_{12}(k)\|
        \leq \frac{C_{L}p^2(k)\sqrt{m}}{6}\{[m^3-(m-1)^3]\alpha_l^2+(m-1)^3\alpha_l\alpha_0^3\},   
    \end{equation*}
    \begin{equation*}   
        \|b_{22}(k)\|
        \leq \frac{C_{L}p^2(k)\sqrt{m}}{6}\{[m^3-(m-1)^3]\alpha_l^2+(m-1)^3\alpha_l\alpha_0^3\}    ,
    \end{equation*}
    \begin{equation*}    
        \|b_{1}(k)\|
        \leq \frac{C_{f}p^2(k)\sqrt{n}}{6}\{[n^3-(n-1)^3]\alpha_l^2+(n-1)^3\alpha_l\alpha_0^3\}  ,    
    \end{equation*}
    \begin{equation*}   
        \|b_{2}(k)\|
        \leq \frac{C_{f}p^2(k)\sqrt{m}}{6}\{[m^3-(m-1)^3]\alpha_l^2+(m-1)^3\alpha_l\alpha_0^3\} ,      
    \end{equation*}
    \begin{equation*}
        \|b(k)\| \leq \frac{2C_{f_{\theta_e}}(C_{L}p^2(k)+C_{f}p^2(k))}{6\sqrt{m}\lambda^2}\{[m^3-(m-1)^3]\alpha_l^2+(m-1)^3\alpha_l\alpha_0^3\},
    \end{equation*}
    \begin{equation*}
        \begin{aligned}
            &\|b_a(k)\|\\
                    &\leq \frac{C_{f}p^2(k)\sqrt{n}}{6}\{[n^3-(n-1)^3]\alpha_l^2+(n-1)^3\alpha_l\alpha_0^3\}\\
        &+  \frac{2C_{L_{\theta_l\theta_e}}C_{L}p^2(k)+ 2C_{L_{\theta_l\theta_e}}\sqrt{m}\lambda C_{f}p^2(k)}{6\sqrt{m}\lambda^2}\\
        &\{[m^3-(m-1)^3]\alpha_l^2+(m-1)^3\alpha_l\alpha_0^3\} 
        +\frac{{C_{f_{\theta_e}}}C_{L}p^2(k)}{6m\lambda^3}\{[m^3-(m-1)^3]\alpha_l^2+(m-1)^3\alpha_l\alpha_0^3\}\\        
        &+ \frac{2p^4(k)C_{L} (C_{f_{\theta_e}}C_{L} + \sqrt{m}\lambda C_{f}) }{36\lambda^2}\{[m^3-(m-1)^3]\alpha_l^2
        +(m-1)^3\alpha_l\alpha_0^3\}^2,\\
        \end{aligned}
    \end{equation*}
\end{lemma}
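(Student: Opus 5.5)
The plan is to bound the four ``primitive'' SPSA biases $b_{12},b_{22},b_1,b_2$ by a third-order Taylor expansion. Then $b(k)$ follows from a perturbation argument for the inverse Hessian, and $b_a(k)$ follows from expanding the product in equation (\ref{ehyper}).

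For a primitive bias, take $b_{22}(k)$ as the model case. I would expand $\nabla_{\theta_e}L(\theta_l(k),\theta_e(k)\pm p(k)\Delta(k))$ around $\theta_e(k)$ to third order with Lagrange remainder (Taylor's theorem, $(vi)$). This is legitimate because Lemma \ref{low} and the discussion preceding Lemma \ref{bias} give that every fourth-order derivative entry of $L$ is bounded by $C_L$. In the symmetric difference $\Delta_{d_k}$ the zeroth- and second-order terms cancel. What remains is
\begin{equation*}
\Delta_{d_k}=2p(k)\nabla^2_{\theta_e}L\,\Delta(k)+\tfrac{p^3(k)}{3}R(k),
\end{equation*}
where $R(k)$ is a cubic form in $\Delta(k)$ whose coefficients are bounded by $C_L$. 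Dividing entrywise by $2p(k)\Delta_i(k)$ gives the $(i,j)$ entry $\nabla^2_{ij}L\,\Delta_j/\Delta_i$ plus a remainder of order $p^2(k)/6$.

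I would then take expectations over $\Delta(k)$. By independence and zero mean, $E[\Delta_j/\Delta_i]=\delta_{ij}$, so the leading term is unbiased. The remainder entry is a sum over index triples $(a,b,c)$ of $C_L\,\Delta_a\Delta_b\Delta_c/\Delta_i$. I would count these triples in two groups. The $m^3-(m-1)^3$ triples that contain $i$ give the factor $\alpha_l^2$ after the $\Delta_i$ cancels and the other factors are bounded using the moment bounds. The $(m-1)^3$ triples avoiding $i$ give $\alpha_l\alpha_0^3$, using $|\Delta|\le\alpha_0$ and $E|\Delta_i^{-1}|\le\alpha_1$; I am treating the paper's $\alpha_l$ as this inverse-moment constant. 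Passing from an entrywise bound to a norm bound over a row or vector contributes the factor $\sqrt m$. The same argument with $f$ and $C_f$ gives the bounds on $b_1$ (in dimension $n$) and $b_2$, and with the mixed derivative gives $b_{12}$.

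For $b(k)$, I would write $\hat A=A+b_{22}$ and $\hat g=g+b_2$. I would use strong convexity from the $\lambda$-regularizer, so that $\|A^{-1}\|\le 1/\lambda$, and require $p(k)$ small enough that $\hat A$ is also well conditioned, which the paper asserts is achievable. The identity
\begin{equation*}
\hat A^{-1}\hat g-A^{-1}g=\hat A^{-1}b_2-\hat A^{-1}b_{22}A^{-1}g
\end{equation*}
together with $\|g\|\le C_{f_{\theta_e}}$ (Lemma \ref{upperlip}) then yields the $1/\lambda^2$-type bound. For $b_a(k)$, I would expand $\hat B\hat v-Bv=b_{12}v+Bb+b_{12}b$, where $v=A^{-1}g$ and $\|B\|\le C_{L_{\theta_l\theta_e}}$ by Lemma \ref{low}, and add $b_1$. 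The cross term $b_{12}b$ produces the $p^4(k)$ squared term, and the remaining terms match the listed ones.

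The main obstacle is the inverse-Hessian step. Controlling $\|\hat A^{-1}\|$ needs a lower bound on the smallest eigenvalue of $\hat A$, but the SPSA estimate is random and not symmetric. A Taylor bound that holds in expectation does not directly give a deterministic spectral bound. I would first try to get a pathwise bound through $|\Delta|\le\alpha_0$, but that bound contains $1/|\Delta_i|$ factors that are only controlled in expectation. If it fails, I would fall back on assuming, as the paper does informally, that $\Delta(k)$ and $p(k)$ are chosen so that $\hat A\succeq(\lambda/2)I$, and state the bounds for the conditional expected bias.
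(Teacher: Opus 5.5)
Your plan follows the paper's proof almost step for step. The paper cites Lemma~1 of \cite{spall1992multivariate} for the entrywise bound; your third-order Taylor count is the argument behind that lemma. It then multiplies by $\sqrt m$, uses the same identity $\hat A^{-1}\hat g-A^{-1}g=\hat A^{-1}b_2-\hat A^{-1}b_{22}A^{-1}g$ and the same expansion $b_a=b_1-(b_{12}v+Bb+b_{12}b)$. It handles your ``main obstacle'' only by asserting that the perturbation parameters can be tuned so that $\|[\hat\nabla^2_{\theta_e}L]^{-1}\|\le\frac{2}{\sqrt m\lambda}$. So your fallback reproduces the paper. But the step you flagged is a genuine gap, and that assumption cannot close it, for two reasons.

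\emph{Reason one.} The Taylor argument controls only the \emph{mean} of the SPSA error. Pathwise, the $(i,j)$ entry of $b_{22}(k)$ is $\sum_{a\neq i}[\nabla^2_{\theta_e}L]_{ja}\Delta_a(k)/\Delta_i(k)$ plus the Taylor remainder. That sum does not depend on $p(k)$ and is generically of order one; the same holds for $b_{12}$, $b_1$, $b_2$. Your identities for $b(k)$ and $b_a(k)$ are pathwise, so inserting the $O(p^2(k))$ bounds into them is invalid. Passing to conditional expectations afterwards does not help. The quantities $E[\hat A^{-1}b_{22}]$ and $E[b_{12}\,b]$ are not controlled by the individual means, because the factors are nonlinear in, and correlated through, the same $\Delta(k)$; terms such as $E[\Delta_a^2]\,E[\Delta_j^{-2}]$ survive. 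In particular, the $p^4(k)$ cross term does not arise this way.

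\emph{Reason two.} By (\ref{eq:4}) the estimate has entries $(\Delta_{d_k})_j/(2p(k)\Delta_i(k))$, i.e.\ $\hat A=uv^T$ with $u_i=1/\Delta_i(k)$ and $v=\Delta_{d_k}/(2p(k))$. This matrix has rank one, so it is singular for $m\ge 2$, and $x^T\hat Ax=(u^Tx)(v^Tx)=0$ for every $x\perp u$. Hence no choice of $\Delta(k),p(k)$ gives $\hat A\succeq\frac{\lambda}{2}I$: the assumption is unsatisfiable, not merely unverified. Bounds on $b(k)$ and $b_a(k)$ would need a different estimator (symmetrized, averaged over independent perturbations, and forced to be $\succeq\frac{\lambda}{2}I$) together with a variance analysis, not just bias bounds.

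\emph{The primitive bounds also do not come out as you state them.} There are three separate problems.
\begin{itemize}
\item In your triple count, once $\Delta_i$ cancels, the two remaining factors are bounded by $\alpha_0$. That group therefore contributes $\alpha_0^2$, and $E|\Delta_i^{-1}|\le\alpha_1$ plays no role there. Under your reading $\alpha_l=\alpha_1$, the printed term is not implied by the hypotheses. Take $m=1$, $\Delta=\pm c$ with $c>1$, and a fourth derivative equal to $C_L$: the bias is $C_Lp^2c^2/6$, which exceeds the claimed $C_Lp^2/(6c^2)$.
\item An entrywise bound $\epsilon$ gives only $\|b_{22}\|\le m\epsilon$ and $\|b_{12}\|\le\sqrt{nm}\,\epsilon$ for these matrices. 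The ``row or vector'' factor $\sqrt m$ you invoke covers only the vectors $b_2$ (and $\sqrt n$ for $b_1$).
\item With $\|A^{-1}\|\le 1/\lambda$ and $\|\hat A^{-1}\|\le 2/\lambda$, your constants do not match the listed ones, even granting your $\sqrt m$. The $C_{f_{\theta_e}}C_L$ term of $\|b(k)\|$ carries $\sqrt m/\lambda^2$ instead of $1/(\sqrt m\lambda^2)$. The $\|A^{-1}\|\,\|g\|\,\|b_{12}\|$ term of $\|b_a(k)\|$ carries $\sqrt m/\lambda$ instead of $1/(m\lambda^3)$. Those stated factors come only from the paper's assumed $\frac{2}{\sqrt m\lambda}$ inverse bound.
\end{itemize}
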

\begin{proof}
    According to the Lemma 1 in \cite{spall1992multivariate}, the approximation error $b_{12l}(k)$ for $\hat{\nabla}^2_{\theta_l\theta_e}L(\theta_l, \theta_e)$ is :
    \begin{equation*}
    b_{12l}(k) \leq \frac{C_{L}p^2(k)}{6}\{[m^3-(m-1)^3]\alpha_l^2+(m-1)^3\alpha_l\alpha_0^3\},
\end{equation*}
where $b_{12l}(k)$ represent the $l$-th term of the bias $b_{12l}(k)$ at $k$-th iteration.\\
\begin{equation*}
        \|b_{12}(k)\|
        \leq \frac{C_{L}p^2(k)\sqrt{m}}{6}\{[m^3-(m-1)^3]\alpha_l^2+(m-1)^3\alpha_l\alpha_0^3\}.
    \end{equation*}
Analogously, the $b_{22}(k)$, $b_1(k)$ and $b_2(k)$ are also bounded.\\

With the $\hat{\nabla}^2_{\theta_e\theta_e}L(\theta_l(k), \theta_e(k))$ and $\hat{\nabla}_{\theta_e}f(\theta_l(k),\theta_e(k))$, we estimate the $[\nabla^2_{\theta_e\theta_e}L(\theta_l(k), \theta_e(k))]^{-1}\nabla_{\theta_e}f(\theta_l(k),\theta_e(k))$ through conjugate gradient.\\
\begin{equation*}
        \|\hat{\nabla}^2_{\theta_e\theta_e}L(\theta_l(k), \theta_e(k))\|\\
        =\|\nabla^2_{\theta_e\theta_e}L(\theta_l(k), \theta_e(k))+b_{22}(k)\|
        \geq \|\lambda I + b_{22}(k)\|.\\
\end{equation*}
Then we can tune the parameters of  $b_{22}(k)$ such as $\Delta, \alpha_0$ and $\alpha_l$ to let $\|[\hat{\nabla}^2_{\theta_e\theta_e}L(\theta_l, \theta_e)]^{-1}\| \leq \frac{2}{\sqrt{m}\lambda}$ and make sure the $\hat{\nabla}^2_{\theta_e\theta_e}L(\theta_l, \theta_e) \geq \frac{\lambda}{2}I$ 
\begin{equation*}
    \begin{aligned}
        &\|b(k)\|,\\
        &=\|E[[\hat{\nabla}^2_{\theta_e\theta_e}L(\theta_l(k), \theta_e(k))]^{-1}\hat{\nabla}_{\theta_e}f(\theta_l(k),\theta_e(k))\\
        &- [\nabla^2_{\theta_e\theta_e}L(\theta_l(k), \theta_e(k))]^{-1}\nabla_{\theta_e}f(\theta_l(k),\theta_e(k))]\|,\\
        &\overset{(iv)}{\leq} \|E[\hat{\nabla}^2_{\theta_e\theta_e}L(\theta_l(k), \theta_e(k))]^{-1}(\hat{\nabla}^2_{\theta_e\theta_e}L(\theta_l(k), \theta_e(k)) 
        -\nabla^2_{\theta_e\theta_e}L(\theta_l(k), \theta_e(k)))\\
        &[\nabla^2_{\theta_e\theta_e}L(\theta_l(k), \theta_e(k))]^{-1}]\|
        \|\nabla_{\theta_e}f(\theta_l(k),\theta_e(k))\| 
        + \| \hat{\nabla}^2_{\theta_e\theta_e}L(\theta_l(k), \theta_e(k))]^{-1} b_2(k)\|,\\
        &\overset{(v)}{\leq}\|[\hat{\nabla}^2_{\theta_e\theta_e}L(\theta_l(k), \theta_e(k))]^{-1}\|\|\hat{\nabla}^2_{\theta_e\theta_e}L(\theta_l(k), \theta_e(k))
        -\nabla^2_{\theta_e\theta_e}L(\theta_l(k), \theta_e(k))\|\|\\
        &[\nabla^2_{\theta_e\theta_e}L(\theta_l(k), \theta_e(k))]^{-1}\|
        \|\nabla_{\theta_e}f(\theta_l(k),\theta_e(k))\| 
        + \|\hat{\nabla}^2_{\theta_e\theta_e}L(\theta_l(k), \theta_e(k))]^{-1}\|\|b_2{k}\|,\\
        &\overset{(vii)}{\leq} \frac{ 2C_{f_{\theta_e}}\|b_{22}(k)\|+2\sqrt{m}\lambda\|b_2(k)\|}{m\lambda^2},\\
        &\leq \frac{2C_{f_{\theta_e}}(C_{L}p^2(k)+C_{f}p^2(k))}{6\sqrt{m}\lambda^2}\{[m^3-(m-1)^3]\alpha_l^2+(m-1)^3\alpha_l\alpha_0^3\},\\
    \end{aligned}
\end{equation*}
where $(vii)$ uses the result of Lemma (\ref{upperlip}).

    \begin{equation*}
    \begin{aligned}
        &\|b_a(k)\| = E[\|\hat{\nabla} f(\theta_l(k), \theta_e(k)) \|] - \|\nabla f(\theta_l(k), \theta_e(k))\|,\\
        &\overset{(vii)}{=}E[\| \hat{\nabla}_{\theta_l} f(\theta_l(k),\theta_e(k))- \hat{\nabla}^2_{\theta_l\theta_e}L(\theta_l(k), \theta_e(k))[\hat{\nabla}^2_{\theta_e\theta_e}L(\theta_l(k), \theta_e(k))]^{-1}\\
        &\hat{\nabla}_{\theta_e}f(\theta_l(k),\theta_e(k))\|]
        - \|\nabla f(\theta_l(k), \theta_e(k))\|,\\
        &=\| \nabla_{\theta_l} f(\theta_l(k),\theta_e(k))+b_1(k)- (\nabla^2_{\theta_l\theta_e}L(\theta_l(k), \theta_e(k))+b_{12}(k))\\
        &([\nabla^2_{\theta_e\theta_e}L(\theta_l(k), \theta_e(k))]^{-1}
        \nabla_{\theta_e}f(\theta_l(k),\theta_e(k))+b(k))\|
        -\|\nabla f(\theta_l(k), \theta_e(k))\|\\
        &\overset{(v)}{\leq} \|b_1(k)\| + \|\nabla^2_{\theta_l\theta_e}L(\theta_l(k), \theta_e(k))\|\|b(k)\|\\
        &+\|[\nabla^2_{\theta_e\theta_e}L(\theta_l(k), \theta_e(k))]^{-1}\|\|\nabla_{\theta_e}f(\theta_l(k),\theta_e(k))\|\|b_{12}(k)\|
        + \|b_{12}(k)\|\|b(k)\|,\\
        &\overset{(vii)}{\leq} \frac{C_{f}p^2(k)\sqrt{n}}{6}\{[n^3-(n-1)^3]\alpha_l^2+(n-1)^3\alpha_l\alpha_0^3\}\\
        &+ \frac{2C_{L_{\theta_l\theta_e}}C_{L}p^2(k)+ 2C_{L_{\theta_l\theta_e}}\sqrt{m}\lambda C_{f}p^2(k)}{6\sqrt{m}\lambda^2}
        \{[m^3-(m-1)^3]\alpha_l^2+(m-1)^3\alpha_l\alpha_0^3\} \\
        &+\frac{{C_{f_{\theta_e}}}C_{L}p^2(k)}{6m\lambda^3}\{[m^3-(m-1)^3]\alpha_l^2+(m-1)^3\alpha_l\alpha_0^3\}\\
        &+ \frac{2p^4(k)C_{L} (C_{f_{\theta_e}}C_{L} + \sqrt{m}\lambda C_{f}) }{36\lambda^2}\{[m^3-(m-1)^3]\alpha_l^2
        +(m-1)^3\alpha_l\alpha_0^3\}^2,\\
    \end{aligned}
\end{equation*}
where the first $(vii)$ uses the result of Lemma (\ref{lemma2}), and the second $(vii)$ uses the result of Lemma (\ref{upperlip}).
\end{proof}
\subsection{ Proof of Theorem \ref{cr}}
Following the guidance of \cite{ghadimi2018approximation}, we choose an adaptive number of inner-loop iterations $t_k = \lceil \frac{\sqrt[4]{k+1}}{2}\rceil$. This value $\lceil \frac{\sqrt[4]{k+1}}{2}\rceil$ balances the trade-off between computational cost (minimizing inner loop iteration number) and approximation accuracy (reducing the error sufficiently fast), enabling efficient and stable convergence of the bi-level algorithm.
From the Lemma 2.2 of \cite{ghadimi2018approximation}, $\|\nabla f(\theta_l(k), \theta_e(k))-\nabla f(\theta_l(k), \theta_e^*(\theta_l(k)))\| \leq C\|\theta_e^*(\theta_l(k))-\theta_e(k)\|$, $\|\nabla f(\theta_l(k'), \theta_e^*(\theta_l(k')))-\nabla f(\theta_l(k), \theta_e^*(\theta_l(k)))\|\leq L_f\|\theta_l(k')-\theta_l(k)\|$ where $C = L_{f_{\theta_l}}+ \frac{L_{f_{\theta_e}}C_{L_{\theta_l\theta_e}}}{\lambda}+ C_{f_{\theta_e}}[\frac{L_{L_{\theta_l\theta_e}}}{\lambda}+\frac{L_{L_{\theta_e\theta_e}}C_{L_{\theta_l\theta_e}}}{\lambda^2}], L_f = \frac{(\bar{L}_{f_{\theta_e}}+C)C_{L_{\theta_l\theta_e}}}{\lambda}+ L_{f_{\theta_l}}+C_{f_{\theta_e}}[\frac{\bar{L}_{L_{\theta_l\theta_e}}C_{f_{\theta_e}}}{\lambda}+\frac{\bar{L}_{L_{\theta_e\theta_e}}C_{L_{\theta_l\theta_e}}}{\lambda^2}], Q_L = \frac{L_{L_{\theta_e}}}{\lambda}$ denotes the condition number of $L(\theta_l,\theta_e)$, $M =\max_{\theta_l\in \theta_l}\|\theta_e(0)-\theta_e^*(\theta_l)\|$, $D_{\theta_l} = \max_{x,y\in\theta_l}\{\|x-y\|\}$.\\
The proof of Theorem \ref{cr} is as follows:

\begin{proof}
First we compute the variance of $\hat{\nabla} f(\theta_l(k), \theta_e(k))$.\\
\begin{equation*}
    \begin{aligned}
        &\|\hat{\nabla} f(\theta_l(k), \theta_e(k)) \|,\\
        & \overset{(vii)}{=} \| \hat{\nabla}_{\theta_l} f(\theta_l(k),\theta_e(k))- \hat{\nabla}^2_{\theta_l\theta_e}L(\theta_l(k), \theta_e(k))[\hat{\nabla}^2_{\theta_e\theta_e}L(\theta_l(k), \theta_e(k))]^{-1}
        \hat{\nabla}_{\theta_e}f(\theta_l(k),\theta_e(k))\|,\\
        &=\| \nabla_{\theta_l} f(\theta_l(k),\theta_e(k))+b_1(k)- (\nabla^2_{\theta_l\theta_e}L(\theta_l(k), \theta_e(k))+b_{12}(k))\\
        &([\nabla^2_{\theta_e\theta_e}L(\theta_l(k), \theta_e(k))]^{-1}
        \nabla_{\theta_e}f(\theta_l(k),\theta_e(k))
        +b(k))\|,\\
        &\overset{(v)}{\leq} \|\nabla_{\theta_l} f(\theta_l(k),\theta_e(k))\| +\|b_1(k)\| +\|\nabla^2_{\theta_l\theta_e}L(\theta_l(k), \theta_e(k))\|\|[\nabla^2_{\theta_e\theta_e}L(\theta_l(k), \theta_e(k))]^{-1}\|\\
        &\|
        \nabla_{\theta_e}f(\theta_l(k),\theta_e(k))\|
        + \|\nabla^2_{\theta_l\theta_e}L(\theta_l(k), \theta_e(k))\|\|b(k)\| + \|b_1(k)\|\|[\nabla^2_{\theta_e\theta_e}L(\theta_l(k), \theta_e(k))]^{-1}\|\\
        &\|
        \nabla_{\theta_e}f(\theta_l(k),\theta_e(k))\| + \|b_1(k)\|\|b(k)\|,\\
        &\overset{(vii)}{\leq} C_{f_{\theta_l}} + \frac{C_{f}p^2(k)\sqrt{n}}{6}\{[n^3-(n-1)^3]\alpha_l^2+(n-1)^3\alpha_l\alpha_0^3\} \\&+\frac{C_{L_{\theta_l\theta_e}}C_{f_{\theta_e}}}{\sqrt{m}\lambda}+ \frac{2C_{L_{\theta_l\theta_e}}C_{f_{\theta_e}}(C_{L}p^2(k)+C_{f}p^2(k))}{6\sqrt{m}\lambda^2}\{[m^3-(m-1)^3]\alpha_l^2+(m-1)^3\alpha_l\alpha_0^3\}\\
        &+
        \frac{C_{f_{\theta_e}}}{\sqrt{m}\lambda}\frac{C_{f}p^2(k)\sqrt{n}}{6}\{[n^3-(n-1)^3]\alpha_l^2+(n-1)^3\alpha_l\alpha_0^3\} \\
        &+ \frac{2C_{f}p^2(k)\sqrt{n}C_{f_{\theta_e}}(C_{L}p^2(k)+C_{f}p^2(k))}
        {36\sqrt{m}\lambda^2}\\
        &\{[n^3-(n-1)^3]\alpha_l^2+(n-1)^3\alpha_l\alpha_0^3\}\{[m^3-(m-1)^3]\alpha_l^2+(m-1)^3\alpha_l\alpha_0^3\},
    \end{aligned}
\end{equation*}
where the first $(vii)$ uses the result of Lemma (\ref{lemma2}), and the second $(vii)$ uses the result of Lemma (\ref{upperlip}) and Lemma (\ref{bias}).\\
Since $\hat{\nabla} f(\theta_l(k), \theta_e(k))$ is bounded,
\begin{equation*}
    \begin{aligned}
        &Var(\hat{\nabla} f(\theta_l(k), \theta_e(k)) ),\\
        &\leq (C_{f_{\theta_l}} + \frac{C_{f}p^2(k)\sqrt{n}}{6}\{[n^3-(n-1)^3]\alpha_l^2+(n-1)^3\alpha_l\alpha_0^3\} \\&+\frac{C_{L_{\theta_l\theta_e}}C_{f_{\theta_e}}}{\sqrt{m}\lambda}+ \frac{2C_{L_{\theta_l\theta_e}}C_{f_{\theta_e}}(C_{L}p^2(k)+C_{f}p^2(k))}{6\sqrt{m}\lambda^2}\{[m^3-(m-1)^3]\alpha_l^2+(m-1)^3\alpha_l\alpha_0^3\}\\
        &+
        \frac{C_{f_{\theta_e}}}{\sqrt{m}\lambda}\frac{C_{f}p^2(k)\sqrt{n}}{6}\{[n^3-(n-1)^3]\alpha_l^2+(n-1)^3\alpha_l\alpha_0^3\} \\
        &+ \frac{2C_{f}p^2(k)\sqrt{n}C_{f_{\theta_e}}(C_{L}p^2(k)+C_{f}p^2(k))}
        {36\sqrt{m}\lambda^2}\\
        &\{[n^3-(n-1)^3]\alpha_l^2+(n-1)^3\alpha_l\alpha_0^3\}\{[m^3-(m-1)^3]\alpha_l^2+(m-1)^3\alpha_l\alpha_0^3\})^2.
    \end{aligned}
\end{equation*}
Then we need to find the total bias, the total bias $b_t(k)$ is the sum of the bias from approximation and $\nabla f(\theta_l(k), \theta_e(k))-\nabla f(\theta_l(k), \theta_e^*(\theta_l(k)))$\\
\begin{equation*}
    \begin{aligned}
        &\|b_t(k)\|,\\
        &=\|E[\nabla f(\theta_l(k), \theta_e(k))] - \nabla f(\theta_l(k), \theta_e^*(k))\|,\\
        &\overset{(vii)}{\leq}\|b_a(k)\|+ \|r_{li}(\frac{Q_L-1}{Q_L+1})^{t_k} \|\theta_e(0)-\theta_e^*(\theta_l(k))\|,\\
    \end{aligned}
\end{equation*}
where $(vii)$ adds the approximation error $r_{li}(\frac{Q_L-1}{Q_L+1})^{t_k} \|\theta_e(0)-\theta_e^*(\theta_l(k))$ from the lower level. 
Next step is to find the bound for $E[\|\nabla f(\theta_l(k),\theta_e^*(\theta_l(k)))\|^2]$.\\
\begin{equation*}
    \begin{aligned}
        &f(\theta_l(k+1),\theta_e^*(\theta_l(k+1))),\\
        &\overset{(vi)}{\leq} f(\theta_l(k),\theta_e^*(\theta_l(k)))
        + \langle \nabla f(\theta_l(k),\theta_e^*(\theta_l(k))), \theta_l(k+1) -\theta_l(k)\rangle 
        + \frac{L_f}{2} \| \theta_l(k+1) -\theta_l(k)\|^2,\\  
        &=f(\theta_l(k),\theta_e^*(\theta_l(k)))
        -\alpha_k\langle \nabla f(\theta_l(k),\theta_e^*(\theta_l(k))), \hat{\nabla}f(\theta_l(k),\theta_e(k))\rangle
        +\frac{L_f\alpha_k^2}{2} \|\hat{\nabla}f(\theta_l(k),\theta_e(k))\|^2.
    \end{aligned}
\end{equation*}

The expectation of $f(\theta_l(k+1),\theta_e^*(\theta_l(k+1)))$ becomes:
\begin{equation*}
    \begin{aligned}
        &E[f(\theta_l(k+1),\theta_e^*(\theta_l(k+1)))],\\
        &\leq  f(\theta_l(k),\theta_e^*(\theta_l(k)))
        -\alpha_k\langle \nabla f(\theta_l(k),\theta_e^*(\theta_l(k))), \nabla f(\theta_l(k),\theta_e^*(\theta_l(k))) + b_t(k)\rangle\\
        &+\frac{L_f\alpha_k^2}{2} E\|\nabla f(\theta_l(k),\theta_e^*(\theta_l(k)))+\hat{\nabla}f(\theta_l(k),\theta_e(k))
        -\nabla f(\theta_l(k),\theta_e^*(\theta_l(k)))\|^2,\\
        &\leq f(\theta_l(k),\theta_e^*(\theta_l(k)))
        -\alpha_k\langle \nabla f(\theta_l(k),\theta_e^*(\theta_l(k))), \nabla 
        f(\theta_l(k),\theta_e^*(\theta_l(k))) + b_t(k)\rangle\\
        &+\frac{L_f\alpha_k^2}{2} Var(\hat{\nabla}f(\theta_l(k),\theta_e(k))) 
        + \frac{L_f\alpha_k^2}{2} E\|\nabla f(\theta_l(k),\theta_e^*(\theta_l(k)))\|^2 \\
        &+  L_f\alpha_k^2\langle \nabla f(\theta_l(k),\theta_e^*(\theta_l(k))), b_t(k)\rangle,\\
        &=f(\theta_l(k),\theta_e^*(\theta_l(k)))
        -(\alpha_k - \frac{L_f\alpha_k^2}{2}) \|\nabla f(\theta_l(k),\theta_e^*(\theta_l(k)))\|^2\\
        &-(\alpha_k - L_f\alpha_k^2) \langle \nabla f(\theta_l(k),\theta_e^*(\theta_l(k))), b_t(k)\rangle
        +\frac{L_f\alpha_k^2}{2} Var(\hat{\nabla}f(\theta_l(k),\theta_e(k))) + \frac{L_f\alpha_k^2}{2} \|b_t(k)\|^2.\\
    \end{aligned}
\end{equation*}
Choose $\alpha_k \leq \frac{1}{L_f}$ and with the fact $2\langle \nabla f(\theta_l(k),\theta_e^*(\theta_l(k))), b_t(k)\rangle  \leq \|\nabla f(\theta_l(k),\theta_e^*(\theta_l(k)))\|^2 + \|b_t(k)\|^2$.
\begin{equation*}
    \begin{aligned}
        &E[f(\theta_l(k+1),\theta_e^*(\theta_l(k+1)))],\\
        &\leq f(\theta_l(k),\theta_e^*(\theta_l(k)))-\frac{\alpha_k}{2}\|\nabla f(\theta_l(k),\theta_e^*(\theta_l(k)))\|^2\\  
        & + \frac{\alpha_k}{2} \|b_t(k)\|^2+\frac{L_f\alpha_k^2}{2} Var(\hat{\nabla}f(\theta_l(k),\theta_e(k))),\\
    \end{aligned}
\end{equation*}
Rearrange terms,\\
\begin{equation*}
    \begin{aligned}
        &\sum_{k=0}^{K-1} \frac{\alpha_k}{2}E[\|\nabla f(\theta_l(k),\theta_e^*(\theta_l(k)))\|^2],\\
        &\leq f(\theta_l(0),\theta_e^*(\theta_l(0))) -f^*
        + \sum_{k=0}^{K-1} (\frac{\alpha_k}{2}\|b_t(k)\|^2 + \frac{L_f\alpha_k^2}{2} Var(\hat{\nabla}f(\theta_l(k),\theta_e(k)))).\\
    \end{aligned}
\end{equation*}
 For $\|b_t(k)\|^2$, it is the linear combination of $p^4(k),p^6(k),p^8(k),p^2(k)(\frac{Q_L-1}{Q_L+1})^{t_k} + ,p^4(k)(\frac{Q_L-1}{Q_L+1})^{t_k}, (\frac{Q_L-1}{Q_L+1})^{2t_k} $. For $Var(\hat{\nabla}f(\theta_l(k),\theta_e(k))))$, it is the linear combination of $1,p^2(k),p^4(k),p^6(k),p^8(k) $. For simplification, we use $C_{si}>0, i=1,2,\dots$ to represent the constant for all combinations of terms involve $p(k)$, $\alpha_k $ and $t_k $. Then we can continue the calculation as follows: \\
\begin{equation*}
    \begin{aligned}
        &\frac{1}{K}\sum_{k=0}^{K-1}E[\|\nabla f(\theta_l(k),\theta_e^*(\theta_l(k)))\|^2],\\
        &\leq \frac{2}{K\alpha_k}f(\theta_l(0),\theta_e^*(\theta_l(0))) -f^*
        + \frac{1}{K}\sum_{k=0}^{K-1} (\|b_t(k)\|^2 + L_f\alpha_k Var(\hat{\nabla}f(\theta_l(k),\theta_e(k)))),\\
        &\overset{(viii)}{\leq} \frac{ 2(f(\theta_l(0),\theta_e^*(\theta_l(0))) -f^*)}{\alpha_k K}
        +\frac{C_{s1}}{K}\sum_{k=0}^{K-1}p^4(k) + \frac{C_{s2}}{K}\sum_{k=0}^{K-1} p^6(k) +\frac{C_{s3}}{K}\sum_{k=0}^{K-1} p^8(k) \\
        &+\frac{C_{s4}}{K}\sum_{k=0}^{K-1} p^2(k)(\frac{Q_L-1}{Q_L+1})^{t_k} 
        + \frac{C_{s5}}{K}\sum_{k=0}^{K-1} p^4(k)(\frac{Q_L-1}{Q_L+1})^{t_k} 
        + \frac{C_{s6}}{K}\sum_{k=0}^{K-1}(\frac{Q_L-1}{Q_L+1})^{2t_k} \\
        &+ \frac{C_{s7}}{K}\sum_{k=0}^{K-1} \alpha_k + \frac{C_{s8}}{K}\sum_{k=0}^{K-1} \alpha_k p^2(k)+\frac{C_{s9}}{K}\sum_{k=0}^{K-1} \alpha_k p^4(k)
        + \frac{C_{s10}}{K}\sum_{k=0}^{K-1} \alpha_k p^6(k) \\
        &+ \frac{C_{s11}}{K}\sum_{k=0}^{K-1} \alpha_k p^8(k),
    \end{aligned}
\end{equation*}
where $(viii)$ expands each term of $\|b_t(k)\|^2$ and $Var(\hat{\nabla}f(\theta_l(k),\theta_e(k))))$.
Choose $p(k) = \frac{1}{k}$, $\alpha_k = \frac{1}{L_f \sqrt{K}}$, $t_k = \lceil \frac{\sqrt[4]{k+1}}{2}\rceil$. Since $0\leq\frac{Q_L-1}{Q_L+1} < 1$, we can conclude $\sum_{k=0}^{K-1}p^2(k)(\frac{Q_L-1}{Q_L+1})^{t_k} < \sum_{k=0}^{K-1}p^2(k)$ when $\frac{Q_L-1}{Q_L+1} \neq 0$, then the convergence rate is as follows:
\begin{equation*}
        \frac{1}{K} \sum_{k=0}^{K-1}E[\|\nabla f(\theta_l(k),\theta_e^*(\theta_l(k)))\|^2]
        \leq  \frac{C_{s12}}{\sqrt{K}} +  \frac{C_{s13}}{K} + \frac{C_{s14}}{K\sqrt{K}}.
\end{equation*}
As $K\to \infty$, $\frac{1}{K}\sum_{k=0}^{K-1}E[\|\nabla f(\theta_l(k),\theta_e^*(\theta_l(k)))\|^2] \to 0$, which shows that $E[\nabla f(\theta_l(k),\theta_e^*(\theta_l(k)))]$ decreases at the rate of $\mathcal{O}(\frac{1}{\sqrt{K}}+\frac{1}{K}+\frac{1}{K\sqrt{K}})$. 
\end{proof}

\subsection{Proof of Corollary \ref{policycon}}
Define the cumulative reward function of the expert as $J_{e}(\pi) \triangleq E^{\pi}[\sum_{h=0}^{H-1} \gamma^h r_{\theta_e}(s^h,a^h)]$. If $r_{\theta_e}$ is a linear reward function, we have $r_{\theta_e} \triangleq \langle\theta_e, \phi(s,a)\rangle$ where the feature $\phi(s,a) \in \mathbb{R}^{d_{\theta_e}}$ and $d_{\theta_e} $ is the dimension of $\theta_e$. Then the expert's feature expectation is formulated as $\mu_f(\pi) \triangleq E^{\pi}[\sum_{h=0}^{H-1} \gamma^h \phi(s^h,a^h)]$. From Theorem \ref{cr}, we can get $\|\mu_f(\pi_{\theta_e})-\mu_f(\pi_e)\|^2$ decreases in $\mathcal{O}(\frac{1}{\sqrt{K}})$.  
\begin{proof}
\begin{equation*}
    \begin{aligned}
            J_e(\pi_{\theta_e})- J_e(\pi_e) &= \langle\theta_e,\mu_f(\pi_{\theta_e})-\mu_f(\pi_e)\rangle\\
            &\leq
            \max_{\theta_e \in \Theta} \langle\theta_e,\mu_f(\pi_{\theta_e})-\mu_f(\pi_e)\rangle \\
            &\overset{(v)}\leq \max_{\theta_e \in \Theta} \|\theta_e\|\|\mu_f(\pi_{\theta_e})-\mu_f(\pi_e)\|,\\
            &= \|\mu_f(\pi_{\theta_e})-\mu_f(\pi_e)\|,\\
            &\overset{(vii)}\leq \frac{C_{15}}{\sqrt[4]{K}},
    \end{aligned}
\end{equation*}
where $(vii)$ uses the result $\|\mu_f(\pi_{\theta_e})-\mu_f(\pi_e)\|^2$ decreases in $\mathcal{O}(\frac{1}{\sqrt{K}})$ and $C_{15}$ is the constant number which includes all influence factors other than $K$.
\end{proof}

\section{Proof of Theorem \ref{cc}}\label{t2}
The prove of Theorem \ref{cc} is as follows:
According to the Algorithm 1 in \cite{ziebart2008maximum}, we need to compute the state frequency for the $\mu_e(\pi_{\theta_l, \theta_e})$. For each state-action pair, it needs to recursively compute for up to $H$ iterations. As there are $H$ state-action pairs in one demonstration, the computational complexity for the $\mu_e(\pi_{\theta_l, \theta_e})$ is $\mathcal{O}(H^2)$ as we see $H$ as the deterministic factor. Analogously, the computational complexity for $\mu_l(\pi_{\theta_l, \theta_e}), \mu_e(s,a),\mu_e(s),\mu_l(s,a),\mu_l(s),J_{l}(\pi_{\theta_l, \theta_e})$ are $\mathcal{O}(H^2)$. We can see that these factors with computational complexity $\mathcal{O}(H^2)$ are expected to be calculated through backpropagation, and the backpropagation leads to $\mathcal{O}(H^2)$. However, in coding, we can calculate these factors by sampling a finite number of trajectories, and the computational complexity becomes $\mathcal{O}(H)$.  \\

If we directly compute $\nabla_{\theta_l} f(\theta_l(k),\theta_e(k))$, $\nabla^2_{\theta_l\theta_e}L(\theta_l(k), \theta_e(k))$, $\nabla^2_{\theta_e}L(\theta_l(k), \theta_e(k))$, and $\nabla_{\theta_e}f(\theta_l(k),\theta_e(k))$ instead of approximating, use the expression of $\nabla_{\theta_l} f(\theta_l,\theta_e) = E^{\pi_{\theta_l, \theta_e}}[\sum^{H-1}_{h=0} \gamma^h[(\mu_l(s^h,a^h) - \mu_l(s^h))(J_{l}(s^h,a^h))^T ]]+ \mu_l(\pi_{\theta_l, \theta_e})$ as an example, $\mu_l(s)$ is inside an expectation from $h=0$ to $H-1$, we need to sum up  $\mu_l(s)$ for $H$ times. As a result, the computational complexity for $\nabla_{\theta_l} f(\theta_l,\theta_e)$ is $\mathcal{O}(H^2)$. Analogously, $\nabla_{\theta_e} f(\theta_l,\theta_e)$, $\nabla^2_{\theta_l\theta_e}L(\theta_l, \theta_e)$, $\nabla^2_{\theta_e\theta_e}L(\theta_l, \theta_e)$ are all same.\\

For SPSA, the required terms are $f(\theta_l,\theta_e), \nabla_{\theta_e}L(\theta_l,\theta_e),\nabla_{\theta_l}L(\theta_l,\theta_e)$, these terms are summing $H$ bounded values, so their computational complexities are $\mathcal{O}(H)$. As SPSA utilizes $f(\theta_l,\theta_e), \nabla_{\theta_e}L(\theta_l,\theta_e),\nabla_{\theta_l}L(\theta_l,\theta_e)$ to approximate $\nabla_{\theta_l} f(\theta_l,\theta_e),\nabla_{\theta_e} f(\theta_l,\theta_e)$, $\nabla^2_{\theta_l\theta_e}L(\theta_l, \theta_e)$, $\nabla^2_{\theta_e\theta_e}L(\theta_l, \theta_e)$ instead of directly computing. The overall computational complexity of using SPSA matches the computational complexity of  $f(\theta_l,\theta_e), \nabla_{\theta_e}L(\theta_l,\theta_e),\nabla_{\theta_l}L(\theta_l,\theta_e)$. Therefore, the overall computational of SPSA is $\mathcal{O}(H)$. \\

According to the algorithm of SPSA, more policies need to be found compared to directly compute the hypergradient. Use soft q learning \cite{haarnoja2017reinforcement} as an example, RL and MARL algorithms can be considered as sampling a finite number trajectories for each epoch. Therefore the computational cost for each epoch is $\mathcal{O}(H)$ and the overall computational cost is $\mathcal{O}(eH)$ where $e$ is the total number of epochs. As we consider $H$ as the decision factor, the computational cost of the multi-agent RL is $\mathcal{O}(H)$, which matches the computational complexity of SPSA. 

As a result, the computational complexity of SPSA is dominated by $\mathcal{O}(H)$, and directly calculating the hypergradient is dominated by $\mathcal{O}(H^2)$.

\section{Experiment Detail}\label{edetail}
The details of the experiments are shown in this section. All Python3 codes are run on a Windows 10 desktop with 13th Gen Intel(R) Core(TM) i7-13700KF CPU and 32 GB of RAM. For each combination of algorithms and environments, we run 10 times to calculate mean values and standard deviations at each iteration. Then the calculated mean values and standard deviations are plotted as shown in Section \ref{experiments} figures. 

\subsection{MPE}\label{mpesetup}
The state, action, and observation spaces for the adversary and good agents are continuous. For the adversary, it can observe the relative distance to the landmarks and the good agents, therefore the observation of the adversary is $o_a = [p_{l1}- p_{a},p_{l2}- p_{a},p_{g1} - p_{a}, p_{g2} - p_{a}]$ where $p_{l1}$ is the position of the first landmark, $p_{l2}$ is the position of the second landmark, $p_{g1}$ is the position of the first good agent, $p_{g2}$ is the position of the second good agent. For each good agent, it can observe the relative distance to the target landmark, the landmarks, the adversary, and another good agent, therefore the observations for two good agents are $o_{g1} = [p_{tl} - p_{g1},p_{l1}- p_{g1},p_{l2}- p_{g1},p_{a} - p_{g1}, p_{g2} - p_{g1} ]$ and  $o_{g2} = [p_{tl} - p_{g2},p_{l1}- p_{g2},p_{l2}- p_{g2},p_{a} - p_{g2}, p_{g1} - p_{g2} ]$ where $p_{tl}$ is the position of the target landmark. The actions of the adversary and the good agents are the velocities between $0$ and $1$ in four directions (left, right, down, up). Two good agents share the same return, which is rewarded based on the minimum distance of any agent to the target landmark and is penalized based on the distance between the adversary and the target landmark, therefore the reward of good agents is $r_g = -\min(\|p_{tl} - p_{g1}\|_2,\|p_{tl} - p_{g2}\|_2) + \|p_{tl} - p_{a}\|_2$. The reward of the adversary is based on the distance to the target of the adversary, therefore $r_a = -\|p_{ta} - p_{a}\|_2$, where $p_{ta}$ is the position of the adversary's target. In our simulation, we consider observations as states of the MG. 
\subsection{SMAC}\label{smacexp}
In this scenario, we can only access the state, observation, and action information of agents. The enemy is controlled by a built-in game AI. We consider observations as the states of the agents during learning. For each agent, it can observe the following information corresponding to another agent and enemy: relative distance, relative x, relative y, health, shield, and unit type. If an agent or an enemy is under attack, the shield reduces first and health reduces after the shield disappears. There are $7$ possible actions for each agent: move north, move south, move east, move west, attack the enemy, stop, and no-op. At each time step, each agent can know which action among these $7$ possible actions are available and the agent chooses one action from available actions. For example, the attack action is available when the enemy is in the shooting range of the agent. The agent can only choose no-op when its own health is $0$. The agent gains rewards based on the damage dealt to the enemy and if the enemy is defeated. 
\subsection{Human-robot interaction}\label{hriexp}
The state of the robot is its location $s_r = (x_r,y_r) \in \mathbb{R}^2$, the action of the robot includes the horizontal and vertical velocities and defined as $a_r = (v_{rx}, v_{ry}), $ where $ v_{rx} \in [-0.1, 0.1], v_{ry} \in [-0.1, 0.1]$. Similarly, the state of the human is $s_h = (x_h,y_h) \in \mathbb{R}^2 $, the action of the human is $a_h = (v_{hx}, v_{hy}), $ where $ v_{hx} \in [-0.1, 0.1], v_{hy} \in [-0.1, 0.1]$. At each time step $t$, the 
robot chooses the action $a_r(t)$ based on the current joint state $(s_r(t),s_h(t))$ and moves to the next state $x_r(t+1) =x_r(t) + v_{rx}(t), y_r(t+1) = y_r(t) + v_{ry}(t)$. Analogously, the motion dynamics of the human is given by $x_h(t+1) =x_h(t) + v_{hx}(t), y_h(t+1) = y_h(t) + v_{hy}(t)$.  In the experiment, the robot starts from an initial state $s_r(0) \in [-0.25,0.25] \times [-0.4,0.5]$  and aims to reach a circle goal region whose center is at $(0,0.5)$ with radius $0.05$. The human starts from $s_h(0) \in [0.4,0.5] \times [-0.25,0.25]$ and aims to reach a circle goal region whose center is at $(-0.5,0)$ with radius $0.05$. Both the robot and the human are penalized when a collision happens. 
\begin{figure*}[htb]
\centering
\vspace{-4mm}
\subfigure[Robot's cumulative reward]{ 
\label{lr}
\includegraphics[width=4.25cm]{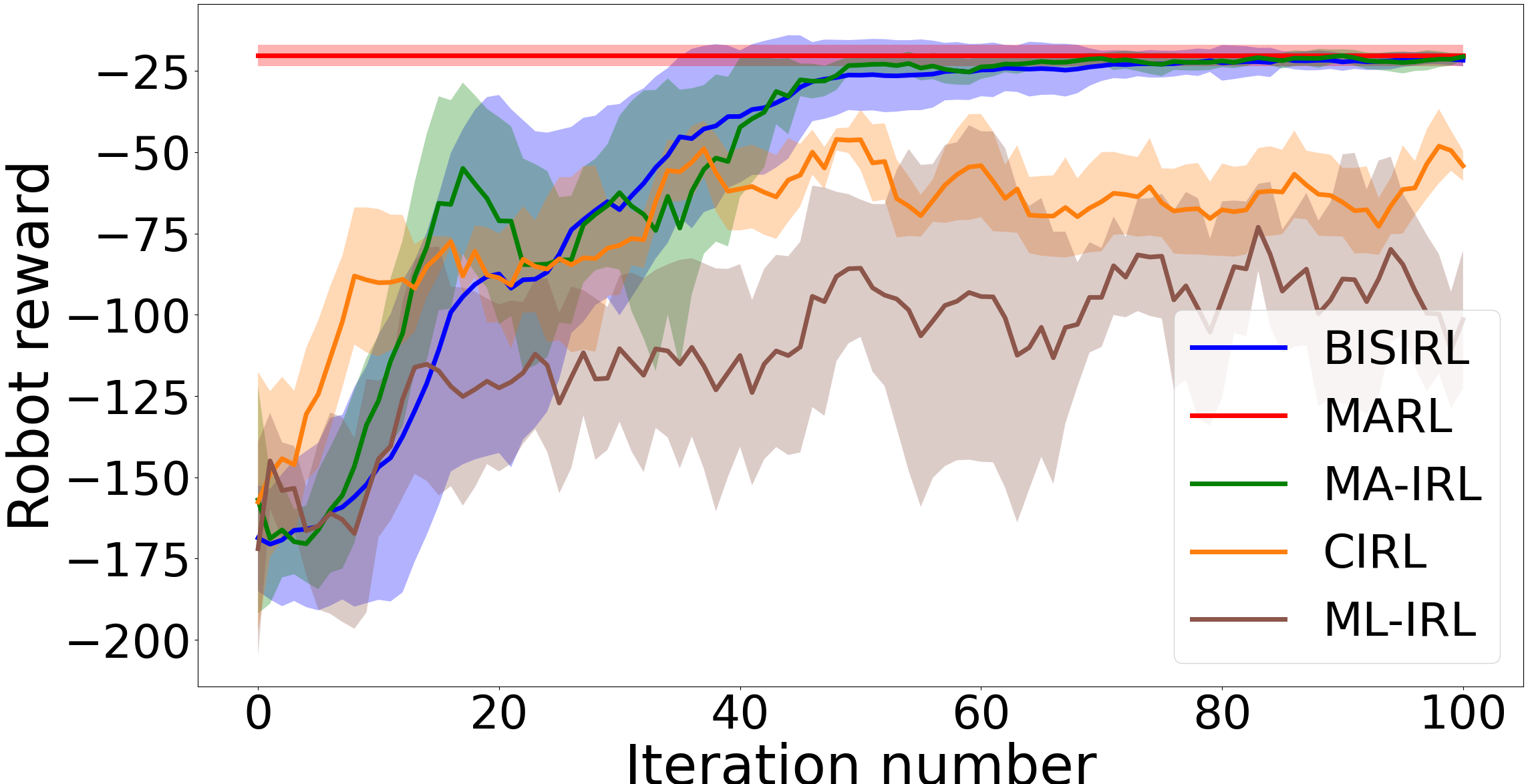}

}
\hspace{-1mm}
\subfigure[Human's cumulative reward]{
\label{er}
\includegraphics[width=4.25cm]{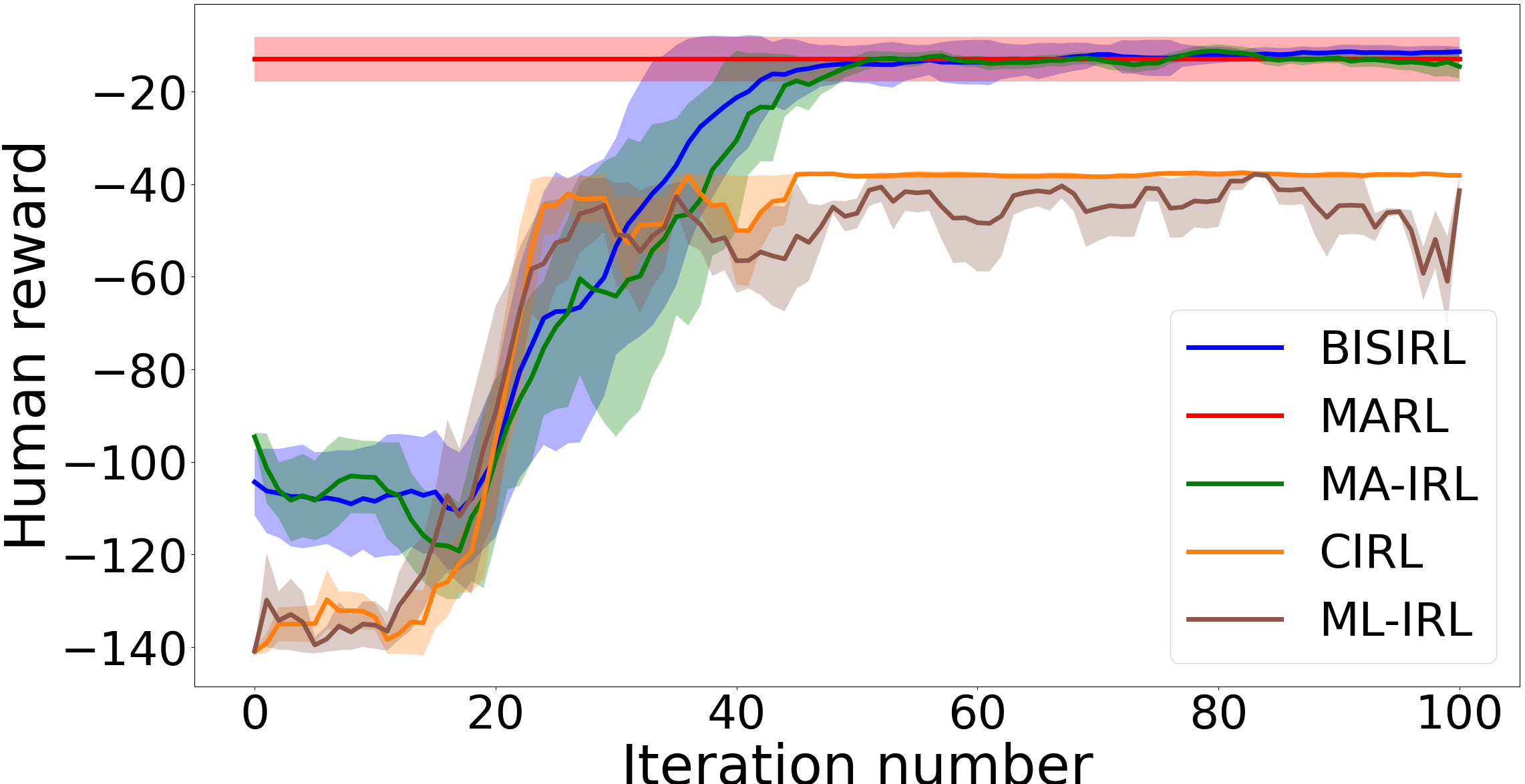}
}
\subfigure[Collision rate]{
\label{collision}
\includegraphics[width=4.25cm]{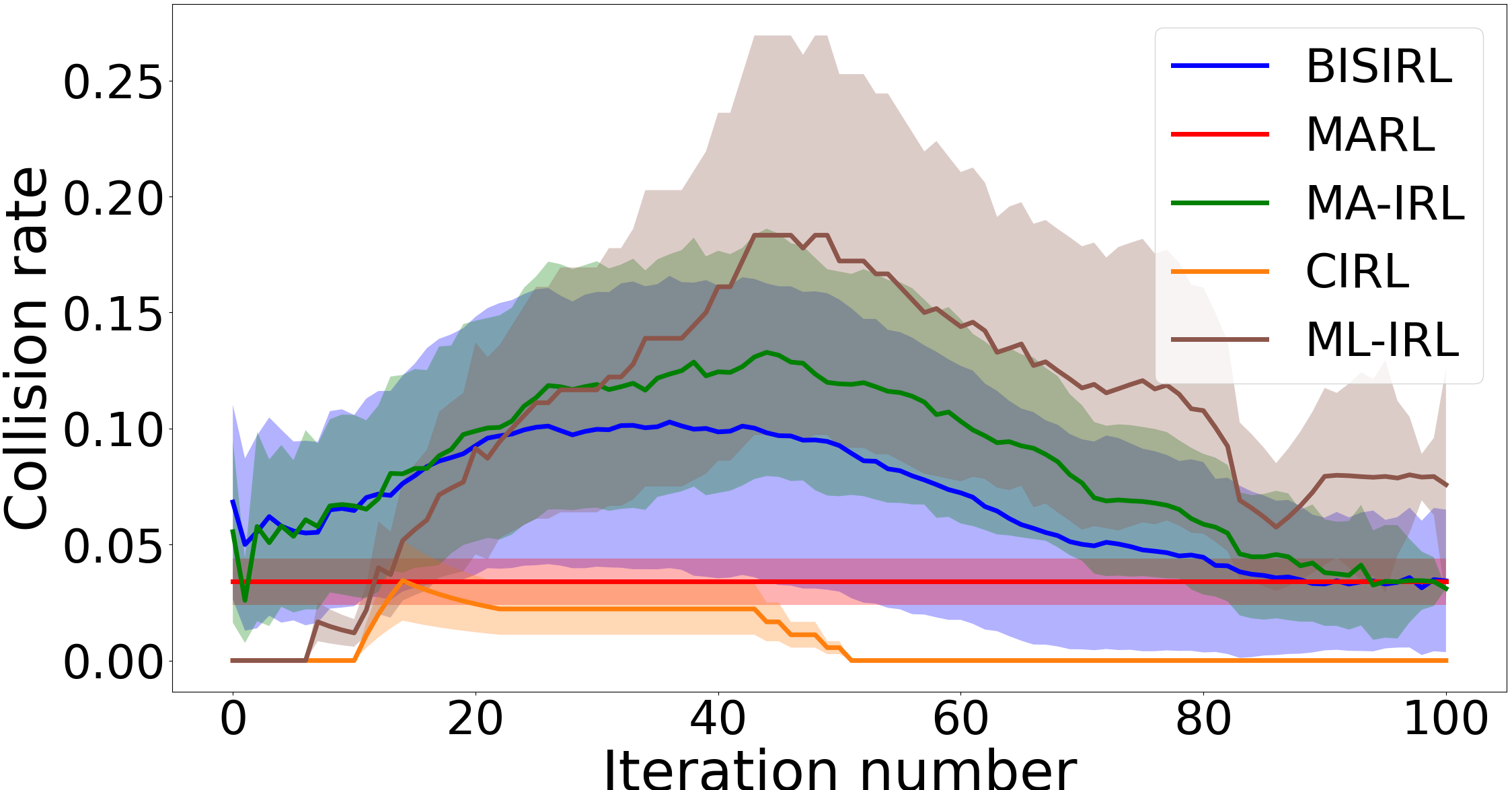}
}
\vspace{-2mm}
\hspace{-1mm}
\subfigure[Goal reaching rate]{
\label{goal}
\includegraphics[width=4.25cm]{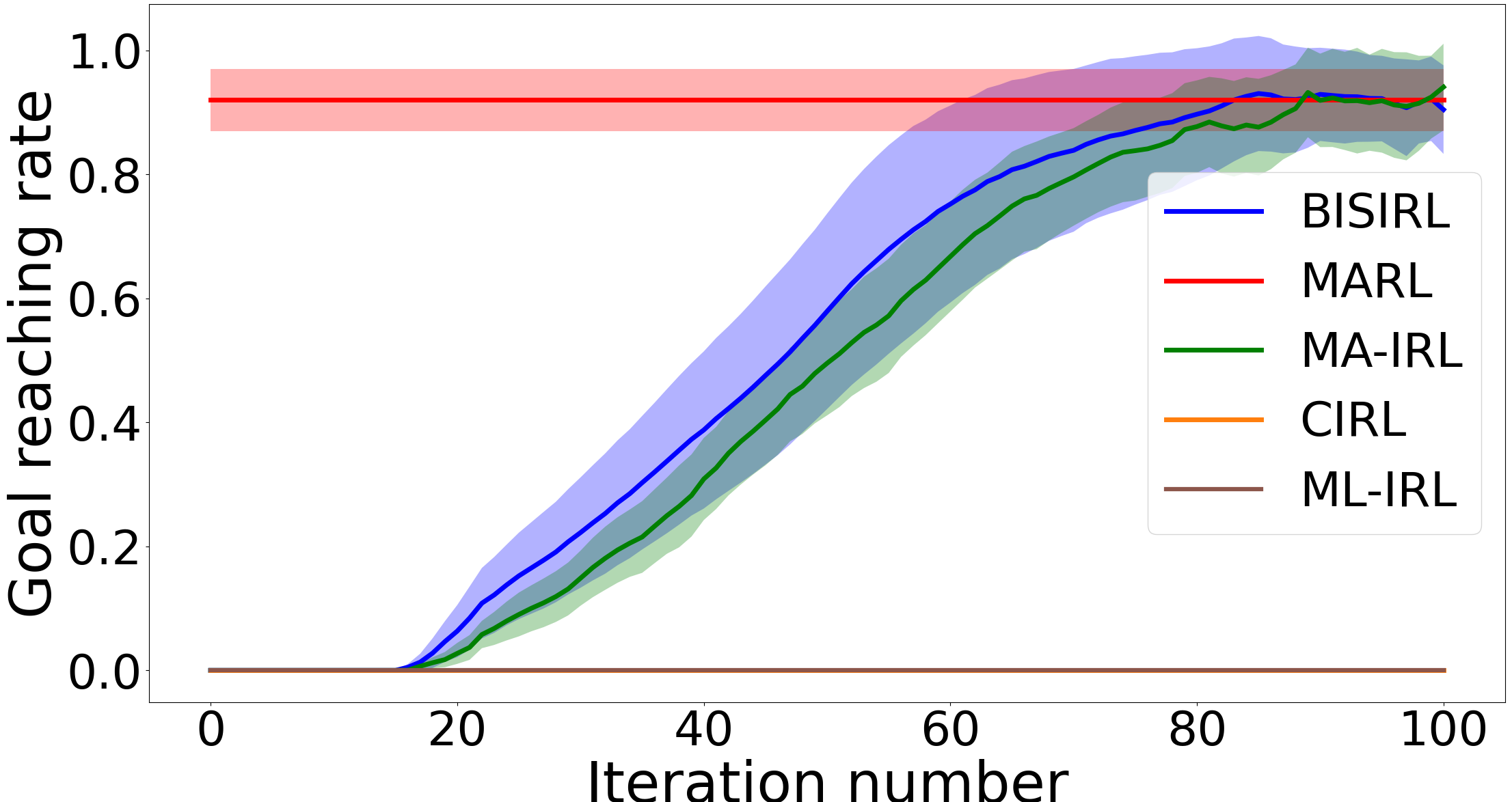}
}
\subfigure[Sample trajectory]{
\label{traj}
\includegraphics[width=4.25cm]{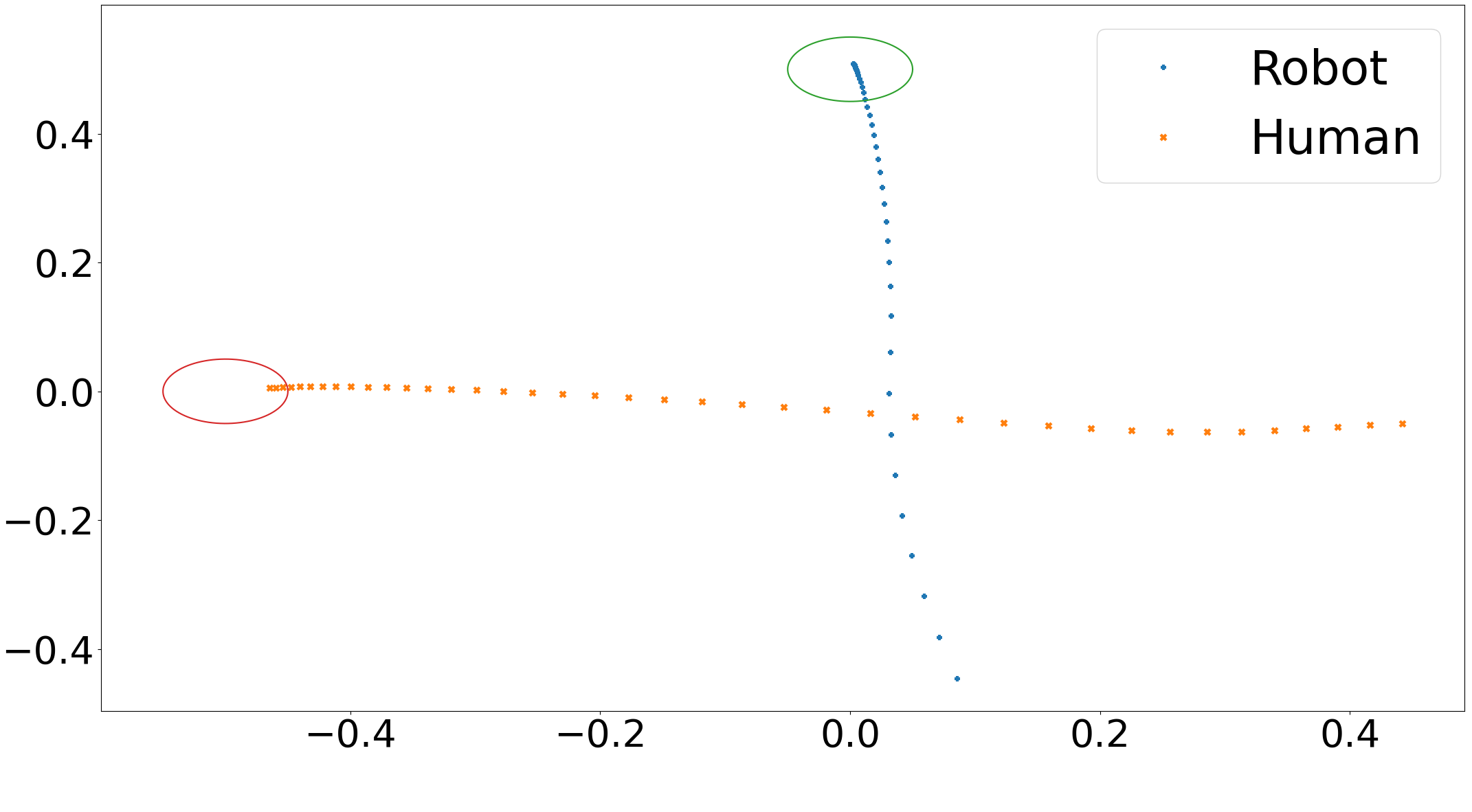}
}
\subfigure[Distance]{
\label{dis}
\includegraphics[width=4.25cm]{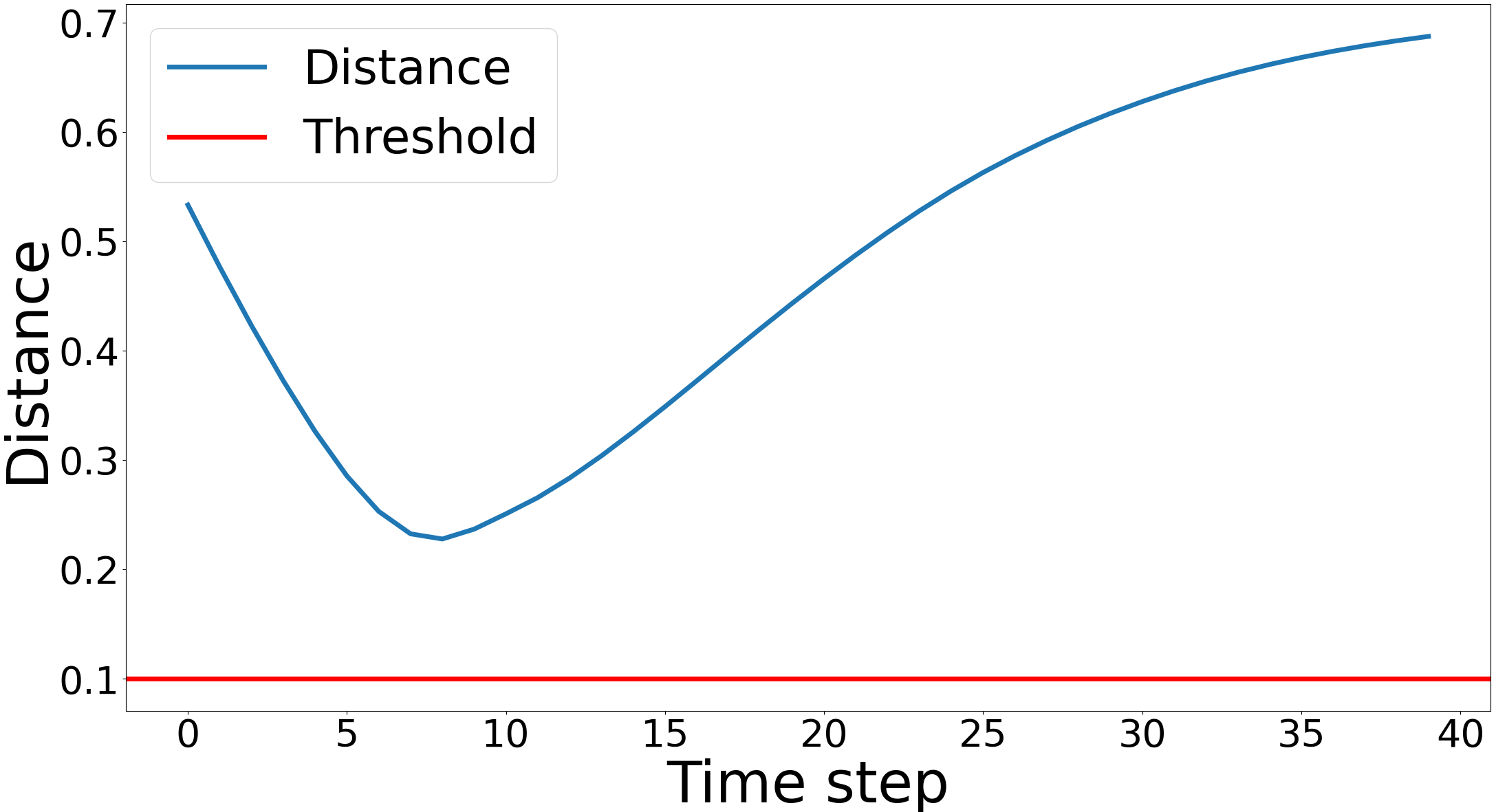}
}
\vspace{-1mm}
\caption{Human–robot interaction simulation results. The cumulative rewards are calculated as described in Figure \ref{mpeexp}. Similarly, the collision and goal-reaching rates are computed based on the policies corresponding to the learned reward functions. A collision is defined to occur when the distance between the human and the robot falls below a threshold. The goal is considered reached only when both the human and the robot reach their respective destinations.}
\label{hri}
\end{figure*}
This experiment examines a HRI scenario at a crosswalk, where both the robot (learner) and the human (expert) aim to reach their respective destinations as efficiently as possible. Additionally, the robot is expected to respect the human’s right-of-way. Both agents operate in continuous state and action spaces: each agent has a $4$-dimensional state and a $2$-dimensional action space.

Figures \ref{lr} and \ref{er} demonstrate that the cumulative rewards of both the robot and the human converge to their respective ground-truth values under BISIRL and MA-IRL, consistent with earlier results. Figure \ref{collision} shows that the collision rates for BISIRL and MA-IRL also converge to the ground-truth value, indicating that the robot successfully learns to respect the human’s right-of-way. Initially, with randomly initialized reward functions, both agents move unpredictably and rarely collide. As the learned reward functions improve, both agents pursue goals more directly, increasing collisions temporarily, followed by a decline as collision-avoidance is prioritized. Figure \ref{goal} further shows that the goal-reaching rates under BISIRL and MA-IRL similarly converge to the ground-truth value, confirming that the learned reward functions enable both agents to reliably reach their destinations. After approximately 100 iterations, reward functions learned by BISIRL and MA-IRL result in policies that achieve high goal-reaching success and low collision frequency. In contrast, CIRL and ML-IRL fail to match the ground-truth performance across all three metrics, highlighting their limitations in capturing the objectives of both agents. Figure \ref{traj} presents a representative trajectory under the learned reward functions of BISIRL, while Figure \ref{dis} plots the inter-agent distance along this trajectory, confirming that both agents reach their goals while maintaining safe separation above the required threshold.

\subsection{Security}\label{csexp}
\begin{figure*}[htbp]
\centering

\subfigure{ 
\label{fig:ag}
\includegraphics[width=4.25cm]{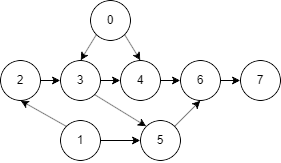}

}
\hspace{-1mm}
\subfigure{ 
\label{dr}
\includegraphics[width=4.25cm]{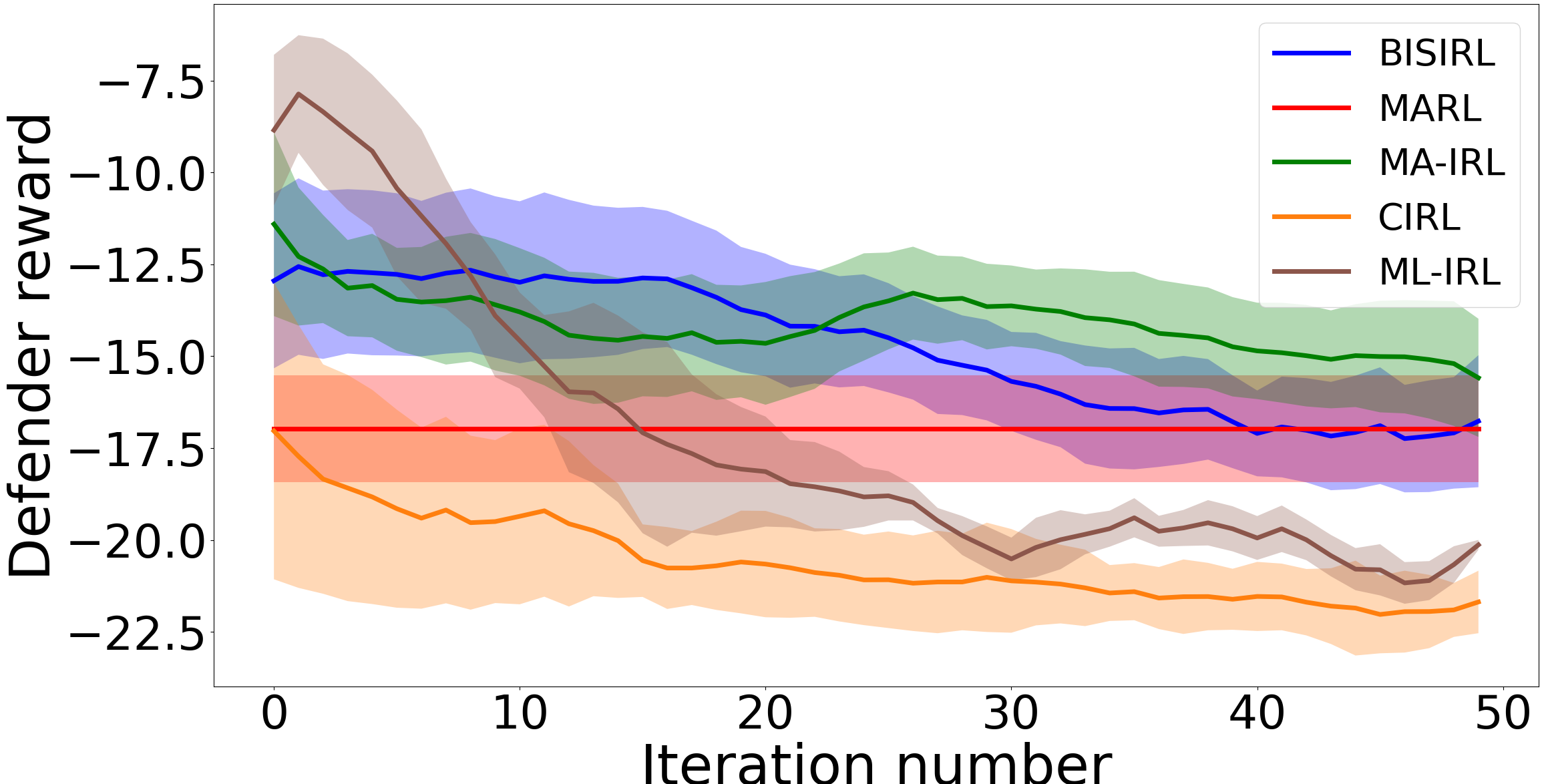}

}
\hspace{-1mm}
\subfigure{ 
\label{ar}
\includegraphics[width=4.25cm]{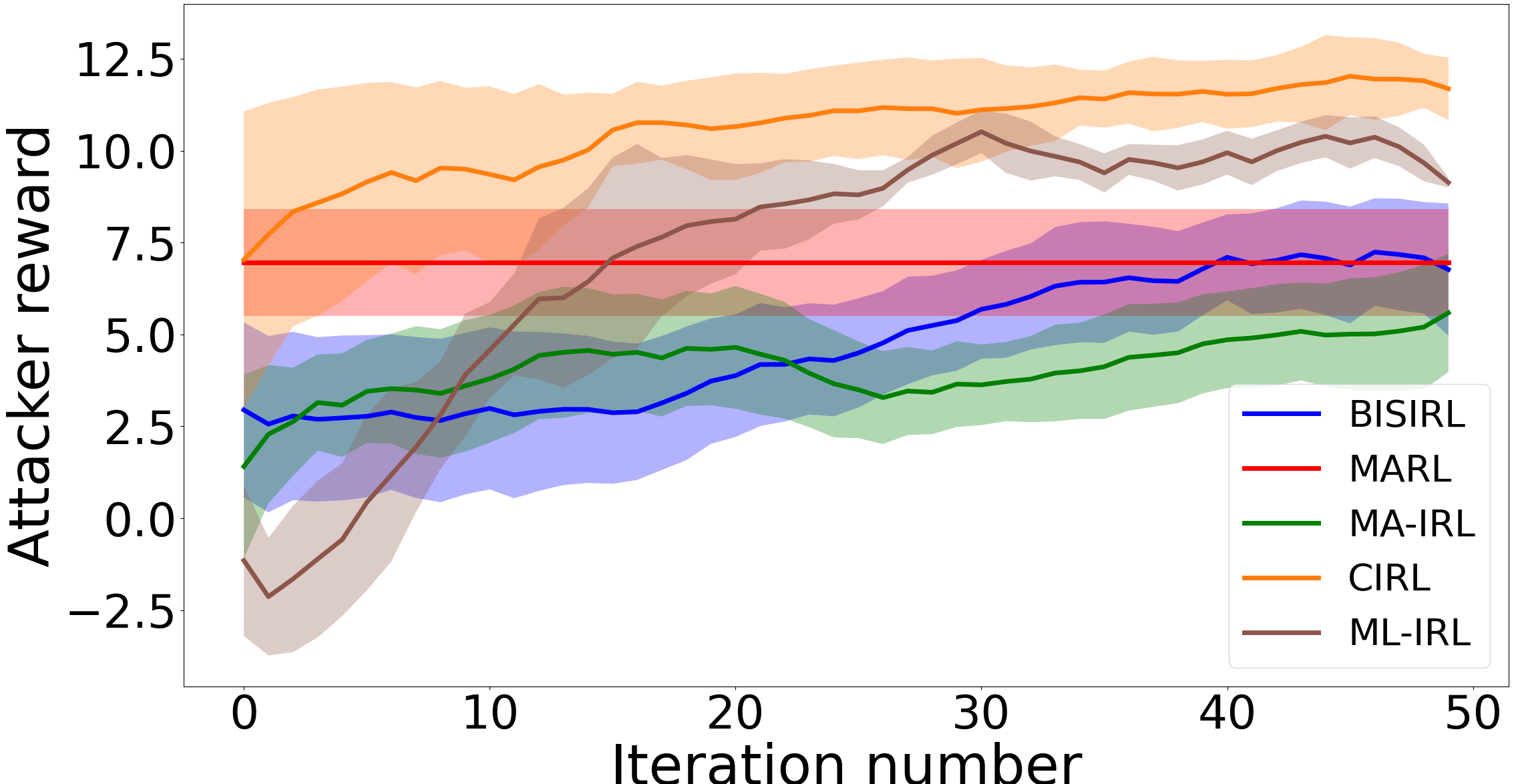}

}
\hspace{-1mm}
\vspace{-3mm}
\caption{Cyber security simulation results. \textbf{Left}: Attack graph. \textbf{Middle}: Defender's reward. \textbf{Right}: Attacker's reward. The cumulative rewards are calculated in the same way described in Figure \ref{mpeexp}.}
 \vspace{-3mm}
\label{csr}

\end{figure*}

We conduct a cybersecurity experiment to evaluate the proposed algorithm. The experimental setup involves a defender (learner) and an attacker (expert) interacting on the attack graph in Figure \ref{csr}. The attacker attempts to compromise nodes in the graph. The attacker's objective is to compromise as many nodes as possible, whereas the defender’s objective is to protect the network by minimizing the number of compromised nodes. Both the learner and the expert have discrete state and action spaces, with cardinalities of $256$ and $8$, respectively. 

Figure \ref{csr} shows that the cumulative rewards of the proposed algorithm converge to those of MARL, consistent with previous experiments.

There are $8$ nodes and $10$ edges.  Each node represents a machine, and each edge represents an exploit between two nodes. The decision-making of the defender and the attacker is modeled as an MG. The state $s \in \{0,1\}^8$, represents the condition of each node where the value $1$ means the current node is compromised by the attacker, and the value $0$ is vice versa. In each action pair, the attacker chooses one edge to attack, and the defender chooses one edge to block. Suppose the attacker chooses to attack the edge $\{i,j\}$. If the node $i$ is already compromised and the defender does not block this edge, there is a probability for node $j$ to be compromised. For other situations, the node $j$ keeps clean. Each edge has a cost for the attacker to utilize and a cost for the defender to block. The attacker receives a reward when it successfully compromises a new node. The net reward of the attacker for each state-action pair is the sum of the reward and the cost. For the defender, the reward is the sum of the opposite of the attacker's reward and th the cost to block edges.

For the security simulation, the attack graph is randomly generated. We use Q-learning to find the policies for the attacker and the defender. During the training process, the attacker and the defender have a $70\%$ possibility to choose between the best action with $60\%$ possibility and the second best action with $40\%$ possibility. Otherwise, the attacker and the defender randomly choose one action from the action space. When we exploit the learned policies, the attacker and the defender choose between the best action with $60\%$ possibility and the second best action with $40\%$ possibility.

\section{Limitation}\label{limitation}
Currently, the proposed framework is based on the fully observed MG. However, in some cases, the learner can not observe all the information about the environment. Therefore, the limitation of the proposed framework is cannot directly apply to the partially observed situation, and we will develop an advanced algorithm to solve this limitation in the future.


\end{document}